\documentclass{article}

\usepackage{arxiv}
\usepackage[numbers]{natbib} 
\usepackage{enumitem}

\usepackage[utf8]{inputenc} % allow utf-8 input
\usepackage[T1]{fontenc}    % use 8-bit T1 fonts
\usepackage{hyperref}       % hyperlinks
\usepackage{url}            % simple URL typesetting
\usepackage{booktabs}       % professional-quality tables
\usepackage{amsfonts}       % blackboard math symbols
\usepackage{nicefrac}       % compact symbols for 1/2, etc.
\usepackage{microtype}      % microtypography
\usepackage{xcolor}         % colors

\usepackage{microtype}
\usepackage{graphicx}
\usepackage{subfig}

\usepackage{amsmath}
\usepackage{amssymb}
\usepackage{mathtools}
\usepackage{amsthm}

\usepackage[capitalize,noabbrev]{cleveref}
\usepackage{algpseudocode}
\usepackage{algorithm}
\usepackage{wrapfig}
\usepackage{cleveref}
\usepackage{minitoc}
\usepackage{subcaption}

\theoremstyle{plain}
\newtheorem{theorem}{Theorem}[section]
\newtheorem{proposition}[theorem]{Proposition}
\newtheorem{lemma}[theorem]{Lemma}

\theoremstyle{definition}

\theoremstyle{remark}

\DeclareMathOperator*{\argmin}{arg\,min}
\DeclareMathOperator*{\argmax}{arg\,max}

\providecommand{\synres}[2]{\ensuremath{#1 \pm #2}}
\providecommand{\synbest}[2]{\ensuremath{\mathbf{#1 \pm #2}}}

\title{Multi-Marginal Inverse Optimal Transport for Contrastive Learning Via Explicit Anchor-Positive-Negative Coupling}

\author{
 Ngoc-Hai Nguyen \\
  Department of Electrical and Computer Engineering\\
  Tufts University\\
  \texttt{Ngoc-Hai.Nguyen@tufts.edu} \\
   \And
 Thuan Nguyen \\
    Department of Engineering, Engineering Technology \\
    East Tennessee State University \\
    \texttt{nguyent11@etsu.edu} \\
  \And
 Prakash Ishwar \\
    Department of Electrical and Computer Engineering \\
    Boston University\\
    \texttt{pi@bu.edu}
\And
 Shuchin Aeron \\
    Department of Electrical and Computer Engineering \\
    Tufts University\\
    \texttt{shuchin@ece.tufts.edu}
 \\
}

\begin{document}
\maketitle
\begin{abstract}
Inverse Optimal Transport (OT) based methods for representation learning learn representations such that the global OT coupling between a pair of data marginals in the representation space, concentrates on the positive pairs. This is in contrast to previous methods that primarily focused on pairwise matching. However, these methods \textit{DO NOT} utilize negative pairs and hence are not truly contrastive in their approach. We show that this leads to issues of dimensional collapse and hence degraded downstream performance. To alleviate this, we develop a novel multi-marginal (MM) inverse OT (IOT) contrastive learning (CL) approach called Neg-MMIOT-CL, which learns representations such that the global multi-marginal OT (MMOT) coupling between a triple of data marginals, with respect to a carefully designed ground-cost between triplets of data points in the representation space, concentrates on the anchor-positive-negative \emph{triplets}. For a latent class model, we empirically show that Neg-MMIOT-CL alleviates dimensional collapse.  Furthermore, for a specific choice of ground cost for all triplets in representation space, we prove that the optimal representation configuration for Neg-MMIOT-CL exhibits equiangular property for within-class and across-class representations, which translates to Neural-Collapse when the representation dimension is larger than the number of classes minus one -- a result that is \emph{previously  established only} for pairwise contrastive learning methods. Finally, we propose Neg-IOT-CL-PushPull, that is a computationally efficient alternative to Neg-MMIOT-CL, alleviating the high cost of computing MMOT plans needed during implementation. We apply these methods on both synthetic and real-world datasets and show significant improvements over existing OT-based contrastive learning methods. 
\end{abstract}

\section{Introduction}
Supervised or Self-supervised contrastive representation learning is based on a simple but effective idea: design a representation map that pulls positive pairs -- (similar examples that \emph{may} have the same latent class) together and pushes negative pairs -- (similar examples that \emph{may} have the different latent classes) apart. There is an inherent tradeoff here that one must solve: align positives while maintaining separation between the negatives. In this context, modern CL methods that use the objectives, such as InfoNCE loss, derived as proxies for maximizing mutual-information between the datum and its representation \cite{tschannen2019mutual}, have proven to be very successful - from SimCLR in vision \cite{chen2020simple} to CLIP in vision-language pretraining \cite{radford2021learning}. 

Recently methods based on Optimal Transport (OT) \cite{peyre2019computational, villani2008optimal} have been used for representation learning  \cite{shi2023understanding, shi2024ot, xue2025protocol, piran2024contrasting}. Instead of the InfoNCE-type losses that measure pairwise alignment, these methods use the cost of OT coupling between the pair of data marginals with respect to a ground-cost derived from similarity in the representation space as a loss, and the learning objective trains the representation map such that the OT coupling concentrates on the given set of positive pairs. Since the ground-cost for the OT depends on the representation map, these methods are collectively referred to as Inverse-OT for CL (IOT-CL). 

%In vision-language learning, these methods have been shown to improve alignment and robustness in difficult settings \cite{shi2024ot}. \sacomment{Need to say why they are Inverse OT.}

% Optimal transport (OT) \cite{peyre2019computational, villani2008optimal}, also offers a natural  framework for measuring and achieving alignment of representations, but with an important distinction compared to methods above. Rather than learning through a loss that measures \textit{pairwise alignment or misalignment}, OT treats \piedit{it}{representation learning} as \piedit{a structured matching problem between (\saedit{}{marginal}) data distributions and}{\textit{global distributional matching} of representations subject to marginal data distribution constraints.}  \picomment{I defer to Shuchin to revise the rest of this para.} It has been recently used for CL; e.g.,  Wasserstein-based discrepancy measures have been used for representation distillation \cite{chen2021wasserstein}, partial OT has been explored for imbalanced multi-view learning \cite{xue2025protocol}, and \sacomment{This needs revision - IOT-CL appears here for the first time and the statemetns below are not coherently put together-  TO DO} Shi et al. \cite{shi2023understanding} showed that InfoNCE itself can be reinterpreted from an inverse optimal transport perspective (IOT-CL). In vision-language learning, OT-based extensions of CLIP have further demonstrated that transport constraints can improve alignment and robustness in more difficult settings \cite{shi2024ot}.

Yet, \emph{these OT-based formulations of CL still contain an important gap} in that they emphasize \textbf{\textit{only}} positive correspondence and \emph{do not fully utilize the power of CL that explicitly utilizes negative pairs as well.}
%
% Current OT-based \piedit{contrastive representation learning}{CL} methods \cite{shi2023understanding,shi2024ot}, emphasize \piedit{}{\textbf{\textit{only}}} positive correspondence \piedit{\textbf{\textit{only}}}{} and do not fully utilize the power of CL that explicitly utilizes negative pairs as well. 
%
As such, these methods fall into the category of \textbf{\textit{non-contrastive}} methods such as BYOL \cite{grill2020bootstrap} and VICReg \cite{bardes2021vicreg}. The utility of the representation is also determined by such separation is achieved between negative pairs, especially in imbalanced or overlapping data regimes. Recent analyses of CL have shown that this balance between attraction and repulsion is closely tied to the emergence of Neural Collapse and simplex Equiangular Tight Frame (ETF) geometry in the learned embeddings \cite{wang2023towards,nguyen2024neural}. Meanwhile, current IOT-CL methods do not naturally lead to such configurations. Instead, in order to enforce uniformity, negative interactions are induced through normalization or auxiliary regularizers  in \cite{shi2023understanding}, in the absence of which, OT-based objectives lead to feature overlap and degraded downstream discrimination as observed in their results and our Dimensional Collapse Figure \ref{fig:dc_synthetic_affine} in Section \ref{sec: exp}.
% \sacomment{We need to also point to our synthetic data experiment} \tncomment{We should cite a paper for this claim or say our simulation proves this in Section 4?}

This paper closes this gap by explicitly incorporating anchor-positive-negative triplet coupling
%incorporating negative sampling directly 
into IOT-based CL. In particular, we make the following \textbf{main contributions}.

\begin{enumerate}[leftmargin=*]
    \item We propose \emph{Negative Multi-Marginal Inverse Optimal Transport (Neg-MMIOT-CL)} in \Cref{sec:neg-mmot} and its low computational-complexity alternative  
    \emph{Negative Inverse Optimal Transport PushPull (Neg-IOT-CL-PushPull)} in \Cref{sec:neg-iot-pushpull}, 
 that explicitly encode negative relations rather than relying on positive matching alone. This key contribution is conceptual as well as algorithmic: instead of adding an extra uniformity loss to promote separation between negative pairs as done in \cite{shi2023understanding}, we make negative separation part of the IOT-CL formulation itself.
 
 % instead of appending a separate uniformity term after the fact, \picomment{the previous phrase is unclear and should be revised} \sacomment{Yeah - this last point needs a citation from IOT-CL work.} we make negative separation part of the transport formulation itself.
    
    \item Our theoretical analysis (\Cref{sec: analysis}) shows that this modification is not merely heuristic. 
    When ground truth triplets comprise anchor and positive samples in the same latent class and the negative sample in a different class, the latent classes are balanced, the representation features are unconstrained, the cost of coupling a data triplet $(i,j,k)$ is defined by an affine decreasing function of the difference between inner products between representations of $(i,j)$ and $(i,k)$ pairs, and the dimension of the representation space is at least the number of classes minus one, we prove that the global minimizer of Neg-MMIOT-CL exhibits the now hallmark geometry of CL \cite{graf2021dissecting, jiang2024hard, kini2023symmetric}: within-class collapse together with a simplex ETF arrangement of class means. We also prove that for costs that are defined by non-affine decreasing functions, the Neural Collapse configuration is a stationary point for the spherical gradient flow. These results provide a principled bridge between OT and the representation geometry traditionally associated with contrastive objectives \cite{nguyen2024neural, wang2023towards}.
      
%    Under a discrete balanced setting and a suitable ground cost for the coupling, 
    %\picomment{shouldn't we mention the particular loss function also since we haven't shown it for InfoNCE?} 
 %   \sacomment{Yes, and also need to put in the result on ETF as a stationary point.} 
 %   we prove that the global minimizer of Neg-MMIOT-CL exhibits the hallmark geometry of CL: within-class collapse together with a simplex ETF arrangement of class means. These results provide a principled bridge between OT and the representation geometry traditionally associated with contrastive objectives \cite{nguyen2024neural, wang2023towards}.

    \item In \Cref{sec: exp}, we show that the empirical results are consistent with the theoretical picture. On synthetic Gaussian mixture data, our methods recover Neural Collapse and prevent Dimensional Collapse much more effectively than IOT-CL and non-contrastive baselines such as BYOL \cite{grill2020bootstrap} and VICReg \cite{bardes2021vicreg}. On supervised and unsupervised frameworks in vision and vision-language datasets, both Neg-MMIOT-CL and Neg-IOT-CL-PushPull are consistently competitive and often outperform conventional IOT-CL.
\end{enumerate}
\noindent\textbf{Related work:} While we have adequately covered highly related work in the exposition thus far, a broader related literature survey and further differences from closely related works are detailed in Appendix~\ref{app: related_work} 
\section{Enhancing IOT-CL via negative repulsion mechanism} 
% \sacomment{This seems a little misalinged with version 3 - need to check.}

\noindent \textbf{Notation:} For a set $\mathcal{A}$, let $|\mathcal{A}|$ denote its cardinality and $\mathbf{1}(\mathcal{A})$ its indicator. For $N\in\mathbb{N}$, write $[N]:=\{1,\ldots,N\}$. Let $\Delta([N]^L)$ be the simplex of PMFs on $[N]^L$, and let $\mathbf{u}_N$ denote the uniform PMF on $[N]$. Given marginals $\nu_1,\ldots,\nu_L\in\Delta([N])$, define $\Pi(\nu_1,\ldots,\nu_L)$ as the set of couplings $P\in\Delta([N]^L)$ with these marginals. We write $[N]^L_{\neq}$ for the set of tuples $(i_1,\ldots,i_L)\in[N]^L$ with distinct entries. We view each $P\in\Delta([N]^L)$ as a nonnegative order-$L$ tensor, i.e., $P\in\mathbb{R}_+^{[N]^L}$. For an \textbf{\textit{extended real-valued}} order-$L$ tensor $C$ and $P\in\Delta([N]^L)$, define $\langle C,P\rangle := \sum_{\boldsymbol{\ell}\in\operatorname{supp}(P)} P_{\boldsymbol{\ell}} C_{\boldsymbol{\ell}}$. 
%
%\sacomment{Need to define what $\boldsymbol{i}$ means.}
%
This is finite iff $P_{\boldsymbol{\ell}}=0$ whenever $C_{\boldsymbol{\ell}}=+\infty$. For PMFs $P,Q$ on the same set, define $\mathrm{KL}(P\|Q):=\sum_{\boldsymbol{i}\in\operatorname{supp}(P)}P_{\boldsymbol{\ell}}\log(P_{\boldsymbol{\ell}}/Q_{\boldsymbol{l}})$, and the \textbf{\textit{unnormalized}} entropy of $P$ by $H(P):=-\sum_{\boldsymbol{\ell}\in\operatorname{supp}(P)}P_{\boldsymbol{\ell}}(\log P_{\boldsymbol{\ell}}-1)$.

\paragraph{Problem Setting:} 
Let $\mathcal{D} = \{x_i\}_{i=1}^N$ be a dataset with $x_i\in\mathcal{X}$. 
%
% \saedit{In CL, for any $x_i$ referred to as an anchor, one selects a positive sample $x_j$, and a negative sample $x_k$ from the data set and aims to learn a $d$-dimensional representation of these samples such that positive pairs are closely aligned and negative pairs are misalinged. Specifically,}{} 
Let 
$\mathcal{Z}^\theta := \{z_i = f^\theta(x_i) \in \mathbb{S}^{d-1}, i = 1, \ldots, N\}$,
be the set of embeddings produced by an encoder $f^\theta$ in a representation space, which in this work we restrict to be the unit hypersphere, $\mathbb{S}^{d-1}$, i.e., for all $i$, $\|z_i\| = 1$. We first recall IOT-CL \cite{shi2023understanding} and highlight its limitations.

\subsection{IOT-CL: motivation and review}
% \sacomment{It will be good if we can also write this with $\mathcal{A}$ as the set of positive samples that are given. This will make it in line with the next section.}
The IOT-CL method proposed in \citep{shi2023understanding} defines a $N \times N$, non-negative, extended real-valued cost matrix
$C^\theta$, 
e.g., $C^\theta_{ij}=1-\langle f^\theta(x_i),f^\theta(x_j) \rangle$, for all distinct $i,j$ in $[N]$, and  $C^\theta_{ii} = +\infty$ for all $i \in [N]$, and proposes to learn the representation map $f^{\theta^*}$ via 
\begin{align} 
\theta^*_{\text{IOT-CL}} = \arg\min_{\theta}  \mathrm{KL} \big(\tilde{P}\,\|\,P^{C^\theta}\big) %\label{eq:clot_final} 
\text{ s.t. } P^{C^\theta} = \argmin_{P\in \Pi(\mathbf{u}_N, \mathbf{u}_N)} \left[\langle C^\theta,P\rangle + \varepsilon\,\mathrm{KL}(P||\mathbf{u}_N \otimes \mathbf{u}_N )\right],
%\nonumber 
\label{eq:clot_final}
\end{align}
where $\varepsilon >0$ and $\tilde{P}$ is a ground-truth coupling PMF that is uniform over the set of distinct positive (similar) pairs in the dataset: $\forall i, j, \in [N]$,
$$
\tilde{P}_{ij} = 
\begin{cases}
\frac{1}{\gamma}, & \text{if $(x_i,x_j)$ form a positive pair and } i\neq j,\\
0, & \text{otherwise,}
\end{cases}
$$
with $\gamma$ equal to the total number of positive pairs in the dataset. 
Note that since $C^{\theta}_{ii} = +\infty$ for all $i\in[N]$, we must have $P_{ii} = 0$ if $\langle C , P \rangle$ is finite.
Since the marginals are uniform, the constraint set of $P^{C^\theta}$ is equivalent to $\argmin_{P \in \Pi(\mathbf{u}_N, \mathbf{u}_N)} \big[\langle C^\theta,P\rangle + \varepsilon \sum_{(i,j) \in  \mathrm{support}(P)} P_{ij}\log P_{ij}\big] $ or 
equivalently to 
$\argmin_{P \in \Pi(\mathbf{u}_N, \mathbf{u}_N)} \big[\langle C^\theta,P\rangle - \varepsilon H(P)]$, which makes the entropy-regularization explicit. 
Thus the overall aim is to learn a mapping $f^{\theta}$ that best aligns the ground-truth coupling PMF $\tilde{P}$ with the coupling PMF $P^{C^\theta}$ obtained by solving an entropy-regularized OT problem with a cost capturing alignment of positive pairs in the representation space. 

\textbf{\textit{It is evident that $\theta^*_{\text{IOT-CL}}$ in (\ref{eq:clot_final}) only accounts for positive relations through $\tilde{P}$ and ignores the contrastive component}}, i.e. negative samples, that are the basis of contrast and has been shown to be very useful \cite{jiang2024hard, robinsoncontrastive}. Aligning positive pairs only is not enough for general downstream tasks, as it can lead to a degenerate solution of representations. Even though the positive target is required to satisfy uniform marginal constraints, a transport plan can place all its mass on positive pairs while satisfying both marginals. In the low entropy limit ($\epsilon \downarrow 0$), no negative mass is forced at all. Entropic regularization creates some indirect spillover, but it neither labels which pairs are negatives nor requires a large anchor-negative separation. Consequently, many very different cross-class geometries can yield essentially the same positive-matching objective. For example, consider a setting with ten classes in which the encoder collapses all samples within each class to a single point. Suppose that the representations of eight classes are clustered very close together, while the remaining two classes are located far away. The OT objective can still correctly match all positive pairs according to the class labels, even though the resulting representation geometry is clearly undesirable: most classes remain poorly separated, which can impair subsequent training and downstream discrimination. Thus, correct positive matching alone does not guarantee effective negative repulsion or a well-structured class geometry. Prior work also does not support the claim that mass-conservation constraints alone induce sufficiently strong negative repulsion. In fact, \cite{shi2023understanding} suggests the opposite: the marginal constraints must be relaxed to recover an InfoNCE-like objective.

This observation motivates us to extend the IOT-CL formulation via \textbf{\textit{explicitly incorporating negative sampling through Multi-Marginal OT (MMOT)}}, simultaneously pushing negative samples apart while pulling positive samples closer.

\subsection{Neg-MMIOT-CL:  Multi-Marginal Inverse OT for CL with negative samples} \label{sec:neg-mmot}
Here we are given a set of \textit{distinct} (anchor, positive, negative) triplets from the dataset. Let
$$
\mathcal{A} = \{(i,j,k) \in [N]^3 \text{ distinct}: \text{ $x_i$ is an anchor, $x_j$ is a positive sample and $x_k$ a negative sample}\}
$$ 
denote the set of ``admissible'' triplets with (anchor, positive) interpreted as a positive pair and (anchor, negative) as a negative pair. We define the ground-truth coupling PMF $\tilde{P}$ to be uniform over the set of admissible triplets in the dataset, i.e., $\forall i,j,k \in [N]$,
$$\tilde{P}_{ijk} = 
\begin{cases}
\frac{1}{|\mathcal{A}|}, & \text{if } (i,j, k) \in \mathcal{A},\\
0, & \text{otherwise.}
\end{cases}
$$
We then define the third-order coupling cost tensor $C^\theta$ for \textbf{\textit{all}} triplets $(i,j,k) \in [N]^3$ as follows:
\begin{equation}
\label{eq: c-theta}
C^\theta_{ijk} := 
\begin{cases}
\psi \left( \frac{\langle f^\theta(x_i), f^\theta(x_j) \rangle - \langle f^\theta(x_i), f^\theta(x_k) \rangle}{\tau}\right)  & \text{if $i,j,k$ are all distinct}, \\
+\infty & \text{otherwise}, 
\end{cases}
\end{equation}
where $\psi(t)$ is a real-valued cost-shaping function which is strictly decreasing, and $\tau > 0$ is the temperature parameter. Recall the notation $z = f^\theta(x)$.
Representations that increase $\langle z_i, z_j \rangle$ or decrease $\langle z_i, z_k \rangle$ will have a smaller coupling cost. We set $C^\theta_{ijk} = +\infty$ when $i,j,k$ are not all distinct to prevent self-coupling, as in IOT-CL. We define the Neg-MMIOT-CL representation mapping as 
\begin{align} 
\theta^*_{\text{Neg-MMIOT-CL}} = \arg\min_{\theta}  \mathrm{KL} \big(\tilde{P}\,\|\,P^{C^\theta}\big)
\text{ s.t } P^{C^\theta} = \argmin_{P\in \Pi(\mathbf{u}_N, \mathbf{u}_N, \mathbf{u}_N)} \left[\langle C^\theta,P\rangle - \varepsilon\,H(P)\right].
\label{eq:neg_mmot_final}
\end{align} 
We note the following differences with IOT-CL.
\begin{enumerate}[leftmargin=*]
\setlength \itemsep{-0pt}
    \item The cost now \emph{contrasts} triplets $x_i, x_j, x_k$. These can all be positives, all negatives, or form a contrastive triplet, i.e. $x_i,x_j$ are positive pairs and $x_i,x_k$ are negative OR $x_i,x_j$ are negative pairs and $x_i,x_k$ are positive pairs. Some special cases of $\psi (t)$ are $\psi(t) = -t$, $\psi(t) = \log(1 + e^{-t})$.
    \item The pulling of positives and pushing of negatives is achieved via the $\mathrm{KL}$ objective that enforces the cost to be low on the support set of positive-negative triplets.
\end{enumerate}
A pseudo-code for solving the optimization problem is given in Algorithm~\ref{alg:unified_ot_cl}.

\subsection{Neg-IOT-CL-PushPull: a computationally scalable alternative to Neg-MMIOT-CL} \label{sec:neg-iot-pushpull}
A limitation of implementing Neg-MMIOT-CL lies in its computational cost: at each iteration, we must solve a transport problem using Sinkhorn over a $3$-dimensional tensor of size $N$, resulting in a complexity of $\mathcal{O}\!\left(N^3 \varepsilon^{-2}\right)$ \cite{piran2024contrasting}. This is quite prohibitive when $N$ is large. To address this issue, we propose a simplified variant of Neg-MMIOT-CL that decouples the matching and repelling mechanisms into two separate components. We refer to this formulation as Neg-IOT-CL-PushPull. In this model, we construct two sets of admissible \textbf{\textit{tuples}} by projecting $\mathcal{A}$ onto its positive- and negative-pair marginals, i.e.,
$$
\mathcal{A}^+=\{(i,j):  (i,j,k)\in\mathcal{A} \text{ for some } k\}, \qquad \mathcal{A}^-=\{(i,k): (i,j,k)\in\mathcal{A}  \text{ for some } j\}.
$$
% and define binary masks for positive and negative costs as $M_+$ and $M_-$, i.e., $\forall i, j, k \in [N]$,  
% $$(M_+)_{ij} = 
% \begin{cases}
% 1, & \text{if } (i,j) \in \mathcal{A}^+,\\
% 0, & \text{otherwise,}
% \end{cases}
% \qquad
% (M_-)_{ik} = 
% \begin{cases}
% 1, & \text{if } (i,k) \in \mathcal{A}^-,\\
% 0, & \text{otherwise.}
% \end{cases}$$
We define positive-pair and negative-pair ground-truth couplings, denoted by $\tilde{P}_{+}$ and $\tilde{P}_{-}$ respectively, as follows: $\forall i, j, k \in [N]$,
$$
(\tilde P_{+})_{ij}=\frac{\mathbf{1}\!\left((i,j)\in\mathcal{A}^+\right)}{|\mathcal{A^+}|},
\qquad
(\tilde P_{-})_{ik}=\frac{\mathbf{1}\!\left((i,k)\in\mathcal{A}^-\right)}{|\mathcal{A^-}|}.$$
Thus, $\tilde P_+$ places uniform mass over all positives and $\tilde P_-$ over all negatives globally.

\paragraph{Positive coupling.}
For positive pairs, we adopt the standard entropic OT formulation
\begin{equation}
P^{C^\theta}_{+}
  = 
  \argmin_{P\in\Pi(\mathbf{u}_N,\mathbf{u}_N)}
  \Big[\langle C^\theta\!,\,P\rangle
  -\varepsilon_{+}\,H(P)\Big],
\label{eq:pos_ot}
\end{equation}
where $\forall i, j \in [N]$, 
\[
C^\theta_{ij} := 
\begin{cases}
\psi\left(\frac{\langle f^\theta(x_i), f^\theta(x_j) \rangle}{\tau}\right) & \text{if } i \neq j, \\
+\infty & \text{otherwise},
\end{cases}
\]
with $\psi$ and $\tau$ as defined in the Neg-MMIOT-CL method. 
\paragraph{Negative coupling (anti-transport).}
To incorporate repulsion, we define an analogous coupling
that \emph{maximizes} the transport cost over negative pairs:
\begin{equation}
P^{C^\theta}_{-}
  =
  \argmax_{P\in\Pi(\mathbf{u}_N,\mathbf{u}_N)}
  \Big[\langle C^\theta\!,\,P\rangle
  +\varepsilon_{-}\,H(P)\Big].
  \label{eq:neg_ot}
\end{equation}
This formulation can be viewed as an \emph{anti-transport} problem that encourages large pairwise distances between features with negative relationship while maintaining entropy-controlled smoothness. We define the Neg-IOT-CL-PushPull representation mapping as
\begin{align}
\label{eq:iot_pushpull_final} 
\!\!\!\!\!\theta^*_{\text{Neg-IOT-CL-PushPull}} = \argmin_{\theta} \Big[ \underbrace{\mathrm{KL}\!\big(\tilde{P}_{+}\,\|\,P^{C^\theta}_{+}\big)}_{\text{align positives}}
+\underbrace{\mathrm{KL}\!\big(\tilde{P}_{-}\,\|\,P^{C^\theta}_{-}\big)}_{\text{repel negatives}}
\Big]
\text{ s.t }
P^{C^\theta}_{+} \text{ satisfies } (\ref{eq:pos_ot}), 
P^{C^\theta}_{-} \text{ satisfies } (\ref{eq:neg_ot}).
\end{align}
A pseudo-code for solving the optimization problem is given in Algorithm \ref{alg:unified_ot_cl}. 

\textbf{Remarks.} Although $\widetilde{P}_{-}$is uniform over admissible negatives, the learned anti-transport plan is similarity dependent $P_{-}^{C^\theta}=\operatorname{diag}(\mathbf{u}_N) \exp \left(C^\theta / \epsilon_{-}\right) \operatorname{diag}(\mathbf{u}_N)$ on the allowed support. Furthermore, $
\frac{\partial}{\partial C^\theta} \mathrm{KL}\left(\widetilde{P}_{-} \| P_{-}^{C^\theta}\right)=\frac{P_{-}^{C^\theta}-\widetilde{P}_{-}}{\epsilon_{-}}$. Thus the updates are not equal across negatives. A high-similarity, insufficiently separated negative has a small cost because $\psi$ is decreasing, receives too little anti-transport mass, and therefore has $P_{-}^{C^\theta}<\widetilde{P}_{-}$. Gradient descent increases its cost, which decreases its similarity. Conversely, already well-separated negatives receive a smaller or oppositely signed correction. Neg-IOT-CL-PushPull therefore performs adaptive, residual based hard-negative correction; the uniform target enforces coverage of all negatives rather than equal repulsive force. 

\subsection{Algorithms for Neg-MMIOT-CL and Neg-IOT-CL-PushPull}
Algorithm~\ref{alg:unified_ot_cl} describes the detailed steps of our implementation for solving Neg-MMIOT-CL and Neg-IOT-CL-PushPull in one mini-batch iteration. The \textsc{SinkhornUniform} function refers to the standard Sinkhorn algorithm \cite{peyre2019computational} and \textsc{MM\text{-}Sinkhorn} is from \cite{piran2024contrasting}. Our work can be applied in both \textbf{supervised CL (SCL)} and \textbf{unsupervised CL (UCL)} settings. In SCL, a positive sample has the same label as the anchor whereas a negative sample's label differs from that of the anchor. In UCL, labels are unavailable, so different augmented views of the same instance are treated as positives and other instances as negatives. In terms of the admissible set $\mathcal A$, if $\{y_i\}$ denote the sample labels in SCL, then  $\mathcal A_{\mathrm{SCL}} =
\{(i,j,k) \text{ distinct}: \; y_i=y_j,\; y_i\neq y_k\}$. In UCL, if $r(i)$ denotes the original instance from which augmented view $i$ was generated, then $\mathcal A_{\mathrm{UCL}} =
\{(i,j,k) \text{ distinct}:\; r(i)=r(j),\; r(i)\neq r(k)\}$. For clarity, we provide pseudo code for SCL in Appendix \ref{app:pseudo_for_specific_setting}. The same 
formulation extends to UCL. 

% \subsection{Related work} While we have adequately covered highly related work in the exposition thus far, a broader related literature survey in provided in Appendix \ref{sec: related_work} and  further differences from closely related works are detailed in Appendix \ref{app:further_discussion} 

% \paragraph{Related Work:} See Appendix \ref{app:further_discussion} for  discussion of differences from closely related work. \sacomment{I think it may be better to compress it and say it here?}

\begin{algorithm}[htbp!]
%\caption{Unified OT-based contrastive training: Neg-MMIOT-CL and Neg-IOT-CL-PushPull per batch iteration}
\caption{Neg-MMIOT-CL and Neg-IOT-CL-PushPull batch training}
\label{alg:unified_ot_cl}
% \scriptsize
\textbf{Input:}
batch of training samples $\{x_1,\ldots,x_B\}$,
admissible relations $\mathcal{A}$, 
encoder $f^\theta$, 
cost-shaping function $\psi$, 
regularization parameters $\varepsilon,\varepsilon_+,\varepsilon_-$, 
temperature $\tau$, 
Sinkhorn iteration $n_{it}$,
optimizer $\mathsf{Opt}$.

\begin{algorithmic}
\State For $i=1,\ldots,B$, compute normalized embeddings
    $z_i \gets \frac{f^\theta(x_i)}{\|f^\theta(x_i)\|}$
\end{algorithmic}
\vspace{2ex}
\noindent
\begin{minipage}[t]{0.45\linewidth}
\raggedright
\textbf{Neg-MMIOT-CL} 
\vspace{0.25em}

%\begin{enumerate}
    %\item 
    \noindent 1.~Define target triplet coupling: $\forall i,j,k \in [B]$, 
    %supported on $\mathcal{A}$:
    \[
     \tilde P_{ijk}
    =
    \frac{
     \mathbf{1}\!\left((i,j,k)\in\mathcal{A}\right)
    }{
      |\mathcal{A}|
    }.
    \]

    %\item 
    \noindent 2.~Construct triplet cost tensor: $\forall i,j,k \in [B]$, 
    \[
    \!\!\!C^\theta_{ijk}
    =
    \begin{cases}
    \psi\Big(\frac{\langle z_i,  z_j\rangle - \langle z_i,  z_k\rangle}{\tau}\Big), &  i,j,k, \text{ distinct}\\[0.4em]
    +\infty, & \text{otherwise.}
    \end{cases}
    \]

    %\item 
    \noindent 3.~Solve entropic MMOT problem (\ref{eq:neg_mmot_final}):
    \[
        P^{C^\theta} \;\gets\; \textsc{MM\text{-}Sinkhorn}(C^\theta,\varepsilon, n_{it})
        \]
    %\item 
    \noindent 4.~Compute loss:
    \[
    \mathcal{L}(\theta)
    =
    \mathrm{KL}\!\left(\tilde P\,\|\,P^{C^\theta}\right)
    \]
%\end{enumerate}
\end{minipage}
%\hfill 
\hspace{2ex}
%\qquad
\begin{minipage}[t]{0.48\linewidth}
\raggedright
\textbf{Neg-IOT-CL-PushPull} 
\vspace{0.25em}

%\begin{enumerate}
    %\item 
    \noindent 1.~Project $\mathcal{A}$ onto positive- and negative-pairs sets:
    \[
    \mathcal{A}^+
    =
    \{(i,j): \exists k \text{ s.t. } (i,j,k)\in\mathcal{A}\},
    \]
    \[
    \mathcal{A}^-
    =
    \{(i,k): \exists j \text{ s.t. } (i,j,k)\in\mathcal{A}\}.
    \]
    %\item 
    % \noindent 2.~Compute binary masks: $\forall i, j, k \in [B]$,
    % $$ 
    % (M_+)_{ij} \!=\! \mathbf{1}\!\left((i,j)\in\mathcal{A}^+\right)\quad
    % (M_-)_{ik} \!=\! \mathbf{1}\!\left((i,k)\in\mathcal{A}^-\right)
    % $$
    %\item 
    \noindent 2.~Define target pairwise couplings
    \[
    (\tilde P_{+})_{ij}
    =
    \frac{
      \mathbf{1}\!\left((i,j)\in\mathcal{A}^+\right)
    }{
      |\mathcal{A^+}|
    },
    %\]
    %\[
    (\tilde P_{-})_{ik}
    =
    \frac{
      \mathbf{1}\!\left((i,k)\in\mathcal{A}^-\right)
    }{
      |\mathcal{A^-}|
    }.
    \]
    %\item 
    \noindent 3.~Build the pairwise cost matrix: $\forall i, j \in [B]$,
    \[
    C^\theta_{ij}
    =
    \begin{cases}
    \psi\!\big( \frac{\langle z_i,z_j \rangle}{\tau} \big), & i\neq j,\\[0.4em]
    +\infty & \text{otherwise}.
    \end{cases}
    \]

    %\item 
    \noindent 4.~Solve positive and negative OT problems (\ref{eq:pos_ot}), (\ref{eq:neg_ot}): 
    %\ref{eq:iot_pushpull_final}
    % $$ 
    % (M_+)_{ij} \!=\! \mathbf{1}\!\left((i,j)\in\mathcal{A}^+\right)\quad
    % (M_-)_{ik} \!=\! \mathbf{1}\!\left((i,k)\in\mathcal{A}^-\right)
    % $$
    $$P^{C^\theta}_{+} \gets \textsc{SinkhornUniform}\!\left(C^\theta,\ \varepsilon_{+},\ n_{it}\right)$$
    $$P^{C^\theta}_{-} \gets \textsc{SinkhornUniform}\!\left(-C^\theta,\ \varepsilon_{-},\ n_{it}\right)$$
    %\item 
    \noindent 5.~Compute loss:
    \[
    \mathcal{L}(\theta)
    =
    \mathrm{KL}\!\left(\tilde P_{+}\,\|\,P^{C^\theta}_{+}\right)
    +
    \mathrm{KL}\!\left(\tilde P_{-}\,\|\,P^{C^\theta}_{-}\right),
    \]
%\end{enumerate}
\end{minipage}
\vspace{0.2em}
\begin{algorithmic}
    \State Compute gradient $\nabla_\theta \mathcal{L}(\theta)$
    \State Update parameters $\theta \gets \mathsf{Opt}\!\left(\theta,\nabla_\theta \mathcal{L}(\theta)\right)$
\end{algorithmic}
\end{algorithm}

\section{Optimal Representation Geometry Analysis} \label{sec: analysis}
% \sacomment{Need to wrap this section in $\leq 2$ pages.}

In this section, we analyze the geometric structure of the optimal embedding features induced by the Neg-MMIOT-CL method. We consider the \textit{Unconstrained Features Model} (UFM) with admissible triplets comprising the anchor and positive samples in the same class and the negative sample in a different class. When the \textit{classes are balanced}, {\textit{$\psi$ is decreasing and affine}} and the \textit{dimension of the representation space is at least the number of classes minus one}, then Neural Collapse (NC) will occur and the simplex Equiangular Tight Frame (ETF) is the optimal geometry of the embedding features. Due to space constraints, proofs of all results are presented in Appendix~\ref{appendix:proof}.

\noindent\textbf{A1: Balanced Class-Structure.} 
We consider a dataset of $N=bK$ samples that consists of $K$ disjoint classes, each having $b$ samples. The admissible set is defined by:
\[
\mathcal{A}=\{(i,j,k): i,j,k \textit{ are distinct with } \ y_i=y_j,\ y_i\neq y_k\},
\]where $y_i,y_j,y_k$ denote the labels of samples $i,j,k$, respectively. 
For each anchor $i\in [N]$, there are $(b-1)$ choices for positive samples (choices for index $j$) from the same class as $i$ and $b(K-1)$ choices for negative samples (choices for index $k$) from other classes. Thus, $|\mathcal{A}| = N(b-1)b(K-1)=N(b-1)(N-b)$ and 
$$
\tilde{P}_{i j k}:= \begin{cases}\frac{1}{|\mathcal{A}|} = \frac{1}{N(b-1)(N-b)}, &\text{if } (i, j, k) \in \mathcal{A}, \\ 0, & \text {otherwise. }\end{cases}
$$
Note that all three marginal PMFs of this ground-truth coupling $\tilde{P}$ are equal to $\mathbf{u}_N$, i.e., $\tilde{P} \in \Pi(\mathbf{u}_N,\mathbf{u}_N,\mathbf{u}_N)$. Also note that $\text{support}(\tilde{P}) \subseteq [N]^3_{\neq}$ since if $(i,j,k) \in \mathcal{A}$, then $i, j, k$ must all be distinct.

\noindent\textbf{A2: Unconstrained Features Model (UFM).} In practice, the family of representation functions $\{f^\theta\}$ is constrained to be representable by a neural network having a specific architecture. For theoretical analysis we assume that the representation capacity of the neural network is sufficiently large to  approximate an arbitrary mapping $f^{\theta}$ to any desired accuracy. This assumption is used in several previous works, e.g.,  \cite{jiang2024hard,nguyen2024neural, shi2023understanding, shi2024ot} 
%\tncomment{Thank you, Prakash, I guess, for adding references here to me}
which treat a neural network’s final-layer feature vectors, denoted by $z = f^{\theta}(x)$, as the free optimization variables instead of the network weights $\theta$. This \textbf{\textit{decouples}} feature geometry from the complex nonlinear encoder weight parameterization. 

% \noindent\textbf{A3: $C$ has the specific form given by (\ref{eq: c-theta}) with $\psi$ a strictly decreasing affine function.}
% This assumption allows us to decouple the optimization process into an optimization over the cost tensor $C$ followed by mapping the optimal $C$ to optimal representation features $\{z_i,i \in [N]\}$. 
    
\noindent\textbf{A3: Dimension of the representation space \textit{vs.}~number of classes.} We also assume that the dimension of the representation space is at least the number of classes minus one, \textit{i.e.}, $d \geq K-1$. This condition is sufficient to ensure that the optimal representations form an ETF in the representation space. This assumption has been used in prior works \cite{papyan2020prevalence,jiang2024hard,nguyen2024neural}.

The following lemma proves that the KL-divergence of the ground-truth coupling from the entropic-OT coupling is a convex function of the entropic-OT cost tensor. 
\begin{lemma}\label{lem:convexity_to_cost}
Let $\mathcal{C}$ be the convex set of all third-order tensors with $C_{ijk}$ finite for all $(i,j,k) \in [N]^3_{\neq}$ and 
$C_{ijk} = +\infty$ otherwise. Let $P^C := \argmin_{P\in\Pi(\mathbf{u}_N,\mathbf{u}_N,\mathbf{u}_N)} \Big[\langle C, P \rangle - \epsilon H(P)\Big]$ (the minimizer exists and is unique). Then for $\tilde{P}$  in Assumption~A1, $g(C) := \mathrm{KL}(\tilde{P}||P^C)$ with $C \in \mathcal{C}$, is a convex function of $C$. 
\end{lemma}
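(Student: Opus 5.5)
Convexity follows from the dual (Sinkhorn) structure of entropic multi-marginal OT: $\mathrm{KL}(\tilde P\|P^C)$ splits into a linear term in $C$ plus a constant plus the negative of a concave function (the entropic OT value). First I establish existence and uniqueness of $P^C$. On the compact, convex set $\{P\in\Pi(\mathbf{u}_N,\mathbf{u}_N,\mathbf{u}_N): \operatorname{supp}(P)\subseteq[N]^3_{\neq}\}$, which is nonempty because $\tilde P$ belongs to it by A1, the objective $\langle C,P\rangle-\epsilon H(P)$ is continuous and strictly convex. This yields a unique minimizer. Standard Lagrangian duality for entropic MMOT then gives the Gibbs form
\[
P^C_{ijk}=\exp\!\Big(\tfrac{\alpha_i+\beta_j+\gamma_k-C_{ijk}}{\epsilon}\Big)\ \text{on }[N]^3_{\neq},\qquad P^C_{ijk}=0\ \text{otherwise},
\]
with dual potentials $(\alpha,\beta,\gamma)=(\alpha,\beta,\gamma)(C)$. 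Strict positivity of $P^C$ on $[N]^3_{\neq}$ follows from the entropy's infinite slope at $0$. In particular $\operatorname{supp}(\tilde P)\subseteq\operatorname{supp}(P^C)$, so $g(C)$ is finite.

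Next I substitute this form into the KL divergence. Since all three marginals of $\tilde P$ equal $\mathbf{u}_N$,
\[
g(C)=\sum_{\mathcal A}\tilde P\log\tilde P+\tfrac{1}{\epsilon}\langle C,\tilde P\rangle-\tfrac{1}{\epsilon}\cdot\tfrac{1}{N}\sum_i(\alpha_i+\beta_i+\gamma_i).
\]
The first term is constant and the second is linear in $C$. The key identity is that the third term equals $-\tfrac1\epsilon$ times the dual value
\[
D(C):=\max_{\alpha,\beta,\gamma}\Big[\tfrac1N\sum_i(\alpha_i+\beta_i+\gamma_i)-\epsilon\sum_{[N]^3_{\neq}}\exp\!\Big(\tfrac{\alpha_i+\beta_j+\gamma_k-C_{ijk}}{\epsilon}\Big)\Big]+\epsilon.
\]
At the optimum the exponential sum equals the total mass $1$, so the $\epsilon$ terms cancel.

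For each fixed $(\alpha,\beta,\gamma)$, the bracket is a concave function of $C$, because $-\exp$ of an affine function of $C$ is concave. A pointwise supremum of concave functions is not concave in general, so I will instead argue with the primal form. By strong duality, $D(C)=\min_P[\langle C,P\rangle-\epsilon H(P)]$. For each fixed $P$ this expression is affine in $C$, and a pointwise infimum of affine functions is concave. Hence $D$ is concave on $\mathcal C$, and
\[
g(C)=\text{const}+\tfrac1\epsilon\langle C,\tilde P\rangle-\tfrac1\epsilon D(C)
\]
is convex.

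The main obstacle is the middle step. I need to verify strong duality and the identity $\tfrac1N\sum_i(\alpha_i+\beta_i+\gamma_i)=D(C)$ while handling the $+\infty$ entries and the potential non-uniqueness of the potentials, which are defined only up to additive shifts whose sum is zero. I would sidestep the potentials by working directly with $P^C$. Taking logs of the Gibbs form gives $\epsilon\log P^C=\alpha\oplus\beta\oplus\gamma-C$ on $[N]^3_{\neq}$. Pairing this with $\tilde P$ and separately with $P^C$ itself, both of which have uniform marginals, shows that $\langle\alpha\oplus\beta\oplus\gamma,\tilde P\rangle=\langle\alpha\oplus\beta\oplus\gamma,P^C\rangle$. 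This gives
\[
\epsilon\,\mathrm{KL}(\tilde P\|P^C)=\epsilon\sum\tilde P\log\tilde P+\langle C,\tilde P\rangle-\big(\langle C,P^C\rangle-\epsilon H(P^C)\big)-\epsilon,
\]
which is shift-invariant and uses only the primal value. The argument closes with the infimum-of-affine-functions concavity noted above. Convexity of $\mathcal C$ itself is immediate, since it is a product of copies of $\mathbb R$ over $[N]^3_{\neq}$.
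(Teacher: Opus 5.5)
Your argument is correct and, once you drop the dual detour, it is essentially the paper's proof. Both pair the Gibbs/stationarity condition with $\tilde P$ and with $P^C$, cancel the potential terms using the shared uniform marginals, and write $\varepsilon\,g(C)$ as an affine function of $C$ minus the entropic-OT value $\min_P[\langle C,P\rangle-\varepsilon H(P)]$, which is concave as an infimum of affine functions. Your check that $P^C>0$ on $[N]^3_{\neq}$ (needed for the logarithm in the stationarity condition and for $g(C)<\infty$) is a step the paper leaves implicit, and the constant $\varepsilon$ by which your $D(C)$ exceeds that primal value (with the unnormalized $H$) is harmless for convexity.
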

% \begin{proof}
% (Sketch) For all $P$ with support in $[N]^3_{\neq}$, the minimization objective $f(P) := \langle C, P \rangle - \epsilon H(P)$  is a strictly convex differentiable function of $P$ and $\Pi(\mathbf{u}_N,\mathbf{u}_N,\mathbf{u}_N)$ is 
% convex constraint set completely defined by
% $3N$ linear equality constraints of the marginals. Therefore, $f(P)$ has a unique minimizer $P^C$ characterized by the stationarity of the Lagrangian and the marginal constraints. By a clever analysis of the stationarity condition and the observation that both $P$ and $\tilde{P}$ have the same marginals and are supported within $[N]^3_{\neq}$, we show that $\mathrm{KL}(\tilde{P}||P^C)$ is an affine function of $C$ minus the pointwise minimum of a family of affine functions of $C$. The detailed proof is in Appendix \ref{apd: proof for lem:convexity_to_cost}. 
% \end{proof}
%\vspace{-0.1in}
\noindent \textbf{Remark:} In Lemma~\ref{lem:convexity_to_cost} there are no additional requirements on the form of the cost tensor, such as  $C_{ijk} = \psi \left( \frac{\langle f^\theta(x_i), f^\theta(x_j) \rangle - \langle f^\theta(x_i), f^\theta(x_k) \rangle}{\tau}\right)$, nor on the dimension of the representation map $f^\theta(\cdot)$.

Next, the convexity of the objective function in terms of $C$ together with the balanced-class assumption allows us to show that any optimal $C$ must satisfy an invariance property.

\begin{lemma}\label{lem:achieve_sym}
    Let $\mathcal{C}$, $P^C$, and $g(\cdot)$ be as in Lemma~\ref{lem:convexity_to_cost}. 
    Let $\pi: [N] \rightarrow [N]$ denote a bijection. For any order $3$ tensor $T$, let $T \circ \pi$ be the order $3$ tensor where for all $i, j, k \in [N]$, $(T\circ \pi)_{i j k} := T_{\pi(i) \pi(j) \pi(k)}$. Let
    $
    \Gamma:=\{\pi:\ y_{\pi(i)}=y_{\pi(j)}\iff y_i=y_j\ \forall i,j\}
    $
    denote the group of all permutations of sample indices within each class and permutations of class labels and
   $
    \bar{C} := \tfrac{1}{|\Gamma|} \sum_{\pi \in \Gamma} C\circ \pi.
   $ 
    Then for all $C \in \mathcal{C}$ and all $\pi \in \Gamma$, $\bar{C} \circ \pi = \bar{C}$ and $g(\bar{C}) \leq g(C).$
\end{lemma}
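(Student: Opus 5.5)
The plan has two parts. First, $\bar C$ is invariant under $\Gamma$ by a group-averaging argument. Second, $g$ is invariant under relabelings in $\Gamma$, so convexity from Lemma~\ref{lem:convexity_to_cost} and Jensen's inequality give $g(\bar C)\le g(C)$.

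\textbf{Step 1 (invariance of $\bar C$).} Note first that $\Gamma$ is a group under composition. If $\pi,\sigma$ preserve the relation ``same label'', so does $\sigma\circ\pi$, and so does $\pi^{-1}$, because $\pi$ is a bijection. The action on tensors is compatible with composition: $((C\circ\sigma)\circ\pi)_{ijk}=(C\circ\sigma)_{\pi(i)\pi(j)\pi(k)}=C_{\sigma\pi(i)\,\sigma\pi(j)\,\sigma\pi(k)}=(C\circ(\sigma\circ\pi))_{ijk}$. Hence, for fixed $\pi\in\Gamma$,
\[
\bar C\circ\pi=\tfrac{1}{|\Gamma|}\sum_{\sigma\in\Gamma}C\circ(\sigma\circ\pi)=\bar C,
\]
since $\sigma\mapsto\sigma\circ\pi$ is a bijection of $\Gamma$. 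We also need $\bar C\in\mathcal C$. Every $\pi$ maps distinct triples to distinct triples and non-distinct triples to non-distinct ones. So each $C\circ\pi$ lies in $\mathcal C$, and so does their average, which is finite on $[N]^3_{\neq}$ and equal to $+\infty$ off it.

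\textbf{Step 2 (invariance of $g$).} The claim is $g(C\circ\pi)=g(C)$ for $\pi\in\Gamma$, which rests on two facts.
\begin{enumerate}
\item Equivariance of the entropic plan: $P^{C\circ\pi}=P^C\circ\pi$. The map $P\mapsto P\circ\pi$ is a bijection of $\Pi(\mathbf u_N,\mathbf u_N,\mathbf u_N)$ onto itself, because uniform marginals are preserved under relabeling. It also preserves the objective: $\langle C\circ\pi,P\circ\pi\rangle=\langle C,P\rangle$ and $H(P\circ\pi)=H(P)$. Uniqueness of the minimizer, stated in Lemma~\ref{lem:convexity_to_cost}, then gives the identity.
\item Invariance of the target: $\tilde P\circ\pi=\tilde P$. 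Under A1, $\pi\in\Gamma$ preserves distinctness, equality of labels and inequality of labels, so it maps $\mathcal A$ bijectively onto itself.
\end{enumerate}
KL divergence is invariant under a common relabeling of the index set. Therefore
\[
g(C\circ\pi)=\mathrm{KL}(\tilde P\,\|\,P^C\circ\pi)=\mathrm{KL}(\tilde P\circ\pi^{-1}\,\|\,P^C)=\mathrm{KL}(\tilde P\,\|\,P^C)=g(C).
\]

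\textbf{Step 3 (Jensen).} By Lemma~\ref{lem:convexity_to_cost}, $g$ is convex on the convex set $\mathcal C$. Hence
\[
g(\bar C)\le\tfrac1{|\Gamma|}\sum_{\pi\in\Gamma}g(C\circ\pi)=g(C).
\]
Convexity on $\mathcal C$ must be read as convexity in the finite coordinates indexed by $[N]^3_{\neq}$. The $+\infty$ entries are identical for all tensors in $\mathcal C$, so averaging is well defined.

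The main obstacle is Step 2(1), specifically the bookkeeping of the $+\infty$ entries. We must check that the support restriction $P_{ijk}=0$ off $[N]^3_{\neq}$ is preserved by $\pi$, which holds because $\pi$ is a bijection on indices. We must also check that the $\operatorname{supp}$-based definition of $\langle\cdot,\cdot\rangle$ is permutation invariant. Both are routine reindexing arguments, but they are the only place where the extended-real conventions enter.
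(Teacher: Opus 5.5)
Your proposal is correct and follows essentially the same route as the paper's proof. Both arguments use the following chain: invariance of the uniform marginals and of $\tilde P$ under $\Gamma$; the equivariance $P^{C\circ\pi}=P^C\circ\pi$ from uniqueness of the entropic minimizer; KL invariance, giving $g(C\circ\pi)=g(C)$; Jensen with the convexity from Lemma~\ref{lem:convexity_to_cost}; and a group-reindexing argument for $\bar C\circ\pi=\bar C$. You also check explicitly that $\Gamma$ is a group and that $\bar C\in\mathcal C$. The paper leaves both points implicit, and they do not change the argument.
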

% \begin{proof}
%     Briefly, by the convexity of $g(\cdot)$ proved in Lemma~\ref{lem:convexity_to_cost} and Jensen's inequality, $g(|\Gamma|^{-1} \sum_{\pi \in \Gamma} C\circ \pi) \leq |\Gamma|^{-1} \sum_{\pi \in \Gamma} g(C\circ \pi)$.  Entropy and KL divergence are invariant to permutations. We can also prove that uniform marginals, the admissible set $\mathcal{A}$ and therefore $\tilde{P}$ are preserved under any 
%     $\pi \in \Gamma$ so that $g(C\circ \pi) = g(C)$. The detailed proof can be found in Appendix \ref{apd: proof for lem:achieve_sym}.
% \end{proof}
\noindent\textbf{Remarks:} The implication of Lemma~\ref{lem:achieve_sym} is that in order to minimize $g(C)$ over $\mathcal{C}$, it is sufficient to only consider cost tensors $C \in \mathcal{C}$ that are invariant under the permutations in $\Gamma$. We note that only Assumption~A1 has been utilized in Lemma~\ref{lem:convexity_to_cost} and Lemma~\ref{lem:achieve_sym}.

Next, we use the invariance property of the optimal solution in Lemma \ref{lem:achieve_sym} to show that the optimal representation features must satisfy a condition called \textit{two-distance class-homogeneous configuration}. 
\begin{lemma}\label{lem:achieve_homo}
    %\piedit{Consider set of cost tensors $C$ having the form given by (\ref{eq: c-theta}) with $\psi(t) = -t$. Then any optimal tensor $C^*$ from this set corresponds to optimal representation features that satisfy 
    %\textit{two-distance class-homogeneous configuration}.}{}
    %\picomment{I disagree with the previous sentence. We have not shown that the optimum $C$ of form (2) must have this property. We have only shown it for optimum $C$ of form (2) that are also invariant to permutations in $\Gamma$. I would rephrase the lemma statement as follows: "Optimal cost tensors having form (\ref{eq: c-theta}) that are also invariant to permutations in $\Gamma$ satisfy the two-distance class-homogeneous configuration. Formally, for any..."}   
    Optimal cost tensors having form (\ref{eq: c-theta}) with $\psi$ affine,
    %that are also invariant to permutations in $\Gamma$ 
    satisfy the \textit{two-distance class-homogeneous configuration.}
    Formally, for any optimal representations $z_i^*$ and $z_j^*$ with corresponding labels $y_i$ and $y_j$ that induces $C^*$, then:
    \[
\langle z_i^*,z_j^*\rangle=
\begin{cases}
\alpha, & y_i=y_j,\\
\beta, & y_i\neq y_j
\end{cases}
\]
%\tncomment{Check if it is better to use $z_iz_j^T$ or keep this one}
where $\alpha$ and $\beta$ are two constants, $\alpha,\beta \in [-1,1]$. 
\end{lemma}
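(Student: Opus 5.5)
The plan is to pass from the cost tensor to the Gram matrix of the features, and then reuse the convexity and symmetrization arguments of Lemma~\ref{lem:convexity_to_cost} and Lemma~\ref{lem:achieve_sym} at the level of Gram matrices.

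\textbf{Step 1: the cost is linear in the Gram matrix.} Write $\psi(t)=a-ct$ with $c>0$, and let $G_{ij}=\langle z_i,z_j\rangle$. For distinct $(i,j,k)$,
\[
C_{ijk}=a-\tfrac{c}{\tau}\,(G_{ij}-G_{ik}),
\]
and all other entries are $+\infty$. So $G\mapsto C(G)$ is an affine map into $\mathcal{C}$.

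\textbf{Step 2: convexity in $G$.} By Lemma~\ref{lem:convexity_to_cost}, $h(G):=g(C(G))$ is convex on the convex set $\mathcal{G}=\{G\succeq 0,\ G_{ii}=1\}$. This set contains all Gram matrices of unit vectors; the rank constraint $\operatorname{rank}G\le d$ is dropped for now and handled in Step~4.

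\textbf{Step 3: symmetrization.} For $\pi\in\Gamma$ we have $C(G)\circ\pi=C(G\circ\pi)$, where $(G\circ\pi)_{ij}=G_{\pi(i)\pi(j)}$ is the Gram matrix of the relabeled features $z_{\pi(i)}$. Hence, exactly as in the proof of Lemma~\ref{lem:achieve_sym}, $h(G\circ\pi)=h(G)$. Set $\bar G:=|\Gamma|^{-1}\sum_{\pi\in\Gamma}G\circ\pi$. Then $\bar G\in\mathcal{G}$, and Jensen's inequality gives $h(\bar G)\le h(G)$. Since $\Gamma$ acts transitively on ordered pairs of distinct same-class indices and on ordered pairs of different-class indices, $\bar G$ is $\Gamma$-invariant and therefore takes one value $\alpha$ on within-class off-diagonal entries and one value $\beta$ on cross-class entries. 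So the infimum of $h$ is attained within this two-parameter family.

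\textbf{Step 4: realizability in dimension $d$.} A $\Gamma$-invariant $\bar G$ may have rank $N>d$. I would treat this by optimizing $h$ over the two parameters $(\alpha,\beta)$ directly. I would argue that within-class correlation can be pushed to $\alpha=1$: the only triplets whose cost depends on $\alpha$ with a favorable sign are the admissible ones, since they involve $G_{ij}$ with $y_i=y_j$ entering with coefficient $-c/\tau$. With $\alpha=1$, $\bar G$ is a blown-up $K\times K$ matrix of the form $(1-\beta)I_K+\beta J_K$. This is PSD for $\beta\ge -1/(K-1)$ and has rank at most $K-1$ at that endpoint, or $K$ otherwise. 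The optimum should sit at the endpoint $\beta=-1/(K-1)$, which is realizable in $\mathbb{R}^d$ by A3.

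\textbf{Step 5 (main obstacle): every optimum, not just some optimum.} To show that every optimal configuration is two-distance, I need strictness in the Jensen step. The plan is to use the representation from the proof of Lemma~\ref{lem:convexity_to_cost}: $g(C)$ equals an affine function of $C$ minus $\varepsilon$ times the entropic-OT dual value, and that value is a log-sum-exp over Sinkhorn potentials. Log-sum-exp is strictly convex along directions that are not of the additive-separable form $u_i+v_j+w_k$.

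So I would show that if $G\neq G\circ\pi$ for some $\pi\in\Gamma$, then $C(G)-C(G\circ\pi)$ cannot be separable on $[N]^3_{\neq}$. This looks checkable because $G_{ij}-G_{ik}$ depends jointly on $(i,j)$ and $(i,k)$. Strictness then forces $G=\bar G$ at any minimizer.

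If strictness fails in some degenerate direction, I would weaken the conclusion to: an optimal configuration exists with the two-distance property, and every optimal cost tensor coincides with one induced by such a configuration. This weaker form is still enough for the subsequent Neural Collapse argument.
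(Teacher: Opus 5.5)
Your Steps 1--3 follow the paper's route. You relax to the elliptope $\mathcal{E}_N$ (your $\mathcal{G}$), use affinity of $\psi$ to make the cost affine in the Gram matrix, and average over $\Gamma$ using Lemma~\ref{lem:achieve_sym}. The difference is where you average. The paper averages the cost tensor and asserts that $\bar C$ is realizable by convexity of $\tilde{\mathcal{C}}$. It then recovers the two inner-product values from $\Gamma$-invariance of $\bar C$ through injectivity of $\psi$: swaps $j\leftrightarrow k$ inside a class, swaps of foreign classes, symmetry of the Gram matrix, and the single $\Gamma$-orbit of $\mathcal{A}$. Since $C(\bar G)=\bar C$, your version gets realizability for free and reads the two-distance form directly off the transitivity of $\Gamma$ on same-class and cross-class ordered pairs, which is shorter.

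The paper's proof only establishes your fallback statement. It treats $\tilde{\mathcal{C}}$ as convex, although for $d<N$ the rank constraint generally destroys convexity. It also asserts that an optimal cost is $\Gamma$-invariant, whereas Lemma~\ref{lem:achieve_sym} only gives $g(\bar C)\le g(C)$. So your Steps 4--5 target genuine omissions. In Step 4, replace the ``favorable sign'' heuristic by the computation in Lemma~\ref{lem:nc_for_homo}. On the two-distance family the objective equals $-\log m(\Delta)$, where $m(\Delta)$ is the plan's mass on $\mathcal{A}$ and is strictly increasing in $\Delta=\alpha-\beta$. The bound $\beta\ge -1/(K-1)$ holds for every $\alpha$, by taking one representative per class.

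The central claim of Step 5 is false. For $w\in\mathbb{R}^N$, let $E$ have zero diagonal and $E_{ij}=w_i+w_j$ for $i\neq j$. Then $C(G+E)_{ijk}-C(G)_{ijk}=-\tfrac{c}{\tau}(w_j-w_k)$ is separable, so $P^{C(G+E)}=P^{C(G)}$ and $h(G+E)=h(G)$ exactly. Such differences do arise as $G\circ\pi-G$. Take $G=I_N+E$ with $w$ small and nonconstant on some class, and $\pi$ a suitable within-class transposition. Then $G\in\mathcal{E}_N$ and $G\circ\pi\neq G$, yet $C(G\circ\pi)-C(G)$ is separable. Conversely, using $N\ge 4$, separability of $C(G)-C(G')$ on $[N]^3_{\neq}$ forces $G-G'$ to have exactly this form. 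So strict convexity only shows that any two minimizers differ by such an $E$; uniqueness must come from where the optimum lies.

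Here is the repair. Let $G^*$ be the NC/ETF Gram matrix and suppose $G^*+E\in\mathcal{E}_N$. Since $\sum_i z_i^*=0$, we get $\mathbf{1}^\top(G^*+E)\mathbf{1}=2(N-1)\sum_i w_i\ge 0$. The within-class entries satisfy $1+w_i+w_j\le 1$, and summing them gives $2(b-1)\sum_i w_i\le 0$. Hence $w_i+w_j=0$ on every same-class pair, so those entries equal $1$. Rows $i$ and $j$ then coincide, which gives $w_i=w_j$ and hence $w=0$.

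This works only because $\alpha^*=1$ lies on the boundary of the elliptope. At an interior two-distance point, the same perturbations would give optimal configurations that are not two-distance. So the ``every optimum'' conclusion must use Step 4's identification of the optimum, not strict convexity alone. Once that is in place, the rank-$d$ problem inherits uniqueness, because its minimum equals the relaxed one ($\operatorname{rank}G^*=K-1\le d$).
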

% \begin{proof}(Sketch)
% Due to the $\Gamma$-symmetric property of $C^*$ in Lemma \ref{lem:achieve_sym}, combining with the injectivity and affine of $\psi(.)$, we can show that the inner product of representations of any two samples from the same class must be a constant denoted by $\alpha$, while the inner product of representations of any two samples from different classes must be another constant denoted by $\beta$. Due to the features being normalized into a unit sphere, $\alpha, \beta \in [-1,1]$.  The detailed proof can be found in Appendix \ref{apd: proof of lem:achieve_homo}.
% \end{proof}
%\tncomment{Need to add remark after each lemma/theorem}
%\tncomment{again, due to 4.3 is based on 4.2, do we need dimension condition here? }

\paragraph{Remarks. } Lemma \ref{lem:achieve_homo} is mainly based on the invariance of the optimal cost tensor $C^*$ under permutations in $\Gamma$ when $\psi$ is affine and classes are balanced.  
%\picomment{Again, the previous sentence should be rephrased since otherwise it suggests that the optimum C is automatically invariant -- that might be true, but we haven't been able to prove it as yet. The rest of this remark also needs work -- it's confusing because it mentions affine $\psi$ when the lemma has no such assumption.}
%To connect the symmetric property of $C^*$ to the two-distance class-homogeneous property of representation features, the injectivity and affine of $\psi$ are also required. 
Although the lemma assumes $\psi$ to be affine and strictly decreasing, the proof only requires $\psi$ to be strictly monotone, not necessarily decreasing. In addition, the ambient-dimension condition $d \geq K-1$ is also not required in this lemma.

%\tncomment{I am a bit worried about the labeling function. Should we use $y(.)$ to denote the label instead of the subscription?}

\begin{theorem}
\label{thm:nc-neg-mmot}
%If the class is balanced and the dimension of the representation space is at least the number of classes minus one ($d \geq K-1$), then 

The optimal representation features of the Neg-MMIOT-CL objective in (\ref{eq:neg_mmot_final}) with $\psi (t) = -t$ exhibit Neural Collapse, \textit{i.e.}, all samples in each class collapse to their class mean vector and the class mean vectors form a simplex ETF. Formally, let $\mu_{i}=\mathbf{E}[f(x_j)| y(x_j)=i]=\mathbf{E}[z_j| y(x_j)=i]$ denote the class mean vector of class $i$ in representation space. Then the optimal solution of (\ref{eq:neg_mmot_final}) satisfies:
\begin{enumerate}[leftmargin=*]
    \item Class collapse:  $f(x_j)=\mu_{i}, \forall j$ such that $y(x_j)=i$. 
    \item ETF configurations: (a) $\|\mu_i\|=1, \forall i$, (b) $\sum_{i=1}^K \mu_i=0$, and (c) $\langle \mu_i,\mu_j \rangle = -\frac{1}{K-1}, \forall i \neq j$.
\end{enumerate}
\end{theorem}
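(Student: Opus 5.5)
By Lemma~\ref{lem:achieve_homo}, it suffices to optimize over two-distance class-homogeneous configurations, i.e.\ $\langle z_i,z_j\rangle=\alpha$ if $y_i=y_j$ and $\beta$ otherwise. The objective then reduces to a function of $(\alpha,\beta)$, and we maximize it subject to realizability on $\mathbb{S}^{d-1}$. Before using that lemma, I would note that an optimum exists: the features live in a product of spheres, which is compact, and $g$ is continuous in $C$, which is continuous in the features on $[N]^3_{\neq}$.

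\emph{Step 1: reduce the cost to one scalar.} With $\psi(t)=-t$ and a two-distance configuration, $C_{ijk}=-(\langle z_i,z_j\rangle-\langle z_i,z_k\rangle)/\tau$ for distinct $i,j,k$ takes only three values. Let $\delta:=\alpha-\beta$.
\begin{itemize}
\item On the admissible set $\mathcal{A}$ ($y_j=y_i\neq y_k$), $C_{ijk}=-\delta/\tau$.
\item On the ``reversed'' set $\mathcal{R}=\{y_k=y_i\neq y_j\}$, $C_{ijk}=+\delta/\tau$.
\item On all other distinct triplets (all same class, or both $j,k$ outside the class of $i$), $C_{ijk}=0$.
\end{itemize}
Hence $C=C(\delta)$ depends only on $\delta$, and so does $g(C(\delta))=\mathrm{KL}(\tilde P\|P^{C(\delta)})$.

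\emph{Step 2: monotonicity in $\delta$ (the main obstacle).} I would show that $\delta\mapsto g(C(\delta))$ is strictly decreasing. Write $P^C_{ijk}=\exp((a_i+b_j+c_k-C_{ijk})/\varepsilon)$ on $[N]^3_{\neq}$, using the multi-marginal Sinkhorn/Lagrangian form from the proof of Lemma~\ref{lem:convexity_to_cost}. Because $\tilde P$ and $P^C$ share the marginals $\mathbf{u}_N$, the potential terms cancel in $\mathrm{KL}(\tilde P\|P^C)$. An envelope argument on the dual then gives
\[
\nabla_C g=(\tilde P-P^C)/\varepsilon,
\]
the same formula as in the PushPull remark. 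Since $dC/d\delta$ equals $-1/\tau$ on $\mathcal{A}$, $+1/\tau$ on $\mathcal{R}$, and $0$ elsewhere, the chain rule gives
\[
\frac{d}{d\delta}g(C(\delta))=-\frac{1}{\varepsilon\tau}\Big[(1-P^C(\mathcal{A}))+P^C(\mathcal{R})\Big]<0.
\]
Strict negativity holds because entropic regularization gives $P^C$ full support on $[N]^3_{\neq}$, so $P^C(\mathcal{R})>0$. The delicate part is justifying differentiability of the MMOT dual potentials in $C$, i.e.\ uniqueness up to the additive gauge. If that proves awkward, I would instead use the representation from Lemma~\ref{lem:convexity_to_cost}: $g$ equals an affine function of $C$ minus a pointwise minimum of affine functions of $C$. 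I would then compute one-sided derivatives along the ray $C(\delta)$.

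\emph{Step 3: maximize $\delta$ under realizability.} The Gram matrix $G$ of a two-distance balanced configuration has the following eigenvalues:
\begin{itemize}
\item $1-\alpha$, from within-class contrasts;
\item $1+(b-1)\alpha-b\beta$, with multiplicity $K-1$;
\item $1+(b-1)\alpha+b(K-1)\beta$, along the all-ones direction.
\end{itemize}
Realizability requires $G\succeq 0$, together with $\mathrm{rank}(G)\le d$. The last eigenvalue condition gives $\beta\ge-(1+(b-1)\alpha)/(b(K-1))$, so
\[
\delta\le\alpha+\frac{1+(b-1)\alpha}{b(K-1)},
\]
which is increasing in $\alpha$. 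Since $\alpha\le1$, the maximum is attained only at $\alpha=1$, $\beta=-1/(K-1)$, giving $\delta=K/(K-1)$. The resulting $G$ has rank $K-1$, so it is realizable exactly when $d\ge K-1$, which is Assumption~A3.

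\emph{Step 4: read off Neural Collapse.} $\alpha=1$ with unit-norm features forces $z_j=z_{j'}$ whenever $y_j=y_{j'}$. This is class collapse, and it gives $\mu_i=z_j$ and $\|\mu_i\|=1$. Then $\langle\mu_i,\mu_{i'}\rangle=\beta=-1/(K-1)$ for $i\neq i'$, and
\[
\Big\|\sum_i\mu_i\Big\|^2=K+K(K-1)\beta=0,
\]
so $\sum_i\mu_i=0$. Together these are the simplex ETF conditions.
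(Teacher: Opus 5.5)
Your proposal is correct and takes essentially the same approach as the paper. Both proofs reduce to the two-distance family via Lemma~\ref{lem:achieve_homo}, show that the objective is strictly decreasing in $\delta=\alpha-\beta$, and maximize $\delta$ under positive semidefiniteness of the Gram matrix, which forces $\alpha=1$ and $\beta=-1/(K-1)$. The differences are only local. You get monotonicity from the Danskin gradient $\nabla_C g=(\tilde P-P^C)/\varepsilon$ (the identity the paper itself uses for Proposition~\ref{prop: stationary}, which also removes your worry about differentiating the dual potentials), whereas the paper uses the explicit symmetric Gibbs form $P^C\propto e^{-C/\varepsilon}$ with $\mathrm{KL}=-\log m(\Delta)$. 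You bound $\beta$ via the full $N\times N$ Gram spectrum, whereas the paper uses the $K\times K$ Gram matrix of one representative per class.
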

% \begin{proof}(Sketch)
% Based on the two-distance class-homogeneous condition in Lemma \ref{lem:achieve_homo}, the objective function is the function of $\alpha$ and $\beta$. By analyzing the objective function, we show that the objective in (\ref{eq:neg_mmot_final}) is strictly decreasing in $\Delta=\alpha-\beta$. Next, we show that $\alpha$ achieves its maximum at $1$, and $\beta$ achieves its minimum at $\frac{-1}{K-1}$. The proof follows. The detail is given in Appendix \ref{apd: proof for thm:nc-neg-mmot}.    
% \end{proof}
 
\paragraph{Remarks. } The result remains valid for any affine decreasing function $\psi$; we adopt this specific form because it is directly induced by the cost in \citep{shi2023understanding}. The condition $d\geq K-1$ (A3) is not explicitly used in the proof of the Theorem \ref{thm:nc-neg-mmot}. Indeed, this condition arises because a simplex ETF with $K$ vertices must lie in a dimension of at least $d-1$. If this condition is violated, \textit{i.e.}, if $d< K-1$, then it is impossible to find $K$ class mean vectors $\mu_1,\mu_2,\dots,\mu_K$ satisfying all the ETF conditions in Theorem \ref{thm:nc-neg-mmot}, specially, the last condition where one requires that $\langle \mu_i,\mu_j \rangle = -\frac{1}{K-1}, \forall i \neq j$. 

Proving that NC/ETF is a global optimizer for a broader class of $\psi$ remains an open but promising research direction. Proposition \ref{prop: stationary} shows that, when $\psi$ is differentiable, the NC/ETF configuration is indeed a stationary point not only of Neg-MMIOT-CL, but also of Neg-IOT-CL-PushPull.

\begin{proposition} \label{prop: stationary}
Assume $\psi$ is differentiable. Then any Neural Collapse configuration whose class means form a simplex ETF is stationary for the spherical gradient flow $$\frac{d}{dt} z_i=-P_{z_i}^{\perp}\nabla_{z_i}g(C(z)), \qquad i\in[N]$$ where $P_{z}^{\perp}$ denotes the orthogonal projection at $z$, i.e. $P_{z}^{\perp}(x) = x - (x^\top z)z$.
That is, if $z_i=F\mu_{y_i}$, $\|\mu_i\|=1$, $\sum_{i=1}^K\mu_i=0$, and
$\langle \mu_i,\mu_j\rangle=-1/(K-1)$ for $i\neq j$, then
\[
P_{z_i}^{\perp}\nabla_{z_i}g(C(z))=0
\qquad
\forall i\in[N].
\]
Similarly, for the PushPull counterpart,
$g_{\mathrm{PP}} (C)=\mathrm{KL}\left(\widetilde{P}_{+} \| P_{+}^C\right)+\mathrm{KL}\left(\widetilde{P}_{-} \| P_{-}^C\right)$
, we also have 
\[
P_{z_i}^{\perp}\nabla_{z_i}g_{\mathrm{PP}}(C(z))=0
\qquad
\forall i\in[N].
\]
\end{proposition}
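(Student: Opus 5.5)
Our plan is to compute $\nabla_{z_i} g(C(z))$ by the chain rule, $\nabla_{z_i} g = \sum_{(a,b,c)\in[N]^3_{\neq}} \frac{\partial g}{\partial C_{abc}}\,\nabla_{z_i} C_{abc}$, and to show that at an NC/ETF configuration the result is a linear combination of $z_i$ and $\sum_{m} z_m$. Since the ETF condition gives $\sum_m z_m = b\sum_{k}\mu_k=0$, such a vector is parallel to $z_i$ and is annihilated by $P_{z_i}^{\perp}$. The key symmetry input is that at the NC/ETF configuration the cost tensor $C(z)$ is itself $\Gamma$-invariant: $C_{abc}$ depends only on whether $y_a=y_b$ and whether $y_a=y_c$. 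The inner products then take only the two values $1$ and $-1/(K-1)$, so $C$ takes at most the values $\psi(0)$, $\psi(\pm\Delta/\tau)$ with $\Delta=1+1/(K-1)$, arranged according to the label pattern of $(a,b,c)$.

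\textbf{Step 1 (derivative of $g$ in $C$).} We write the first-order conditions of the entropic MMOT, which give $P^C_{abc}=\exp\big((u_a+v_b+w_c-C_{abc})/\varepsilon\big)$ on $[N]^3_{\neq}$. By the envelope/implicit-function argument underlying Lemma~\ref{lem:convexity_to_cost}, $\frac{\partial g}{\partial C_{abc}} = \frac{1}{\varepsilon}\big(\tilde P_{abc} - P^C_{abc}\big)$. This is the multi-marginal analogue of the identity quoted in the Remarks on PushPull: the potential terms cancel because $\tilde P$ and $P^C$ share marginals. Since $C$ is $\Gamma$-invariant and $P^C$ is unique, $P^C$ is also $\Gamma$-invariant. $\tilde P$ is $\Gamma$-invariant as well, as noted in Lemma~\ref{lem:achieve_sym}. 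Hence $G_{abc}:=\partial g/\partial C_{abc}$ depends only on the label pattern of $(a,b,c)$.

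\textbf{Step 2 (gradient of $C$ in $z$).} We have $\nabla_{z_i}C_{abc}=\frac{1}{\tau}\psi'(\cdot)\,\nabla_{z_i}\big(\langle z_a,z_b\rangle-\langle z_a,z_c\rangle\big)$. The factor $\psi'(\cdot)$ is again a function of the label pattern only. Collecting terms, $\nabla_{z_i} g$ is a sum of $z_m$ weighted by coefficients $w_{im}$ that depend only on whether $m=i$, $y_m=y_i$ with $m\neq i$, or $y_m\neq y_i$. This gives
\[
\nabla_{z_i} g = w_0 z_i + w_1\sum_{m\neq i,\,y_m=y_i} z_m + w_2 \sum_{y_m\neq y_i} z_m .
\]
Under class collapse, the second sum equals $(b-1)\mu_{y_i}$ and the third equals $b\sum_{k\neq y_i}\mu_k=-b\mu_{y_i}$. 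The whole gradient is therefore a multiple of $\mu_{y_i}=z_i$, and its tangential projection vanishes.

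\textbf{Step 3 (PushPull).} We repeat the argument with the pairwise cost. Now $\partial g_{\mathrm{PP}}/\partial C_{ab}=\frac{1}{\varepsilon_+}(P_+-\tilde P_+)_{ab}-\frac{1}{\varepsilon_-}(P_- -\tilde P_-)_{ab}$, with signs fixed by the min/max formulations. Both plans are $\Gamma$-invariant by uniqueness, so the same two-class-type weighting argument applies.

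\textbf{Main obstacle.} The main obstacle is the careful bookkeeping in Step 2. The index $i$ can occupy any of the three slots $a$, $b$, $c$, with different signs, and one must verify that every resulting coefficient depends only on the label relation between $i$ and $m$. This follows from $\Gamma$-invariance: for fixed $i$, the stabilizer of $i$ in $\Gamma$ acts transitively on each of the classes $\{m\neq i: y_m=y_i\}$ and $\{m: y_m\neq y_i\}$. We would invoke this orbit argument rather than enumerate patterns. A secondary point is justifying the gradient formula in Step 1, i.e., differentiability of $P^C$ in $C$. This follows from the implicit function theorem applied to the strictly convex dual, with the gauge freedom in the potentials $(u,v,w)$ fixed.
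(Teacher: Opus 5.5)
Your proposal is correct and follows essentially the same route as the paper. Both use the Danskin/envelope formula $\nabla_C g=\frac{1}{\varepsilon}(\tilde P-P^C)$, then the chain rule through $C_{abc}(z)$, and then $\Gamma$-symmetry at the NC/ETF point to force equal coefficients on all other classes, so that $\sum_k\mu_k=0$ makes the gradient parallel to $z_i$; your stabilizer-orbit argument is a more careful version of the paper's ``all classes $d\neq c$ contribute symmetrically.'' One harmless slip: in Step 3 your PushPull derivative has the overall sign reversed (the correct one is $\frac{1}{\varepsilon_+}(\tilde P_+-P_+^C)+\frac{1}{\varepsilon_-}(P_-^C-\tilde P_-)$), but only the $\Gamma$-invariance of this coefficient tensor is used, so the conclusion is unaffected.
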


\section{Experiments} \label{sec: exp}
We evaluate the proposed algorithms on three kinds of experiments: (1) Synthetic Gaussian mixtures, where we test whether Neg-MMIOT-CL and Neg-IOT-CL-PushPull exhibit Neural Collapse geometry and mitigate dimensional collapse,  (2) Vision benchmarks, where we assess improvements in supervised and unsupervised representations, and (3) CLIP-style pretraining, where we evaluate gains in multimodal retrieval and zero-shot transfer. Also, we conduct an ablation study to test the efficiency of Neg-IOT-CL-PushPull in Appendix~\ref{sec:ablation}. All experiments are run on NVIDIA H200 (140GB) and L40s (48GB) GPUs.
% \vspace{-0.1in}
\paragraph{Detailed Experimental Setup:} Detailed settings and hyperparameter configurations of all the experiments are described Appendix~\ref{app:exp_setup}.

\subsection{Synthetic Gaussian mixture: neural collapse and dimensional collapse}
We study a controlled synthetic setup (detailed in Appendix \ref{app:exp_setup}) to test the theoretical results of \Cref{sec: analysis}, focusing on whether negative samples in Neg-MMIOT-CL and Neg-IOT-CL-PushPull induce class collapse, simplex geometry, and mitigate dimensional collapse. We compare Neg-IOT-CL-PushPull and Neg-MMIOT-CL with IOT-CL \citep{shi2023understanding}, BYOL, and VICReg. Evaluation criteria defined in Appendix~\ref{app:exp_setup} include \emph{Neural Collapse metrics (NC1, NC2)} and the spectrum of the class-mean covariance to assess \emph{dimensional collapse}. Appendix~\ref{app:add_exp_results} also provides a t-SNE visualization of representation features on a circle in $\mathbb{R}^2$.

\begin{figure}[t!]
    \centering
    \includegraphics[width=1\linewidth]{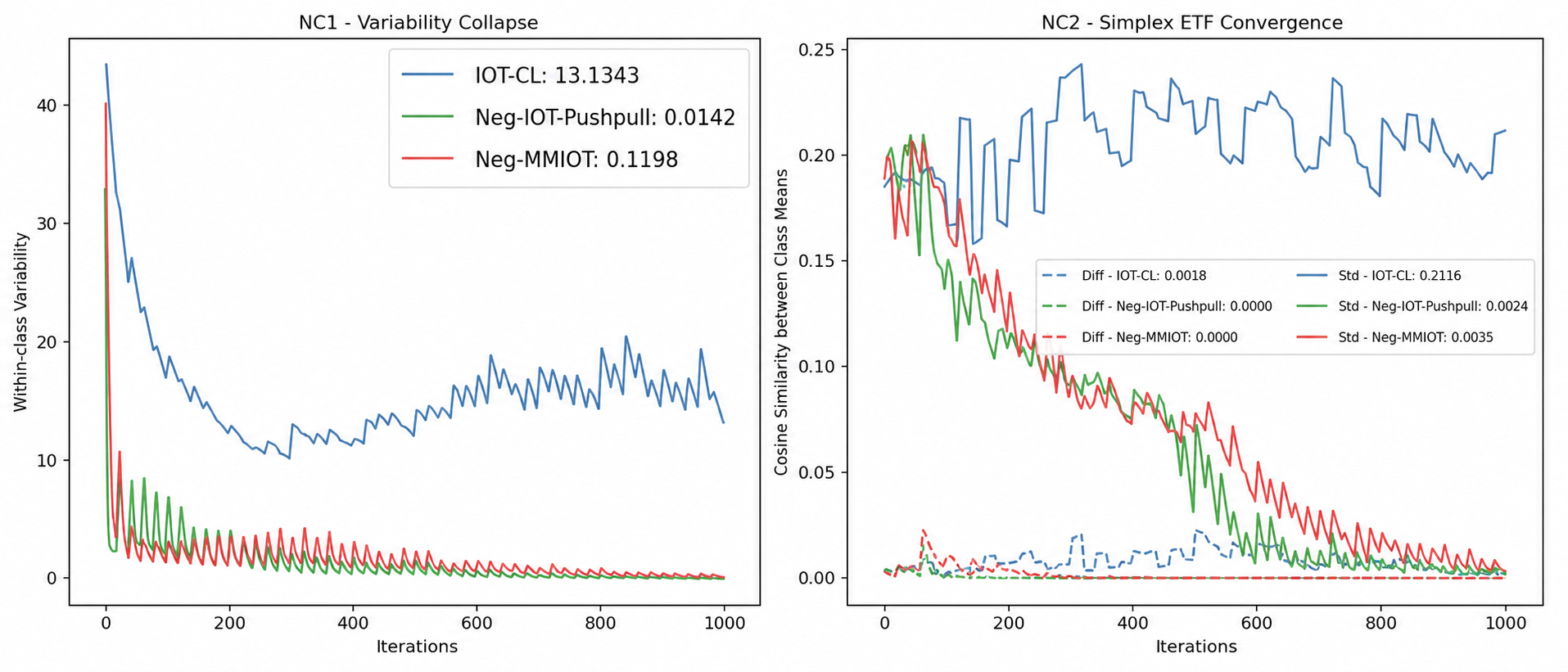}
    \caption{Testing Neural Collapse: NC1 measures within-class variability with smaller values indicating stronger class collapse. NC2 measures how close the class means are to the simplex-ETF geometry. Both NC2 quantities should approach zero when the class means form a simplex ETF. The numbers in the legends are the NC metrics in the last iteration.}
    \label{fig:nc_synthetic_affine}
    \vspace{-0.2in}
\end{figure}
Figures \ref{fig:nc_synthetic_affine} and \ref{fig:dc_synthetic_affine} support the theoretical results in Theorem~\ref{thm:nc-neg-mmot} for Neg-MMIOT-CL. Both Neg-IOT-CL-PushPull and Neg-MMIOT-CL drive within-class variability down much more aggressively than IOT-CL, and both move the class means toward the simplex-ETF target. At the same time, their class-mean covariance spectra remain substantially richer than those of the non-contrastive baselines, indicating better resistance to dimensional collapse. Overall, the synthetic results show that introducing explicit negative samples improves not only alignment within each class but also the global geometry of the learned representations.
\begin{figure}[htp!]
    \centering
    \subfloat{\includegraphics[width=0.33\linewidth]{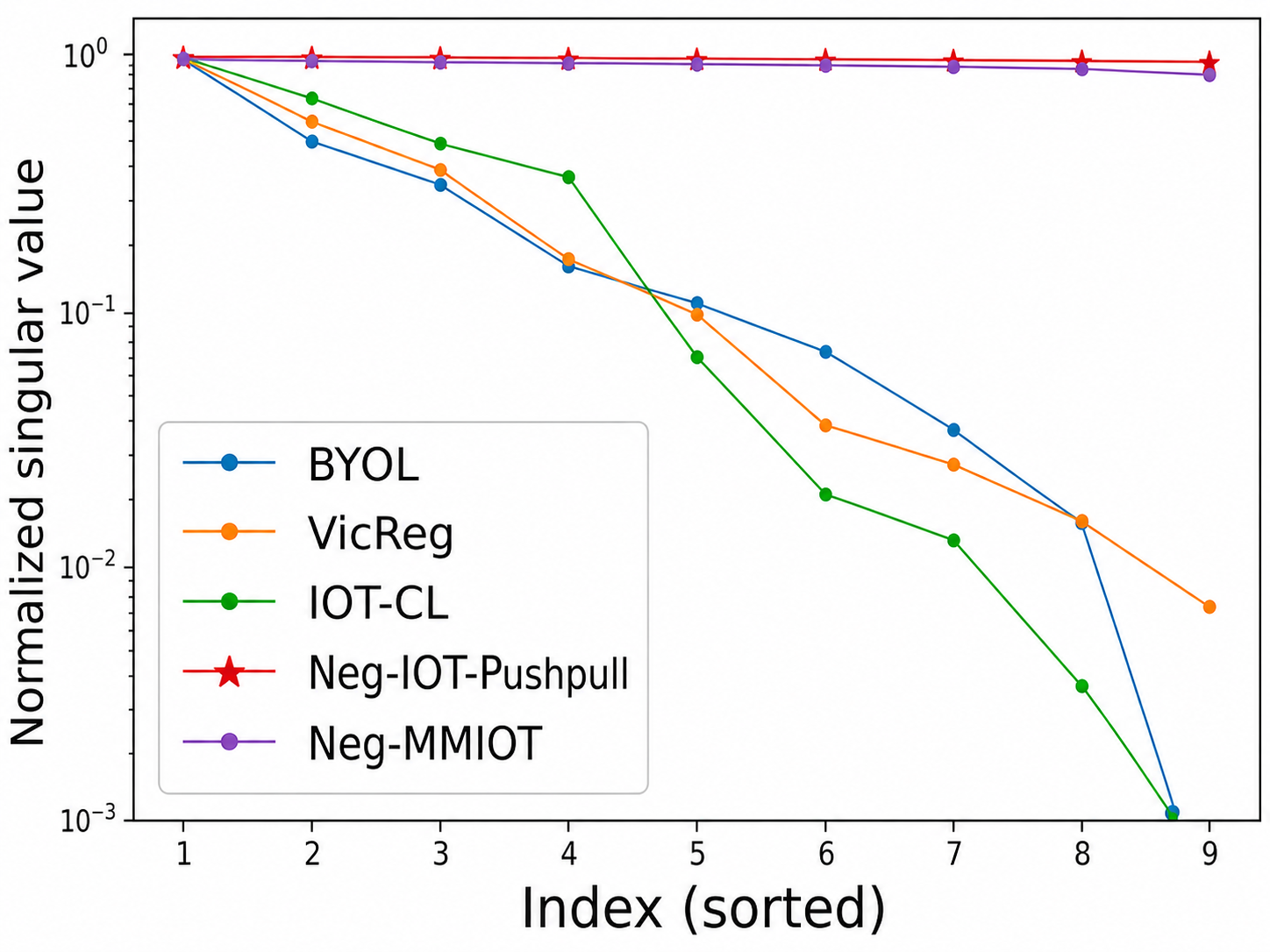}\label{rate}}
    \hfill
    \subfloat{\includegraphics[width=0.33\linewidth]{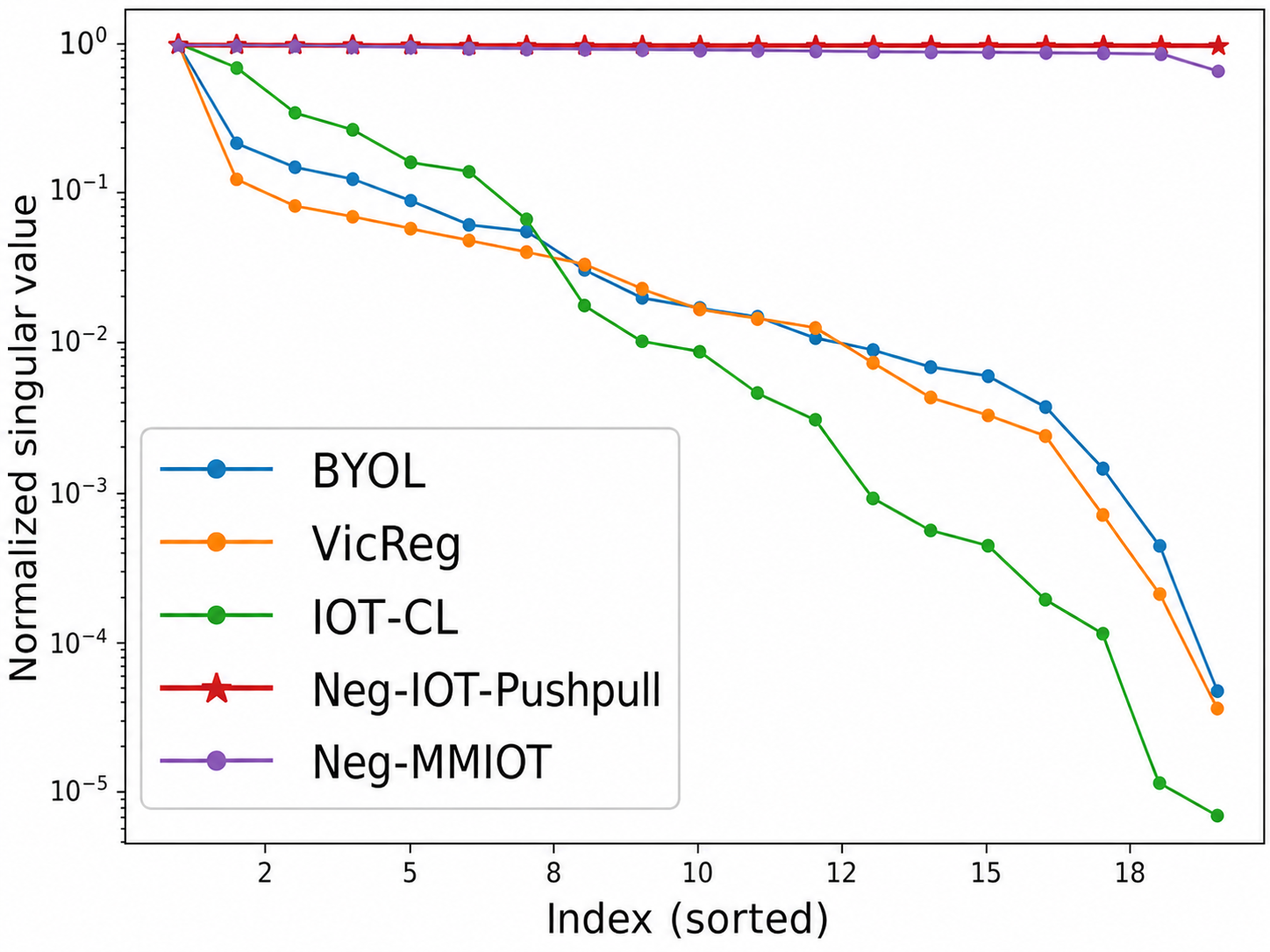}\label{rate}}
    \subfloat{\includegraphics[width=0.33\linewidth]{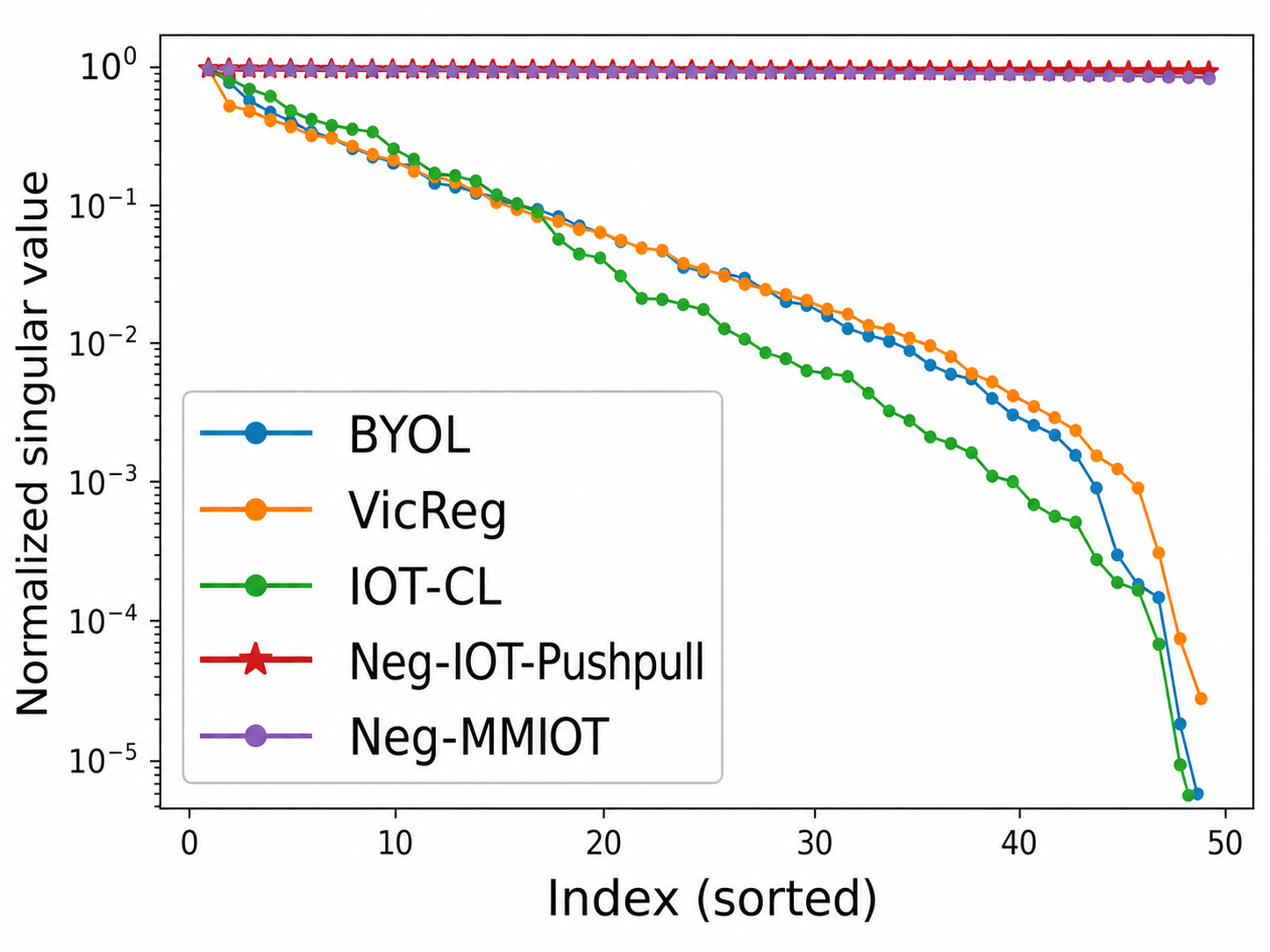}\label{rate}}\\
    \hspace{9ex} $10$ classes \hspace{19ex} $20$ classes \hspace{19ex} $50$ classes \hspace{5ex}
    \caption{Testing Dimensional Collapse.
    We plot the sorted (max)-normalized eigenvalues of the centered class-mean covariance matrix at the final epoch. A rapidly decaying spectrum indicates that the class means occupy only a few effective directions, i.e., dimensional collapse.}
    \label{fig:dc_synthetic_affine}
\end{figure}
% \vspace{-0.3in}
\subsection{Vision benchmarks: supervised and unsupervised CL} 
We consider both supervised CL (SCL) and unsupervised CL (UCL) on MNIST \cite{lecun2002gradient}, SVHN \cite{netzer2011reading}, CIFAR-10, CIFAR-100 \cite{krizhevsky2009learning}, and TinyImageNet \cite{le2015tiny}, using ResNet-18, ResNet-34, ResNet-50 and ViT-B/16. These datasets consist of 28 x 28 x 1 images in 10 classes (MNIST) and 32 × 32 × 3 images across 10 classes (SVHN, CIFAR-10), 100 classes (CIFAR-100), and 200 classes (TinyImageNet), respectively. For image augmentation for UCL, we adopt a SimCLR-style two-crop pipeline, where each sample is transformed into two views using random resized cropping, horizontal flipping, color jittering, random grayscale, optional Gaussian blur, and normalization. More details are in Appendix \ref{app:exp_setup}. We report both linear-probe and k-NN accuracy in order to evaluate representation quality both with and without an additional trained classifier. The results are averaged and reported in Table \ref{tab:scl_ucl}.

\begin{table*}[htp!]
\centering
\captionsetup{skip=5pt}
\renewcommand{\arraystretch}{1.35}
\setlength{\tabcolsep}{3pt}
\resizebox{\linewidth}{!}{%
\begin{tabular}{|c|l|cc|cc|cc|cc|cc|}
\hline
{\textbf{Setting}} 
& \multicolumn{1}{c|}{\textbf{Method}}
& \multicolumn{2}{c|}{\textbf{MNIST}} 
& \multicolumn{2}{c|}{\textbf{SVHN}} 
& \multicolumn{2}{c|}{\textbf{CIFAR-10}} 
& \multicolumn{2}{c|}{\textbf{CIFAR-100}} 
& \multicolumn{2}{c|}{\textbf{Tiny-ImageNet}} \\ \cline{3-12}
&
& ResNet50 & ViT-B/16
& ResNet50 & ViT-B/16
& ResNet50 & ViT-B/16
& ResNet50 & ViT-B/16
& ResNet50 & ViT-B/16 \\
\hline

{\textbf{SCL}}
& InfoNCE
& 99.36 & 99.49
& 91.96 & 94.07
& \textbf{91.39} & 90.40
& 74.01 & 73.59
& 62.60 & 46.32 \\
\cline{2-12}

& InvaSpread
& 98.98 & 99.61
& 81.79 & 85.35
& 78.46 & 72.62
& 63.26 & 67.08
& 55.06 & 44.19 \\
\cline{2-12}

& Standard OT
& 99.21 & 99.60
& 92.08 & 94.24
& 90.78 & 84.72
& 72.71 & 73.49
& 61.20 & 45.27 \\
\cline{2-12}

& Neg-IOT-CL-PushPull (Ours)
& 99.30 & 99.64
& 91.89 & 94.38
& 89.59 & 88.69
& 74.80 & \textbf{74.67}
& 62.04 & 46.54 \\
\cline{2-12}

& Neg-MMIOT-CL (Ours)
& \textbf{99.41} & \textbf{99.66}
& \textbf{92.74} & \textbf{94.87}
& 90.64 & \textbf{91.97}
& \textbf{75.17} & 74.62
& \textbf{63.54} & \textbf{48.02} \\
\hline

{\textbf{UCL}}
& InfoNCE
& \textbf{97.73} & $98.40$
& 78.13 & $84.67$
& 76.81 & 84.57
& 49.05 & 56.99
& 55.23 & 51.45 \\
\cline{2-12}

& InvaSpread
& 97.65 & $97.95$
& 73.48 & $75.13$
& 64.88 & 64.13
& 44.51 & 44.17
& 46.85 & 43.18 \\
\cline{2-12}

& Standard OT
& 97.20 & $98.35$
& 75.96 & $83.94$
& 78.12 & 83.22
& 45.54 & 55.46
& 55.29 & 53.12 \\
\cline{2-12}

& Neg-IOT-CL-PushPull (Ours)
& 96.56 & $98.61$
& 78.63 & $87.45$
& \textbf{78.18} & \textbf{87.56}
& \textbf{52.20} & 55.45
& 55.37 & 54.35 \\
\cline{2-12}

& Neg-MMIOT-CL (Ours)
& \textbf{97.73} & $\mathbf{98.72}$
& \textbf{84.65} & $\mathbf{88.61}$
& 77.01 & 86.15
& 49.27 & \textbf{57.52}
& \textbf{60.27} & \textbf{57.48} \\
\hline
\end{tabular}%
}
\caption{Average linear-probe accuracy (\%) for SCL and UCL across datasets using ResNet50 and ViT-B/16 backbones. Bold values indicate the best result within each setting, dataset, and backbone.}
\vspace{-0.2in}
\label{tab:scl_ucl}
\end{table*}
Due to the space limit, Table \ref{tab:scl_ucl} only shows average linear-probe results of ResNet-50 and ViT-B/16 through 4 different seeds; see Appendix \ref{app:detail_result_vision} for \textit{detailed results on different backbones with error bars}. The key message of these tables is not just that our method improves accuracy, but that explicit negative transport turns OT from a positive-matching objective into a genuinely contrastive learner. Relative to IOT-CL, the gains appear across both linear-probe and k-NN evaluation, which suggests better representation geometry rather than a mere classifier effect. In several settings the proposed objectives close and surpass InfoNCE; Neg-MMIOT-CL is typically the strongest full formulation, while Neg-IOT-CL-PushPull retains much of the same benefits with lower computational cost.

\subsection{Vision-Language benchmark: CLIP training}
Appendix~\ref{app:exp_setup} provides the background for CLIP and the detailed experimental setup. We benchmark CLIP trained on MS-COCO \cite{lin2014microsoft} on: \textbf{(1)} MS-COCO retrieval (image$\leftrightarrow$text) and \textbf{(2)} zero-shot classification on CIFAR-10/100 using the image encoder (i.e. classify images without training on CIFAR-10/100). Baselines include InfoNCE and OT variants from \cite{shi2024ot}: Standard OT (with/without uniform penalty), DBOT, Fused-Gromov OT, using their reported hyperparameters. The results are shown in Table~\ref{tab:clip}.

\begin{table*}[htp!]
\centering
\tiny
\captionsetup{skip=12pt}
\renewcommand{\arraystretch}{1.5}
\resizebox{1\linewidth}{!}{%
\begin{tabular}{|c|cc|cc|cc|cc|}
\hline
\multicolumn{1}{|c|}{\textbf{CLIP-Loss}}
& \multicolumn{2}{|c|}{\textbf{Image $\rightarrow$ Text}}
& \multicolumn{2}{|c|}{\textbf{Text $\rightarrow$ Image}}
& \multicolumn{2}{|c|}{\textbf{CIFAR-10}}
& \multicolumn{2}{|c|}{\textbf{CIFAR-100}} \\
\cline{2-9}
& Top-1 & Top-5
& Top-1 & Top-5
& Top-1 & Top-5
& Top-1 & Top-5 \\
\hline

infoNCE
& 67.22 & 92.51
& 65.91 & \textbf{94.61}
& 23.37 & 74.82
& 6.36 & 27.59 \\

\hline

Standard OT
& 43.92 & 81.86
& 30.90 & 70.54
& 26.62 & 78.56
& 4.77 & 22.95 \\

Standard OT - Uniform
& 64.25 & 92.40
& 63.22 & 92.06
& 25.97 & 78.67
& \textbf{6.51} & 27.68 \\

DBOT
& 15.89 & 46.68
& 15.52 & 45.92
& 24.98 & 70.36
& 5.72 & 27.19 \\

Fused-Gromov
& 14.32 & 43.92
& 13.94 & 42.87
& 24.99 & 76.06
& 3.72 & 19.40 \\

% UFG-OT
% & 7.9 & 39.8
% & 7.8 & 39.5
% & 19.22 & 59.25
% & 1.67 & 10.21 \\

\hline

Neg-IOT-CL-PushPull (Ours)
& 61.05 & 91.26
& \textbf{67.64} & 89.60
& 24.92 & 72.00
& 5.61 & \textbf{29.39} \\

Neg-MMIOT-CL (Ours)
& \textbf{69.28} & \textbf{94.76}
& 65.66 & 92.22
& \textbf{27.71} & \textbf{79.17}
& 5.02 & 19.69 \\

\hline
\end{tabular}%
}
\caption{MS-COCO retrieval and zero-shot transfer results. Bold values indicate the best result}
\label{tab:clip}
\end{table*}

Table \ref{tab:clip} makes the same point in the multimodal setting: positive alignment alone is not enough for CLIP, and OT baselines without explicit negative transport can substantially weaken retrieval. By building repulsion into the transport objective, our methods recover strong cross-modal discrimination; Neg-MMIOT-CL achieves the best image-to-text retrieval and the strongest CIFAR-10 zero-shot transfer, while Neg-IOT-CL-PushPull remains competitive and even attains the best text-to-image top-1 and CIFAR-100 top-5. This shows that negative transport is not only useful in unimodal representation learning, but is a key ingredient for making OT effective in multimodal contrastive training.

\section{Conclusion}
We proposed Neg-MMIOT-CL, an entropy-regularized multi-marginal OT framework that explicitly incorporates negative separation into contrastive learning, along with the scalable pairwise variant Neg-IOT-CL-PushPull. We prove that, under balanced data, affine decreasing cost-shaping function, and sufficient embedding dimension, Neg-MMIOT-CL recovers Neural Collapse and simplex ETF geometry. Empirically, our methods improve geometry and downstream performance across synthetic, vision, and vision-language tasks. A limitation of the current work is that the theoretical proof is presently established only for the case where $\psi$ is affine; extending it to the general case remains an open direction for future work. Other important directions are to extend the proposed guarantees and algorithms to imbalanced data, finite-capacity networks, and larger-scale training settings.

\bibliography{sample}
\bibliographystyle{unsrt}

\newpage
\appendix
\section{Related work and further discussion} \label{app: related_work}
\begin{enumerate}
    \item \textbf{Contrastive Learning with Negative Samples:} Contrastive learning (CL) is typically built on objectives that align positive pairs while repelling negatives. This template underlies InfoNCE-style methods such as SimCLR \cite{chen2020simple}, supervised contrastive learning \citep{khosla2020supervised}, and remains the dominant formulation in both unimodal and multimodal settings \cite{radford2021learning}. A related line strengthens the repulsive term through hard-negative sampling, showing that the treatment of negatives can substantially affect both optimization and the resulting representation geometry \cite{jiang2024hard}. Our work is closest in spirit to this literature: like standard CL, it treats negative separation as essential, but it encodes attraction and repulsion through transport couplings over admissible tuples rather than through a pairwise softmax over batch similarities.
    \item \textbf{Non-Contrastive Learning with only Positive Samples:} In parallel, non-contrastive representation learning methods show that useful embeddings can be learned from positive pairs alone. BYOL \citep{grill2020bootstrap} avoids collapse through asymmetry between online and target networks, while VICReg \citep{bardes2021vicreg} replaces explicit negatives with variance and covariance regularization. These methods are relevant here because they illustrate a different route to alignment, where separation is induced only indirectly rather than by explicitly modeling negative relations.
    \item \textbf{Optimal Transport-based Contrastive Learning:} More recently, OT has been used to measure discrepancy and enforce alignment in representation learning. Examples include connection of contrastive loss with matching problem in \cite{chen2024your}; partial OT for CL in imbalanced multi-view clustering \citep{xue2025protocol}. Closest to our work, \citep{shi2023understanding} recast InfoNCE as an \emph{inverse} OT problem, replacing the usual anchor-centric view with a global batch-level matching perspective and showing that both InfoNCE training and softmax inference arise under different OT constraints. This connection motivates our approach. In vision--language learning, OT-based alignment has also improved CLIP-style models in challenging zero-shot settings \citep{shi2024ot}. \cite{piran2024contrasting} also extends this idea to multi-marginal matching-gap methods for multiple views or modalities.
    \item \textbf{Optimal geometry of Contrastive Learning:} A complementary line of work studies the representation geometry induced by CL objectives. For supervised contrastive learning, early analyses showed that global optima exhibit class collapse and simplex Equiangular Tight Frame (ETF) structure, and later works connected this picture to the information bottleneck, studied how prototypes can engineer geometry \cite{gill2024engineering, behnia2024supervised, kini2023symmetric}, and characterized both balanced \cite{graf2021dissecting} and imbalanced \cite{nguyen2024neural} regimes under unconstrained-feature models. Hard-negative sampling further emphasizes the role of explicit repulsion by recovering Neural Collapse while mitigating dimensional collapse \cite{jiang2024hard}, and recent analyses of sigmoid-based CL \cite{lee2024analysis,bangachev2025global} study temperature-dependent optimal structures and, with trainable temperature and bias, broader classes of global minimizers. To the best of our knowledge, \emph{comparable geometric results have not been established for OT-based Contrastive Learning objectives}. Our work fills this gap by providing such a geometry result in the OT/MMOT setting.
\end{enumerate}

%%%%%%%%%%%%%%%%%%%%%%%%%%%%%%%%%%%%%%%%%%%%%%%%%%%%%%%%%%%%
\paragraph{Differences from closely related methods}
\label{app:further_discussion}
\begin{enumerate}
    \item We acknowledge that \citep{shi2023understanding, shi2024ot, piran2024contrasting} propose several variants of OT-based contrastive losses. However, these methods largely remain within the standard OT framework, mainly modifying the regularization or the positive-matching objective to strengthen positive alignment. In contrast, our method takes a different route: it introduces an additional mechanism that explicitly pushes negative pairs apart, adding a complementary “negative-separation” dimension beyond the vanilla OT formulation. Importantly, starting from the same vanilla OT baseline, our approach can also be extended to incorporate their variants. Therefore, for clarity and to isolate the core effect of our contribution, in the motivation section and the toy synthetic visualizations we compare our method only against standard OT.
    \item Non-contrastive Learning: The core idea of this work is to incorporate a negative repulsion mechanism into an Optimal Transport (OT)-based method to reinforce contrastiveness. Therefore, it is essential to compare this approach with other non-contrastive methods. Some existing methods focus solely on aligning positive pairs during pretraining and still achieve good performance on downstream tasks, e.g., \citep{grill2020bootstrap, bardes2021vicreg}. However, our approach differs from them in two key aspects:
    \begin{itemize}
        \item Multimodal Task Motivation: BYOL and VICReg were originally developed for self-supervised representation learning. However, to the best of our knowledge, there has been no work that integrates BYOL or VICReg with CLIP; meanwhile, our framework is applicable.
        \item Meaning of ``collapse'': It is well known that BYOL and VICReg are designed to avoid collapse. Meanwhile, our methods are designed to achieve a beneficial collapse configuration. However, it should be noted that these two kinds of collapse are different. In BYOL/VICReg, “avoid collapse” means avoiding the trivial constant-solution representation. Meanwhile, our methods aim for Neural Collapse is different: samples from the same class collapse to their class mean, but different classes remain maximally separated in an ETF-like structure. That is a class-structured collapse, not the trivial “everything maps to the same vector” collapse feared in self-supervised learning.
    \end{itemize}
\end{enumerate}

% \newpage
\section{Pseudo code for supervised learning setting} 
\label{app:pseudo_for_specific_setting}
% \vspace{-0.5cm}
\begin{algorithm}[htp!] 
\scriptsize
\caption{Supervised CL with Neg-MMIOT-CL}
\label{algo: otmm}
\begin{algorithmic}
%\footnotesize
    \Require Dataset $\mathcal{D}$; Batch sampler $\mathcal{B}$; shaping function $\psi(\cdot)$; encoder architecture $f$; learning rate $\eta$; max\_epochs $T$; regularization parameter $\varepsilon$; number of Sinkhorn iterations $n_{it}$.
    \State Initialize encoder parameters $\theta$
    \For{$t = 1, \ldots, T$}
        \State Step 0: Sample a mini-batch $\{(x_i,y_i)\}_{i=1}^{B}$
        \State Step 1: Compute embeddings $z_i \gets f^\theta(x_i)$; $\;$normalize $z_i \leftarrow z_i/\|z_i\|$
        \State Step 2: Build admissible index set $\mathcal{A} = \{(i,j,k) \| i,j,k \text{ are distinct }; y_i = y_j; y_i \neq y_k\}$
        \State \quad \quad \quad Construct ground-truth coupling $\tilde{P}\in\{0,1\}^{B\times B\times B}$ by
        \[
        \tilde{P}_{ijk}=\begin{cases}
        1, & (i,j,k)\in \mathcal{A} \\
        0, & \text{otherwise}
        \end{cases}
        \]
        \State \quad \quad \quad Normalize $\tilde{P} \leftarrow \frac{\tilde{P}}{\tilde{P}.sum()}$
        \State Step 4: Form cost tensor $C^\theta\in\mathbb{R}^{B\times B\times B}$ with
        \[
        C^\theta_{ijk} \gets \psi\!\Big( \frac{\langle z_i, z_j \rangle-\langle z_i, z_k \rangle}{\tau} \Big) \text{ if i,j,k are different}
        \]
        \[
        C^\theta_{ijk} \gets \infty \text{ if i=j or j = k or  k = i}
        \]
        \State Step 5: Solve Entropic Multi-Marginal OT (\ref{eq:clot_final}):
        \[
        P^{C^\theta} \;\gets\; \textsc{MM\text{-}Sinkhorn}(C^\theta,\varepsilon, n_{it})
        \]
        \State Step 6: Compute objective $\mathcal{L}(\theta)\gets \mathrm{KL}\!\big(\tilde{P}\,\|\,P^{C^\theta}\big)$
        \State Step 7: Backpropagate $\nabla_\theta \mathcal{L}$ and update $\theta$ by AdamW
    \EndFor
    \State \textbf{Output:} trained encoder $f^\theta$
\end{algorithmic}
\end{algorithm}

\begin{algorithm}[htp!] 
\scriptsize
\caption{Supervised CL with Neg-IOT-CL-PushPull}
\label{algo: iotpushpull}
\begin{algorithmic}
    \Require Dataset $\mathcal{D}$; Batch sampler $\mathcal{B}$; shaping function $\psi(\cdot)$; encoder architecture $f$; learning rate $\eta$; max\_epochs $T$; regularization parameters $\varepsilon_{+}, \varepsilon_{-}$.
    \State Initialize encoder parameters $\theta$
    \For{$t = 1, \ldots, T$}
        \State Step 0: Sample mini-batch $\{(x_i,y_i)\}_{i=1}^{B}$
        \State Step 1: Compute embeddings $z_i \gets f^\theta(x_i)$, then normalize $z_i \leftarrow z_i/\|z_i\|$
        \State Step 2: Build positive and negative index set $\mathcal{A}^+ = \{(i,j) \| i \neq j; y_i = y_j\} ; \mathcal{A}^+ = \{(i,k) \| i \neq k; y_i \neq y_k\}$
        \State \quad \quad \quad Construct ground-truth coupling
        \[
        (\tilde P_{+})_{ij}=\begin{cases}
        1, & (i,j)\in \mathcal{A}^+ \\
        0, & \text{otherwise}
        \end{cases}
        \qquad
        (\tilde P_{-})_{ik} \!=\! =\begin{cases}
        1, & (i,k)\in \mathcal{A}^- \\
        0, & \text{otherwise}
        \end{cases}
        \]
        \State \quad \quad \quad Normalize $\tilde P_{+} \leftarrow \frac{\tilde P_{+}}{\tilde P_{+}.sum()}$; $\tilde P_{-} \leftarrow \frac{\tilde P_{-}}{\tilde P_{-}.sum()}$
        \State Step 3: Build cost matrix $C^\theta \in \mathbb{R}^{B\times B}$ with 
        $$C^\theta_{ij} \gets \psi\!\big(\frac{\langle z_i, z_j \rangle}{\tau}\big) \text{ if } i \neq j$$
        $$C^\theta_{ij} \gets \infty \text{ if } i = j$$
        \State Step 4: Solve \textbf{Positive OT:}
        \Statex \hspace{1.5em}$P^{C^\theta}_{+} \gets \textsc{SinkhornUniform}\!\left(C^\theta;\ \varepsilon_{+},\ n_{it}\right)$
        \State Step 5: Solve \textbf{Negative OT:}
        \Statex \hspace{1.5em}$P^{C^\theta}_{-} \gets \textsc{SinkhornUniform}\!\left((-C^\theta);\ \varepsilon_{-},\ n_{it}\right)$
        \State Step 6: Compute loss \(
        \mathcal L(\theta)= \mathrm{KL}(\tilde P_{+}\Vert P^{C^\theta}_{+}) + \mathrm{KL}(\tilde P_{-}\Vert P^{C^\theta}_{-})
        \)
        \State Step 7: Backpropagate $\nabla_\theta \mathcal{L}$ and update $\theta$ by AdamW
    \EndFor
    \State \textbf{Output:} Encoder $f^\theta$
\end{algorithmic}
\end{algorithm}

\begin{algorithm}[htp!]
% \label{alg:4}
\small
\caption{Class-Uniform Batch Sampler (only for Synthetic Experiments under the Theoretical Setting)}
\label{alg:4}
\begin{algorithmic}
    \Require Dataset $\mathcal{D} = \{(x_i,y_i)\}_{i=1}^{Kn}$; Number of classes $K$; total samples per class $n$; batch size per class $b$; carryover ratio $\rho \in [0,1]$; number of epochs $T$.
    \State For each class $c \in \{1,\dots,K\}$, let $\mathcal{I}_{\mathfrak{c}}$ be the index set of all samples with label $c$.
    \State Initialize previous per-class batch indices $\mathcal{S}_{\mathfrak{c}}^{(0)} = \emptyset$ for all $c$.
    \State Initialize remaining unsampled indices $\mathcal{R}_{\mathfrak{c}}^{(0)} = \mathcal{I}_{\mathfrak{c}}$ for all $c$.
    \For{epoch $t = 1,2,\dots, T$}
        \For{$c = 1, \ldots, K$}
            \If{$t = 1$ \textbf{or} $\mathcal{S}_{\mathfrak{c}}^{(t-1)} = \emptyset$}
                \If{$|\mathcal{R}_{\mathfrak{c}}^{(t-1)}| < b$}
                    \State \textbf{Reset coverage cycle:} $\mathcal{R}_{\mathfrak{c}}^{(t-1)} \gets \mathcal{I}_{\mathfrak{c}}$
                \EndIf
                \State Uniformly sample $b$ indices $\mathcal{S}_{\mathfrak{c}}^{(t)} \subset \mathcal{R}_{\mathfrak{c}}^{(t-1)}$ without replacement.
                \State Update remaining set $\mathcal{R}_{\mathfrak{c}}^{(t)} \gets \mathcal{R}_{\mathfrak{c}}^{(t-1)} \setminus \mathcal{S}_{\mathfrak{c}}^{(t)}$.
            \Else
                \State $m \gets \lfloor \rho b \rfloor$ \Comment{number of samples to keep from last mini-batch}
                \State Uniformly sample $m$ indices $\mathcal{K}_{\mathfrak{c}}^{(t)} \subset \mathcal{S}_{\mathfrak{c}}^{(t-1)}$ without replacement.
                \State $\ell \gets b - m$ \Comment{number of new samples needed}
                \State $\mathcal{A}_{\mathfrak{c}}^{(t)} \gets \mathcal{R}_{\mathfrak{c}}^{(t-1)} \setminus \mathcal{K}_{\mathfrak{c}}^{(t)}$ \Comment{fresh candidates not kept}
                \If{$|\mathcal{A}_{\mathfrak{c}}^{(t)}| \ge \ell$}
                    \State Uniformly sample $\ell$ indices $\mathcal{N}_{\mathfrak{c}}^{(t)} \subset \mathcal{A}_{\mathfrak{c}}^{(t)}$ without replacement.
                    \State $\mathcal{R}_{\mathfrak{c}}^{(t)} \gets \mathcal{R}_{\mathfrak{c}}^{(t-1)} \setminus \mathcal{N}_{\mathfrak{c}}^{(t)}$
                \Else
                    \Comment{not enough fresh points left: use all, then reset}
                    \State $\mathcal{N}_{\mathfrak{c},1}^{(t)} \gets \mathcal{A}_{\mathfrak{c}}^{(t)}$ \Comment{use all remaining fresh points}
                    \State $r \gets |\mathcal{N}_{\mathfrak{c},1}^{(t)}|$
                    \State $\ell' \gets \ell - r$ \Comment{additional samples needed}
                    \State \textbf{Reset coverage cycle:} $\mathcal{R}_{\mathfrak{c}}^{\text{new}} \gets \mathcal{I}_{\mathfrak{c}} \setminus \big(\mathcal{K}_{\mathfrak{c}}^{(t)} \cup \mathcal{N}_{\mathfrak{c},1}^{(t)}\big)$
                    \State Uniformly sample $\ell'$ indices $\mathcal{N}_{\mathfrak{c},2}^{(t)} \subset \mathcal{R}_{\mathfrak{c}}^{\text{new}}$ without replacement.
                    \State $\mathcal{N}_{\mathfrak{c}}^{(t)} \gets \mathcal{N}_{\mathfrak{c},1}^{(t)} \cup \mathcal{N}_{\mathfrak{c},2}^{(t)}$
                    \State $\mathcal{R}_{\mathfrak{c}}^{(t)} \gets \mathcal{R}_{\mathfrak{c}}^{\text{new}} \setminus \mathcal{N}_{\mathfrak{c},2}^{(t)}$
                \EndIf
                \State Set $\mathcal{S}_{\mathfrak{c}}^{(t)} \gets \mathcal{K}_{\mathfrak{c}}^{(t)} \cup \mathcal{N}_{\mathfrak{c}}^{(t)}$
            \EndIf
        \EndFor
        \State Construct mini-batch 
        \[
        \mathcal{B}^{(t)} = \{(x_i, y_i) : i \in \bigcup_{c=1}^{K} \mathcal{S}_{\mathfrak{c}}^{(t)}\}.
        \]
        \State \textbf{Output:} Mini-batch $\mathcal{B}^{(t)}$ to perform one training step at epoch $t$.
    \EndFor
\end{algorithmic}
\end{algorithm}

%%%%%%%%%%%%%%%%%%%%%%%%%%%%%%%%%%%%%%%%%%%%%%%%%%%%%%%%%%%%

\clearpage
\section{Proofs of theoretical results} \label{appendix:proof}
\subsection{Proof of Lemma \ref{lem:convexity_to_cost}}
\label{apd: proof for lem:convexity_to_cost}
\begin{proof}
First, under Assumption~A1, $\mathrm{support}(\tilde{P}) \subseteq [N]^3_{\neq}$. Since for all $C\in \mathcal{C}$, $C_{ijk} = +\infty$ for all $(i,j,k) \notin [N]^{3}_{\neq}$, if $\langle C, P\rangle$ is to be finite, we must have $\mathrm{support}(P) \subseteq [N]^3_{\neq}$.
Next, all three marginal PMFs of the uniform PMF over $[N]^3_{\neq}$ 
%i.e., $$\frac{1}{N(N-1)(N-2}\mathbf{1}((i,j,k) \in \mathcal{S}_N),$$ 
are equal to $\mathbf{u}_N$. Thus, the set of all PMFs in $\Pi(\mathbf{u}_N,\mathbf{u}_N,\mathbf{u}_N)$ with support within $[N]^3_{\neq}$ is not empty, i.e., $\Delta([N]^3_{\neq}) \cap \Pi(\mathbf{u}_N,\mathbf{u}_N,\mathbf{u}_N) \neq \{\}$.

For all $P$ with support within $[N]^3_{\neq}$, if $f(P) := \big[\langle C, P \rangle - \varepsilon H(P)\big]$, then $f(\cdot)$ is a strictly convex differentiable function of $P$ since $\langle C, P \rangle$ is a linear function of $P$ and $-H(P)$ is a strictly convex differentiable function of $P$. Thus the objective function $f(P)$ is strictly convex and differentiable and the constraint set  $\Delta([N]^3_{\neq}) \cap \Pi(\mathbf{u}_N,\mathbf{u}_N,\mathbf{u}_N)$ is not empty and defined via linear equality constraints of the marginals and the support (and is therefore a non-empty convex set). Therefore, from basic results in Convex Optimization Theory, e.g., see Proposition~5.3.3 in \cite{bertsekas2009convex}, there exists a unique minimizer $P^C$ to the problem $\argmin_{P\in \Pi(\mathbf{u}_N,\mathbf{u}_N,\mathbf{u}_N)} f(P)$ defined by the stationary point of the Lagrangian function with respect to $P$. Let
{
\begin{align*}
\mathcal{L}(P,\lambda,\mu,\nu) := 
f(P) 
&+ \sum_{i'\in[N]} \!\!\!\lambda_{i'}\Big(\langle\mathbf{1}(i=i'),P\rangle - \tfrac{1}{N}\Big)
+ \sum_{j'\in[N]} \!\!\!\mu_{j'}\Big(\langle\mathbf{1}(j=j'),P\rangle - \tfrac{1}{N} \Big) \\
&{}+ \sum_{k'\in[N]} \!\!\!\nu_{k'}\Big(\langle\mathbf{1}(k=k'),P\rangle - \tfrac{1}{N} \Big)
\end{align*}}denote the Lagrangian function with $3N$ Lagrange multipliers $\{\lambda_{i'}, \mu_{j'}, \nu_{k'}: i',j',k' \in [N] \}$. Then there is no duality gap and there exists a choice of $3N$ Lagrange multipliers such that
\[
\forall (i, j, k) \in [N]^3_{\neq}, \quad  \left.\frac{\partial \mathcal{L}}{\partial P_{ijk}}\right|_{P = P^C} = 0 \Rightarrow C_{ijk}+\varepsilon\log P^C_{ijk} + \lambda_i + \mu_j + \nu_k = 0.
\]

Multiplying the stationarity condition for the triplet $(i,j,k)$ by $(\tilde{P}_{ijk}-P_{ijk})$ and summing over all $(i, j, k) \in [N]^3_{\neq}$ we get:
$$
\sum_{(i,j,k) \in [N]^3_{\neq}}\left(C_{i j k}+\varepsilon \log P^C_{ijk} + \lambda_i + \mu_j + \nu_k\right)\left(\tilde{P}_{ijk}-P^C_{ijk}\right)=0 .
$$
Since all three marginals of $\tilde{P}$ and $P^C$ are uniform, i.e., $\mathbf{u}_N$, the summations over all the dual terms vanish, i.e.,
{
$$
\sum_{(i,j,k) \in [N]^3_{\neq}} \!\!\!\!\!\!\!\!\lambda_i\left(\tilde{P}_{ijk}-P^C_{ijk}\right) =
\!\!\!\!\!\!\!\sum_{(i,j,k) \in [N]^3} 
\!\!\!\!\!\!\!\lambda_i\left(\tilde{P}_{ijk}-P^C_{ijk}\right) =
$$$$
\sum_{i\in [N]} 
\!\!\lambda_i\Big(\!\!\sum_{j,k \in [N]} \!\!\! \tilde{P}_{ijk} -\!\!\sum_{j,k\in [N]} \!\!\!P^C_{ijk}\Big) =
\sum_{i\in [N]} 
\!\lambda_i\left(\tfrac{1}{N} - \tfrac{1}{N}\right) = 0,
%\sum_{i,j,k \in [N]} \mu_j\left(\tilde{P}_{ijk}-P^C_{ijk}\right) = 
%\sum_{i,j,k \in [N]} \nu_k\left(\tilde{P}_{ijk}-P^C_{ijk}\right) = 
%0,
$$
}
and similarly,
$$
\sum_{(i,j,k) \in [N]^3_{\neq}} \mu_j\left(\tilde{P}_{ijk}-P^C_{ijk}\right) = 
\sum_{(i,j,k) \in [N]^3_{\neq}} \nu_k\left(\tilde{P}_{ijk}-P^C_{ijk}\right) = 
0.
$$
Therefore,
$$
\left\langle C, \tilde{P} - P^C\right\rangle+\varepsilon \sum_{(i,j,k) \in [N]^3_{\neq}}\left(\tilde{P}_{ijk}-P^C_{ijk}\right) \log P^C_{ijk}=0,
$$
which implies that
$$
-\varepsilon \sum_{(i,j,k)\in[N]^{3}_{\neq}} \tilde{P}_{ijk} \log P^C_{ijk}=\left\langle C, \tilde{P} - P^C\right\rangle-\varepsilon \sum_{(i,j,k) \in [N]^{3}_{\neq}} P^C_{ijk} \log P^C_{ijk} .
$$
Hence, for all $C \in \mathcal{C}$,
\begin{align*}
\varepsilon ~g(C) = 
\varepsilon ~\mathrm{KL}\left(\tilde{P} \| P^C\right) & =\varepsilon \!\!\!\!\!\sum_{(i, j , k) \in \mathrm{support}(\tilde{P})} \!\!\!\!\!\!\!\!\!\tilde{P}_{ijk} \log \left(\frac{\tilde{P}_{ijk}}{P^C_{ijk}}\right) \\
& =\varepsilon \!\!\!\!\!\sum_{(i, j , k) \in \mathrm{support}(\tilde{P})} \!\!\!\!\!\!\!\!\!\tilde{P}_{ijk} \log \tilde{P}_{ijk}-\varepsilon \!\!\!\!\!\sum_{(i, j , k) \in \mathrm{support}(\tilde{P})} \!\!\!\!\!\!\!\!\!\tilde{P}_{ijk} \log P^{C}_{ijk} \\
& =\varepsilon \!\!\!\!\!\sum_{(i, j , k) \in \mathrm{support}(\tilde{P})} \!\!\!\!\!\!\!\!\!\tilde{P}_{ijk} \log \tilde{P}_{ijk}-\varepsilon \!\!\!\!\!\sum_{i, j , k \in [N]^3_{\neq}} 
\!\!\!\!\!\!\tilde{P}_{ijk} \log P^{C}_{ijk} \\
& =\varepsilon \!\!\!\!\!\sum_{(i, j , k) \in \mathrm{support}(\tilde{P})} \!\!\!\!\!\!\!\!\!\tilde{P}_{ijk} \log \tilde{P}_{ijk}+\left\langle C, \tilde{P} - P^{C}\right\rangle-\varepsilon \!\!\!\!\!\!\!\sum_{(i, j , k) \in [N]^3_{\neq}} 
\!\!\!\!\!\!\!\!\!P^{C}_{ijk} \log P^{C}_{ijk} \\
& = [\langle C, \tilde{P}\rangle-\varepsilon H(\tilde{P})] - [\langle C, P^C \rangle-\varepsilon H(P^C)] \\
& = [\langle C, \tilde{P}\rangle-\varepsilon H(\tilde{P})] - \min_{P \in \Pi(\mathbf{u}_N,\mathbf{u}_N,\mathbf{u}_N) \cap \Delta([N]^3_{\neq})}
  \Big[\langle C,P\rangle-\varepsilon\,H(P)\Big].
\end{align*}
The first term on the right side of the last equality is an affine (and therefore convex) function of $C$. The pointwise minimum of a family of affine functions of $C$ is a concave function of $C$ \cite{boyd2004convex}. Since $\varepsilon > 0$, it follows that $g(C) = \mathrm{KL}(\tilde{P}||P^C)$ is a convex function of $C$ and this concludes the proof.
\end{proof}

We note that the stationarity of the Lagrangian implies that $P^C$ has the following from:
\[
\forall (i, j, k) \in [N]^3_{\neq}, \quad P^C_{ijk} = e^{-\frac{C_{ijk} + \lambda_i + \mu_j + \nu_k}{\varepsilon}}
\]
where the Lagrange multipliers $\lambda,\mu,\nu$ can be chosen to satisfy all three uniform marginal constraints. In general, $\lambda,\mu,\nu$ will depend on $C$.

\subsection{Proof of Lemma \ref{lem:achieve_sym}}
\label{apd: proof for lem:achieve_sym}
\begin{proof}
We first prove that uniform marginals are preserved under any $\pi \in \Gamma$. Indeed, take any $P \in \Pi(\mathbf{u}_N,\mathbf{u}_N,\mathbf{u}_N)$. Then for the first marginal of $P \circ \pi$ is given by
$$
\sum_{j, k \in [N]}(P \circ \pi)_{i j k}=\sum_{j, k  \in [N]} P_{\pi(i) \pi(j) \pi(k)} .
$$
Let $i^{\prime}=\pi(i), j^{\prime}=\pi(j), k^{\prime}=\pi(k)$. Because $\pi$ is a permutation, the mapping $(j, k) \mapsto\left(j^{\prime}, k^{\prime}\right)$ is bijective. Therefore,
$$
\sum_{j, k \in [N]}(P\circ \pi)_{i j k}=\sum_{j^{\prime}, k^{\prime} \in [N]} P_{i^{\prime} j^{\prime} k^{\prime}}=\frac{1}{N}.
$$
Similarly, 
$$
\sum_{i, k \in [N]}(P \circ \pi)_{i j k}=\frac{1}{N}, \quad \sum_{i, j \in [N]}(P \circ \pi)_{i j k}=\frac{1}{N} .
$$
Thus for all $\pi \in \Gamma$, $P \in \Pi(\mathbf{u}_N,\mathbf{u}_N,\mathbf{u}_N) \Leftrightarrow P \circ \pi \in \Pi(\mathbf{u}_N,\mathbf{u}_N,\mathbf{u}_N)$.
%
%Next we note that $i, j, k$ are all distinct if, and only if $\pi(i), \pi(j), \pi(k)$ are all distinct. Therefore, $\mathrm{support}(P) \subseteq [N]^3_{\neq} \Leftrightarrow \mathrm{support}(P \circ \pi) \subseteq [N]^3_{\neq}$. 
%
We next prove that the admissible set $\mathcal{A}$ is also invariant to all $\pi \in \Gamma$ so that $\tilde{P} \circ \pi = \tilde{P}$.
Recall that a triplet $(i, j, k)$ is in $\mathcal{A}$ iff $y_j=y_i, j \neq i, y_k \neq y_i$. 
Since $\pi$ only permutes samples within classes or permutes whole classes, it preserves equality of labels, inequality of labels, and distinctness of indices. Therefore,
$$
(i, j, k) \in \mathcal{A} \Longleftrightarrow(\pi(i), \pi(j), \pi(k)) \in \mathcal{A} .
$$
Since 
$\tilde{P}_{i j k}=\frac{1}{|\mathcal{A}|} \mathbf{1}((i, j, k) \in \mathcal{A})$,
$$
(\tilde{P} \circ \pi)_{i j k}=\tilde{P}_{\pi(i) \pi(j) \pi(k)}=\frac{1}{|\mathcal{A}|} \mathbf{1}\left((\pi(i), \pi(j), \pi(k)) \in \mathcal{A}\right)=\frac{1}{|\mathcal{A}|} \mathbf{1}((i, j, k) \in \mathcal{A})=\tilde{P}_{i j k} .
$$
Hence,
$$
\tilde{P} \circ \pi =\tilde{P} .
$$
Now consider the relabeled cost $C \circ \pi$ with corresponding entropic-OT minimizer $P^{C\circ \pi}$, i.e.,
$$
P^{C \circ \pi} = \argmin_{P\in\Pi} \Big[\langle C \circ \pi, P\rangle -\varepsilon\,H(P)\Big].
$$
Since since permutations only reorder the entries, we have 
$$
\langle C \circ \pi, P\rangle=\left\langle C, P \circ \pi^{-1}\right\rangle
$$
and 
$$
H(P)=H(P\circ \pi^{-1}),
$$
since entropy is invariant to any permutation of the probability masses. Therefore,
$$
P^{C \circ \pi} = \argmin_{P\in\Pi} \Big[\langle C \circ \pi, P\rangle -\varepsilon\,H(P)\Big] = \argmin_{P\in\Pi} \Big[\langle C, P \circ \pi^{-1}\rangle -\varepsilon\,H(P\circ \pi^{-1})\Big]
$$
$$
\Rightarrow P^{C \circ \pi} \circ \pi^{-1} = 
\argmin_{P' \in\Pi} \Big[\langle C, P' \rangle -\varepsilon\,H(P')\Big] = P^{C}.
$$
This proves that for all $\pi \in \Gamma$,
$$
P^{C \circ \pi} = P^{C}\circ \pi.
$$
Similarly to entropy, the KL divergence is also invariant under identical permutations applied to both its PMF arguments. Therefore, for all $\pi \in \Gamma$,  
$$g(C\circ \pi) = \mathrm{KL}(\tilde{P},P^{C\circ \pi}) = \mathrm{KL}(\tilde{P},P^{C}\circ \pi) = \mathrm{KL}(\tilde{P} \circ \pi,P^{C}\circ \pi) = \mathrm{KL}(\tilde{P},P^{C}) =  
g(C).$$
Finally, by the convexity of $g(\cdot)$ proved in Lemma~\ref{lem:convexity_to_cost} and Jensen's inequality, we have $g(\bar{C}) = g(|\Gamma|^{-1} \sum_{\pi \in \Gamma} C\circ \pi) \leq |\Gamma|^{-1} \sum_{\pi \in \Gamma} g(C\circ \pi) = g(C)$. From the very definition of $\bar{C}$, for all $\pi \in \Gamma$ and all $i,j,k\in[N]$, $(\bar{C}\circ \pi)_{ijk} = \bar{C}_{\pi(i)\pi(j)\pi(k)} = |\Gamma|^{-1} \sum_{\pi' \in \Gamma} (C\circ \pi')_{\pi(i)\pi(j)\pi(k)} = |\Gamma|^{-1} \sum_{\pi' \in \Gamma} C_{\pi'(\pi(i))\pi'(\pi(j))\pi'(\pi(k))} = |\Gamma|^{-1} \sum_{\pi'' \in \Gamma} C_{\pi''(i)\pi''(j)\pi''(k)} = |\Gamma|^{-1} \sum_{\pi'' \in \Gamma} (C\circ \pi'')_{ijk} = \bar{C}_{ijk}$. Therefore, $\bar{C}\circ \pi = \bar{C}$ completing the proof.
\end{proof}

\subsection{Proof of Lemma \ref{lem:achieve_homo}}
\label{apd: proof of lem:achieve_homo}
\begin{proof} 
    Consider the set $\tilde{\mathcal{C}} = \left\{ C \in \mathbb{R}^{N\times N\times N} \,:\, C_{ijk} = \psi\!\left(z_i^\top z_j - z_i^\top z_k\right), \ z_i \in \mathbb{R}^d,\ \|z_i\|=1 \right\}$. First, let us note that every $C \in \tilde{C}$ depends on the sample representations $z_1, \ldots, z_N$ only through their $N \times N$  gram matrix $A$, where for all $i, j = 1, \ldots, N$,  $A_{ij} := z_i^\top z_j$. Since any gram matrix is symmetric and positive semi-definite, we have   $A=A^\top, A\succeq 0$. Since all representations have unit norm, for all $i = 1, \ldots, N$, $A_{ii} = 1$, i.e.,  $\operatorname{diag}(A)=\mathbf{1}$.

    For any unit-norm representations $z_1, \ldots, z_N$, their gram matrix $A$ belongs to the set $\mathcal E_N := \left\{ A\in {\mathbb R}^{N\times N}: A = A^\top, A\succeq 0,\  \operatorname{diag}(A)=\mathbf1\right\}$ which is a convex set (being the intersection of the convex set of all positive semi-definite matrices and the convex set of all symmetric matrices having a unit diagonal) and is referred to as an elliptope in convex optimization. Conversely, every $A\in\mathcal E_N$ admits a factorization $A=Z^\top Z$, where the columns of Z are unit-norm vectors in ${\mathbb R}^{\operatorname{rank}(A)}$. Thus, $\mathcal E_N$ is precisely the dimension-unrestricted Gram-matrix feasible set. It is a non-empty convex set.
    
    For an affine function $\psi(t)=at+b$, the finite entries of the cost tensor satisfy $C_{ijk}(A) = \frac{a}{\tau} (A_{ij} - A_{ik}) + b$. Hence, for any $A_1, A_2 \in \mathcal{E}_N$ and any $\lambda \in [0,1]$, we have $(1-\lambda) C(A_1)+ \lambda C(A_2) = C(A_\lambda)$ where $A_\lambda := (1-\lambda) A_1 + \lambda A_2$ belongs to  $\mathcal{E}_N$ since $\mathcal{E}_N$ is a convex set. Therefore, $\tilde{C}$ is a convex set. 
    
    For any $C\in \tilde{\mathcal{C}}$ and any $\pi\in \Gamma$, the permuted tensor $C\circ \pi$ also belongs to $\tilde{\mathcal{C}}$. Hence, when $\psi$ is affine, by the convexity of $\tilde{C}$, $|\Gamma|^{-1} \sum_{\pi \in \Gamma} C\circ \pi \in \tilde{\mathcal{C}}$. In the other way, an optimal solution $C^{\bar{\theta}}$ is fully permutation-invariant (or $\Gamma$-invariant) by \ref{lem:achieve_sym}. Let $C^{\bar{\theta}}$ be realizable by a $\{z_i\}_{i=1}^N$. We will prove class-homogeneous configuration on it. Let $\mathcal{I}_{\mathfrak{a}}=\left\{i: y_i=\mathfrak{a}\right\}$ be class $\mathfrak{a}$. Fix an anchor $i \in \mathcal{I}_{\mathfrak{a}}$. We have these following comments:
\begin{itemize}
    \item  All within-class similarities from anchor $i$ are equal. Indeed; take $j, k \in \mathcal{I}_{\mathfrak{a}} \backslash\{i\}, j \neq k$. Swapping $j$ and $k$ is a permutation in $\Gamma$, since $C^{\bar{\theta}}$ is $\mathrm{\Gamma}$-invariant, 
    $$
        C^{\bar{\theta}}_{ijk} = C^{\bar{\theta}}_{ikj}
    $$
    or
    $$
        \psi\left(\frac{z_i^{\top} z_j-z_i^{\top} z_k}{\tau}\right) = \psi\left(\frac{z_i^{\top} z_k-z_i^{\top} z_j}{\tau}\right).
    $$
    Because $\psi$ is strictly monotone, it is injective. Hence $z_i^{\top} z_j-z_i^{\top} z_k=z_i^{\top} z_k-z_i^{\top} z_j$, so $z_i^{\top} z_j=z_i^{\top} z_k$. Thus for this anchor $i$, every within-class off-diagonal similarity is the same. Call it $\alpha_i$. 
    \item For any fixed other class $\mathfrak{b} \neq \mathfrak{a}$, all similarities from anchor $i$ to class $\mathfrak{b}$ are equal. Indeed; take $j, k \in I_{\mathfrak{b}}, j \neq k$. Again swapping $j, k$ lies in $\Gamma$, and
    $$
        C^{\bar{\theta}}_{ijk} = C^{\bar{\theta}}_{ikj}
    $$
    or
    $$
        \psi\left(\frac{z_i^{\top} z_j-z_i^{\top} z_k}{\tau}\right) = \psi\left(\frac{z_i^{\top} z_k-z_i^{\top} z_j}{\tau}\right)
    $$
    Injectivity of $\psi$ gives $z_i^{\top} z_j=z_i^{\top} z_k$. Hence for each foreign class $\mathfrak{c} \neq \mathfrak{a}$, anchor $i$ sees all samples in $I_\mathfrak{c}$ with one common similarity. Call it $\beta_{i, \mathfrak{c}} $.
    \item Take $j \in \mathcal{I}_{\mathfrak{a}} \backslash\{i\}, k \in I_{\mathfrak{c}}$ and $\ell \in I_{\mathfrak{d}}$, where $\mathfrak{c}, \mathfrak{d} \neq \mathfrak{a}$. A permutation in $\Gamma$ that exchanges class $\mathfrak{c}$ and class $\mathfrak{d}$ while fixing class $\mathfrak{a}$ sends $(i, j, k)$ to $(i, j, \ell)$. Therefore
    $$
    C^{\bar{\theta}}_{i j k}=C^{\bar{\theta}}_{i j l}
    $$
    Then
    $$
    \psi\left(\frac{\alpha_i-\beta_{i, \mathfrak{c}} }{\tau}\right)=\psi\left(\frac{\alpha_i-\beta_{i, \mathfrak{d}} }{\tau}\right) .
    $$
    Injectivity of $\psi$ implies $\beta_{i, \mathfrak{c}} =\beta_{i, \mathfrak{d}} $. So for anchor $i$, every cross-class similarity is the same. Call it $\beta_i$. Thus, for fixed anchor $i \in \mathcal{I}_{\mathfrak{a}}$,
    $$
    z_i^{\top} z_j= \begin{cases}\alpha_i, & y_j = \mathfrak{a}, j \neq i \\ \beta_i, & y_j \neq \mathfrak{a}\end{cases}
    $$
\end{itemize}
Now we show that $\alpha_i$ and $\beta_i$ are actually global constants. Indeed; first, if $i, i^{\prime} \in \mathcal{I}_{\mathfrak{a}}$, then
$$
\alpha_i=z_i^{\top} z_{i'}=z_{i'}^{\top} z_i=\alpha_{i^{\prime}}
$$
So $\alpha_i$ is constant within each class. Write that value as $\alpha^{(\mathfrak{a})}$. Next, take $i \in \mathcal{I}_{\mathfrak{a}}$ and $k \in I_{\mathfrak{b}}, a \neq b$. Then we have
$$
\beta_i = z_i^{\top} z_k= z_k^{\top} z_i=\beta_k,
$$
then all $\beta_i$ are equal. Call the common value $\beta$.
Finally, all admissible triples belong to one $\Gamma$-orbit. Hence $C^{\bar{\theta}}_{i j k}$ is the same for every $(i, j, k) \in \mathcal{A}$. For any admissible triple with anchor in class $\mathfrak{a}$,
$$
C^{\bar{\theta}}_{i j k}=\psi\left(\frac{\alpha^{(\mathfrak{a})}-\beta}{\tau}\right) .
$$
Since this is independent of the class $\mathfrak{a}$, injectivity of $\psi$ implies
$$
\alpha^{(\mathfrak{a})}=\alpha \quad \text { for all } \mathfrak{a} .
$$
Therefore
$$
z_i^{\top} z_j= \begin{cases}1, & i=j \\ \alpha, & i \neq j, y_i=y_j, \\ \beta, & y_i \neq y_j .\end{cases}
$$
So the minimizer lies in the two-distance class-homogeneous family. We complete the proof.
\end{proof}

\subsection{Proof of Theorem \ref{thm:nc-neg-mmot}}
\label{apd: proof for thm:nc-neg-mmot}
\begin{lemma}[NC/ETF for Neg-MMIOT with class-homogeneous regime of lemma \ref{lem:achieve_homo}]\label{lem:nc_for_homo}
Given the embeddings satisfy the \emph{class-homogeneous (two-distance) symmetry}:
there exist scalars $\alpha,\beta\in[-1,1]$ such that for all $i\neq j$,
\[
\langle z_i,z_j\rangle=
\begin{cases}
\alpha, & y_i=y_j,\\
\beta, & y_i\neq y_j.
\end{cases}
\tag{$\star$}
\]
Then, among all embeddings satisfying $(\star)$ and $\|z_i\|=1$, every global minimizer of Neg-MMIOT
must satisfy
\[
\alpha^\star=1,\qquad \beta^\star=-\frac{1}{K-1}.
\]
Consequently, within-class collapse holds (all samples in a class share the same representation),
and the $K$ class vectors $\{\mu_\mathfrak{c}\}_{\mathfrak{c}=1}^K$ form a \emph{simplex ETF}:
\[
\|\mu_\mathfrak{c}\|_2=1,\qquad \sum_{\mathfrak{c}=1}^K \mu_\mathfrak{c}=0,\qquad 
\langle \mu_\mathfrak{c},\mu_\mathfrak{c'}\rangle = -\frac{1}{K-1}\ \ \forall \mathfrak{c} \neq \mathfrak{c'}.
\]
\end{lemma}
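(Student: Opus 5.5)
The plan is to reduce the objective to a one-variable function of $\Delta:=\alpha-\beta$, show that it is strictly decreasing in $\Delta$, and then maximize $\Delta$ over the pairs $(\alpha,\beta)$ that are realizable by unit vectors.

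\emph{Step 1: the cost tensor under $(\star)$.} With $\psi(t)=-t$, each finite entry is $C_{ijk}=-(\langle z_i,z_j\rangle-\langle z_i,z_k\rangle)/\tau$, and under $(\star)$ it depends only on the class pattern of $(i,j,k)$.
\begin{itemize}
\item If $(i,j,k)\in\mathcal{A}$, then $C_{ijk}=-\Delta/\tau$.
\item If $(i,j,k)$ lies in the swapped set $\mathcal{A}':=\{(i,j,k):(i,k,j)\in\mathcal{A}\}$, then $C_{ijk}=+\Delta/\tau$.
\item On every other distinct triple, both inner products equal $\alpha$, or both equal $\beta$, so $C_{ijk}=0$.
\end{itemize}
Hence $C=\Delta\, D$ on $[N]^3_{\neq}$, where $D:=\tau^{-1}(\mathbf{1}_{\mathcal{A}'}-\mathbf{1}_{\mathcal{A}})$ is fixed. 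The objective is therefore a function $G(\Delta):=g(\Delta D)$ of $\Delta$ alone.

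\emph{Step 2: monotonicity in $\Delta$.} I will use the identity derived in the proof of Lemma~\ref{lem:convexity_to_cost}:
\[
\varepsilon\, g(C)=\langle C,\tilde P\rangle-\varepsilon H(\tilde P)-\min_{P}\big[\langle C,P\rangle-\varepsilon H(P)\big].
\]
The minimizer $P^C$ is unique, so by Danskin's/envelope theorem
\[
\varepsilon\, G'(\Delta)=\langle D,\tilde P\rangle-\langle D,P^{\Delta D}\rangle .
\]
Since $\tilde P$ is supported on $\mathcal{A}$, we have $\langle D,\tilde P\rangle=-1/\tau$. Writing $P=P^{\Delta D}$, this gives
\[
\varepsilon\tau\, G'(\Delta)=-1-P(\mathcal{A}')+P(\mathcal{A}).
\]
The entropic plan has the Gibbs form $e^{-(C+\lambda_i+\mu_j+\nu_k)/\varepsilon}$ noted after Lemma~\ref{lem:convexity_to_cost}, so it is strictly positive on all of $[N]^3_{\neq}$. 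Thus $P(\mathcal{A}')>0$ and $P(\mathcal{A})\le 1$, which gives $G'(\Delta)<0$. So $G$ is strictly decreasing, and any global minimizer must maximize $\Delta$ over the feasible set.

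\emph{Step 3: the feasible set of $(\alpha,\beta)$ (the main step).} Feasibility means that the Gram matrix
\[
A=(1-\alpha)I+(\alpha-\beta)\,\mathrm{blockdiag}(J_b,\dots,J_b)+\beta J_N
\]
is PSD; any such $A$ factors as $Z^\top Z$ with unit columns, though not necessarily in $\mathbb{R}^d$. Its eigenvalues are:
\begin{itemize}
\item $1-\alpha$, on vectors summing to zero within every class;
\item $1-\alpha+b\Delta$, on class-constant vectors with zero total sum;
\item $1+(b-1)\alpha+b(K-1)\beta$, on $\mathbf{1}$.
\end{itemize}
The first and third eigenvalue conditions give $\alpha\le1$ and $\beta\ge-\frac{1+(b-1)\alpha}{b(K-1)}$. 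Therefore
\[
\Delta\le \alpha+\frac{1+(b-1)\alpha}{b(K-1)},
\]
and the right-hand side is strictly increasing in $\alpha$. Its maximum is attained only at $\alpha=1$ with $\beta=-1/(K-1)$. At that point the middle eigenvalue is $bK/(K-1)>0$, so the pair is feasible. It is realizable in $\mathbb{R}^d$ because $\mathrm{rank}(A)=K-1\le d$ by A3. Hence the maximizer is unique, and Step 2 gives $\alpha^\star=1$ and $\beta^\star=-1/(K-1)$.

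\emph{Step 4: the NC/ETF conclusions.} Since $\alpha^\star=1$, for $i\ne j$ in the same class $\|z_i-z_j\|^2=2-2\alpha^\star=0$, so each class collapses to one unit vector $\mu_\mathfrak{c}$. For $\mathfrak{c}\ne\mathfrak{c}'$, $\langle\mu_\mathfrak{c},\mu_{\mathfrak{c}'}\rangle=\beta^\star=-\frac1{K-1}$. Finally,
\[
\Big\|\sum_\mathfrak{c}\mu_\mathfrak{c}\Big\|^2=K+K(K-1)\Big(-\tfrac1{K-1}\Big)=0,
\]
so $\sum_\mathfrak{c}\mu_\mathfrak{c}=0$ and the class means form a simplex ETF.

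I expect the main care to be needed in Step 2: one must check that the envelope theorem applies to the min over a fixed compact set with a unique, smoothly varying minimizer, and that $P^C$ has full support on $[N]^3_{\neq}$, which is what makes the inequality strict.
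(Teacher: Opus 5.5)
Your proof is correct. Its skeleton matches the paper's: reduce to $\Delta=\alpha-\beta$, show strict monotonicity, then maximize $\Delta$ over realizable pairs. Step~2, however, is argued differently.

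The paper computes the inner plan explicitly. It uses invariance of $C$ under the class/within-class permutation group, which acts transitively on indices, to conclude that the dual potentials are constant. Hence $P^C\propto e^{-C/\varepsilon}$ on distinct triples, and $\mathrm{KL}(\tilde P\|P^C)=-\log m(\Delta)$, where $m(\Delta)$ is the explicit Gibbs mass of $\mathcal{A}$ built from the three cost levels $\psi(\Delta/\tau)$, $\psi(-\Delta/\tau)$, $\psi(0)$. Monotonicity is then read off this closed form.

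You instead differentiate the variational identity from the proof of Lemma~\ref{lem:convexity_to_cost} along the ray $C=\Delta D$ (Danskin). You get $\varepsilon\tau G'(\Delta)=-1-P(\mathcal{A}')+P(\mathcal{A})<0$, using only that the entropic plan has full support on distinct triples.

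Your route never has to identify $P^C$, so it does not need the paper's ``all multipliers equal'' step. That step strictly needs a short extra argument, since potentials are fixed only up to an additive gauge. The cleanest justification is that the normalized Gibbs kernel already has uniform marginals, by transitivity.

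In return, the paper's route gives an explicit formula for the loss. It also works verbatim for any strictly decreasing $\psi$, with no affinity or differentiability assumed. Yours uses that $C$ is affine in $\Delta$. For nonlinear $\psi$ you would need a chain rule, differentiability of $\psi$, and some care to keep the inequality strict.

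In Step~3 the paper uses two separate bounds: $\alpha\le 1$ from Cauchy--Schwarz, and $\beta\ge -1/(K-1)$ from positive semidefiniteness of the $K\times K$ Gram matrix of one representative per class. These already force the unique maximizer $(1,-1/(K-1))$. Your full spectrum of the $N\times N$ Gram matrix gives the exact feasible region, including a slightly sharper lower bound on $\beta$ when $\alpha<1$. It also certifies feasibility and the rank $K-1$ of the optimum directly.

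Both arguments tacitly use $b\ge 2$ and $K\ge 2$. These are needed anyway for $\mathcal{A}$ and $\mathcal{A}'$ to be nonempty.
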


\paragraph{Remark.} The simplex ETF conclusion requires the ambient dimension condition $d \geq K-1$. Indeed, the Gram matrix at ETF configuration is

$$
\mathcal{G}=\frac{K}{K-1} I_K-\frac{1}{K-1} \mathbf{1_K}{\mathbf{1_K}}^{\top}
$$
which has eigenvalues $K /(K-1)$ with multiplicity $K-1$ and 0 with multiplicity 1 . Therefore $\operatorname{rank}(\mathcal{G})=K-1$. Since $\mathcal{G}=\mu \mu^{\top}$ for $\mu=\left[\mu_1^{\top}, \ldots, \mu_K^{\top}\right]^{\top} \in \mathbb{R}^{K \times d}$, we must have $K-1= \operatorname{rank}(\mathcal{G}) \leq d$. Conversely, when $d \geq K-1$, such a configuration is realized by a regular simplex in $\mathbb{R}^{K-1}$, embedded into $\mathbb{R}^d$. Hence the ETF equality case is achievable if and only if $d \geq K-1$.

Moreover, lemma \ref{lem:nc_for_homo} is the only step where the sign of monotonicity of $\psi$ matters. The proof uses the two-distance class-homogeneous ansatz, unit-norm embeddings, the balanced setting with uniform marginals, and $\varepsilon>0$. Under these assumptions, the inner OT solution depends only on the three cost levels $\psi(\Delta / \tau), \psi(-\Delta / \tau)$, and $\psi(0)$, where $\Delta=\alpha-\beta$. The strict decrease of $\psi$ is then used to show that the outer KL objective is strictly decreasing in $\Delta$. Mere injectivity is not enough for this monotonicity step. The bound $\beta \geq-1 /(K-1)$ uses only positive semidefiniteness of the class-representative Gram matrix, whereas the equality case $\beta=-1 /(K-1)$ requires the ambient-dimension condition $d \geq K-1$ in order to realize a regular simplex ETF. Thus $d \geq K-1$ is needed only for achievability of the ETF equality case, not for the monotonicity argument or the PSD lower bound. No convexity or differentiability of $\psi$, and no conditional-independence assumption, is used in this lemma.
\begin{proof}
Let $\Delta:=\alpha-\beta$. For distinct $i,j,k$:

\emph{(i)} If $(i,j,k)\in\mathcal{A}$ then $y_j=y_i$ and $y_k\neq y_i$, hence
$\langle z_i,z_j\rangle-\langle z_i,z_k\rangle=\alpha-\beta=\Delta$ and
\[
C^\theta_{ijk}=\psi(\Delta/\tau).
\]

\emph{(ii)} If $y_j\neq y_i$ and $y_k=y_i$,
then $\langle z_i,z_j\rangle-\langle z_i,z_k\rangle=\beta-\alpha=-\Delta$ and
\[
C^\theta_{ijk}=\psi(-\Delta/\tau).
\]

\emph{(iii)} Otherwise 
\[
C^\theta_{ijk}=\psi(0).
\]
Consider the inner objective. Form the Lagrangian with multipliers $(\lambda_i)_{i=1}^N$, $(\mu_j)_{j=1}^N$, $(\nu_k)_{k=1}^N$ for the three marginals constraints:
\begin{align*}
\mathcal{L}(P,\lambda,\mu, \nu)
= &
\sum_{i,j,k}\Big(C^\theta_{ijk}P_{ijk}+\varepsilon P_{ijk}(\log P_{ijk}-1)\Big)
+\sum_i \lambda_i\Big(\tfrac1N-\sum_{j,k}P_{ijk}\Big)
\\ & +\sum_j \mu_j\Big(\tfrac1N-\sum_{i,k}P_{ijk}\Big) \notag  +\sum_k \nu_k\Big(\tfrac1N-\sum_{i,j}P_{ijk}\Big).
\end{align*}
For any $(i,j,k)$, stationarity gives
\[
0=\frac{\partial \mathcal{L}}{\partial P_{ijk}}
= C^\theta_{ijk}+\varepsilon\log P_{ijk}-\lambda_i-\mu_j - \nu_k,
\]
hence
\[
P^{C^\theta}_{ijk}=\exp\!\Big(\frac{\lambda_i+\mu_j + \nu_k-C^\theta_{ijk}}{\varepsilon}\Big)
\quad \forall (i,j,k).
\tag{1}
\]
Because the constraints in feasible set are uniform in $i$, in $j$ and in $k$, meanwhile the cost tensor $C$ depends only on whether labels agree/disagree in the symmetric way above, the inner problem is invariant under the group $G$ of permutations that permute class labels, and permute samples within each class accordingly. By uniqueness of $P^{C^\theta}$, we must have $P^{C^\theta}$ invariant under every such permutation. Therefore all $\lambda_i$ must be equal (same orbit), and all $\mu_j$ must be equal, all $\nu_k$ must be equal. Plugging this into the KKT form gives:
$$
P^{C^\theta}_{i j k} \propto \exp \left(-C^\theta_{i j k} / \varepsilon\right) .
$$
and due to $\sum_{i,j,k} P^{C^\theta}_{ijk} = 1$
\[
P^{C^\theta}_{ijk}=\frac{e^{-C^\theta_{ijk}/\varepsilon}}{\sum_{(i,j,k)} e^{-C^\theta_{ijk}/\varepsilon}} = 
\begin{cases}
\frac{e^{-\psi(\Delta/\tau)/\varepsilon}}{\sum_{(i,j,k)} e^{-C^\theta_{ijk}/\varepsilon}}, & (i,j,k)\in\mathcal{A},\\[4pt]
\frac{e^{-\psi(-\Delta/\tau)/\varepsilon}}{\sum_{(i,j,k)} e^{-C^\theta_{ijk}/\varepsilon}}, & (i,j,k)\in\mathcal{A}^{\mathrm{op}} := \{(i,j,k): i,j,k \text{ are distinct }; y_i \neq y_j; y_i = y_k \}\\[4pt]
\frac{e^{-\psi(0)/\varepsilon}}{\sum_{(i,j,k)} e^{-C^\theta_{ijk}/\varepsilon}} 
& (i,j,k)\in\mathcal{R} := \text{ otherwise},
\end{cases},
\]
Now we compute the outer KL. Since $\widetilde P$ is uniform on $\mathcal{A}$,
\[
\mathrm{KL}(\widetilde P\|P^{C^\theta})
=\sum_{(i,j,k)\in\mathcal{A}}\frac{1}{|\mathcal{A}|}
\log\frac{1/|\mathcal{A}|}{P^{C^\theta}_{ijk}}
=
\log\Big(\frac{\sum_{(i,j,k)} e^{-C^\theta_{ijk}/\varepsilon}}{|\mathcal{A}|e^{-\psi(\Delta/\tau)/\varepsilon}}\Big).
\]
Let $m(\Delta):=\sum_{(i,j,k)\in\mathcal{A}}P^{C^\theta}_{ijk}
=|\mathcal{A}|\frac{e^{-\psi(\Delta/\tau)/\varepsilon}}{\sum_{(i,j,k)} e^{-C^\theta_{ijk}/\varepsilon}}.$
Then it becomes
\[
\mathrm{KL}(\widetilde P\|P^{C^\theta}) = -\log m(\Delta).
\]
Because $\psi$ is strictly decreasing, $\psi(\Delta/\tau)$ strictly decreases with $\Delta$,
so $e^{-\psi(\Delta/\tau)/\varepsilon}$ strictly increases.
Similarly $\psi(-\Delta/\tau)$ strictly increases with $\Delta$, so $e^{-\psi(-\Delta/\tau)/\varepsilon}$ strictly decreases. Then,
\[
m(\Delta)=\frac{|\mathcal{A}|e^{-\psi(\Delta/\tau)/\varepsilon}}{|\mathcal{A}|e^{-\psi(\Delta/\tau)/\varepsilon} + |\mathcal{A}^{\mathrm{op}}|e^{-\psi(-\Delta/\tau)/\varepsilon} + |\mathcal{R}|e^{-\psi(0)/\varepsilon}}
\]
is strictly increasing. Therefore,
$\mathrm{KL}(\widetilde P\|P^{C^\theta})$ is strictly decreasing in $\Delta$.
So the global minimizer of KL is exactly the maximizer of $\Delta=\alpha-\beta$.

We have $\alpha\le 1$ since $\langle z_i,z_j\rangle\le \|z_i\|\,\|z_j\|=1$.
For $\beta$, pick one representative $I_{\mathfrak{c}}$ from each class $\mathfrak{c}$.
Then the $K\times K$ Gram matrix $\mathcal{G}$ of $\{z_{I_{\mathfrak{c}}}\}_{\mathfrak{c}=1}^K$ has $1$ on the diagonal and $\beta$ off-diagonal:
\[
\mathcal{G}=(1-\beta)I_K + \beta\,\mathbf{1}\mathbf{1}^\top.
\]
This matrix must be PSD, so all eigenvalues are nonnegative. The eigenvalues are
$1-\beta$ (mult.\ $K-1$) and $1+(K-1)\beta$ (mult.\ $1$), hence $1+(K-1)\beta\ge 0$ and
\[
\beta \ge -\frac{1}{K-1}.
\]
Therefore
\[
\Delta=\alpha-\beta \le 1-\Big(-\frac{1}{K-1}\Big)=\frac{K}{K-1}.
\]
This upper bound is achievable by taking class vectors $\{\mu_\mathfrak{c}\}_{\mathfrak{c}=1}^K$ to be a regular simplex in $\mathbb{R}^{K-1}$
(i.e. $\langle \mu_\mathfrak{c},\mu_\mathfrak{c'}\rangle=-\frac{1}{K-1}$ for $\mathfrak{c} \neq \mathfrak{c'}$ and $\|\mu_\mathfrak{c}\|=1$),
and setting every sample in class $\mathfrak{c}$ equal to $\mu_\mathfrak{c}$.
Hence the global minimizer has
\[
\alpha^\star=1,\qquad \beta^\star=-\frac{1}{K-1}.
\]
Then for any $i\neq j$ with $y_i=y_j$,
$\langle z_i,z_j\rangle=1$ with $\|z_i\|=\|z_j\|=1$ implies $z_i=z_j$ (equality in Cauchy--Schwarz).
Thus all samples in class $\mathfrak{c}$ collapse to some unit vector $\mu_\mathfrak{c}$.

The between-class inner product is $\beta^\star=-\frac{1}{K-1}$, so for $\mathfrak{c} \neq \mathfrak{c'}$,
$\langle \mu_\mathfrak{c},\mu_\mathfrak{c'}\rangle=-\frac{1}{K-1}$.
Then
\[
\Big\|\sum_{\mathfrak{c}=1}^K \mu_\mathfrak{c}\Big\|^2
=\sum_{\mathfrak{c}=1}^K\|\mu_\mathfrak{c}\|^2 + \sum_{\mathfrak{c} \neq \mathfrak{c'}}\langle \mu_\mathfrak{c},\mu_\mathfrak{c'}\rangle
=K + K(K-1)\Big(-\frac{1}{K-1}\Big)=0,
\]
hence $\sum_{\mathfrak{c}=1}^K \mu_\mathfrak{c}=0$.
Therefore $\{\mu_\mathfrak{c}\}_{\mathfrak{c}=1}^K$ is a simplex ETF.
\end{proof}

\subsection{Proof of Proposition \ref{prop: stationary}}
We start with the proof for the Neg-MMIOT-CL method. From the proof in Appendix \ref{apd: proof for lem:convexity_to_cost}, we have 
$$
\varepsilon ~g(C) = [\langle C, \tilde{P}\rangle-\varepsilon H(\tilde{P})] - \min_{P \in \Pi(\mathbf{u}_N,\mathbf{u}_N,\mathbf{u}_N) \cap \Delta([N]^3_{\neq})}
  \Big[\langle C,P\rangle-\varepsilon\,H(P)\Big]
$$
Differentiating this identity with respect to $C$ using Danskin's Theorem, we get
$$
\nabla_C\,g(C)
=
\frac1\varepsilon(\widetilde P-P^C).$$
%\tncomment{Is that $P^{C^{\theta}}$ in the above equation, oh we changed to $P^{C}$ but somehow in proof of Theorem 3.4 we still use $P^{C^{\theta}}$.}
Thus, for the embedding-dependent cost $C(z)$
$$\frac{\partial g}{\partial C_{ijk}}
=
\frac1\varepsilon
\bigl(\widetilde P_{ijk}-P^C_{ijk}\bigr).$$
For distinct $i, j, k$,
$$
C_{i j k}(z)=\psi\left(\frac{z_i^{\top} z_j-z_i^{\top} z_k}{\tau}\right) .
$$
Let
$$
u_{i j k}(z)=\frac{z_i^{\top} z_j-z_i^{\top} z_k}{\tau} .
$$
Then
$$
C_{i j k}(z)=\psi\left(u_{i j k}(z)\right) .
$$
Differentiate $C_{i j k}$ with respect to a particular vector $z_r$.
If $r=i$, then
$$
\nabla_{z_r} C_{r j k}=\frac{1}{\tau} \psi^{\prime}\left(u_{r j k}\right)\left(z_j-z_k\right) .
$$
If $r=j$, then
$$
\nabla_{z_r} C_{i r k}=\frac{1}{\tau} \psi^{\prime}\left(u_{i r k}\right) z_i .
$$
If $r=k$, then
$$
\nabla_{z_r} C_{i j r}=-\frac{1}{\tau} \psi^{\prime}\left(u_{i j r}\right) z_i .
$$
And if $r \neq i,j,k$ then the gradient should be $0$. Therefore,

\begin{align*}
\nabla_{z_r} g(z) & =\sum_{j, k} \frac{1}{\varepsilon \tau}\left(\widetilde{P}_{r j k}-P_{r j k}^C\right) \psi^{\prime}\left(u_{r j k}\right)\left(z_j-z_k\right) \\
& +\sum_{i, k} \frac{1}{\varepsilon \tau}\left(\widetilde{P}_{i r k}-P_{i r k}^C\right) \psi^{\prime}\left(u_{i r k}\right) z_i \\
& -\sum_{i, j} \frac{1}{\varepsilon \tau}\left(\widetilde{P}_{i j r}-P_{i j r}^C\right) \psi^{\prime}\left(u_{i j r}\right) z_i .
\end{align*}

For distinct triples, define

$$
\rho_{i j k}(z)=\frac{1}{\varepsilon \tau}\left(\widetilde{P}_{i j k}-P_{i j k}^C\right) \psi^{\prime}\left(\frac{z_{i}^{\top} z_j-z_{i}^{\top} z_k}{\tau}\right),
$$

and set $\rho_{i j k}=0$ if $i, j, k$ are not all distinct.
Then

$$
\nabla_{z_r} g(z)=\sum_{j, k} \rho_{r j k}(z)\left(z_j-z_k\right)+\sum_{i, k} \rho_{i r k}(z) z_i-\sum_{i, j} \rho_{i j r}(z) z_i .
$$
Now the mean-dynamic flow is:
$$\frac{d}{dt} z_r
=
-
P_{z_r}^{\perp}\bigl(\nabla_{z_r}g(z)\bigr)
$$ where $\frac{d}{dt} z_r$ denotes the gradient descent of $z_r$ respect to time/iteration $t$, and $P_{z_r}^{\perp}$ is the orthogonal projection at $z_r$.
We prove that NC/ETF is stationary as it makes $\frac{d}{dt} z_r = 0,  \forall r$. Indeed, 
fix $r$, and let $c=y_r$. At the NC/ETF configuration, every vector $z_i$ is one of the class vectors $\mu_1, \ldots, \mu_K$. Therefore the gradient $\nabla_{z_r} g(z)$ is a linear combination of class vectors:

$$
\nabla_{z_r} g(z)=a_c \mu_c+\sum_{d \neq c} a_d \mu_d .
$$

Due to $\widetilde{P}$ being uniform on the admissible triples, at a highly symmetric ETF configuration, $C$ inherits the same label-permutation symmetry, so $P^C$, and $\rho_{i j k}(z)$. Therefore, all classes $d \neq c$ contribute symmetrically. Hence, the coefficient $a_d$ is the same for every $d \neq c$. Then

$$
\nabla_{z_r} g(z)=a_c \mu_c+a_d \sum_{d \neq c} \mu_d .
$$
Using $\sum_{d=1}^K \mu_d=0,$ we have $\sum_{d \neq c} \mu_d=-\mu_c$. Therefore
$$
\nabla_{z_r} g(z)=a_c \mu_c-a_d \mu_c=\left(a_c-a_d\right) \mu_c = \left(a_c-a_d\right) z_r.
$$
But $z_r=\mu_c$. Thus $\nabla_{z_r} g(z)$ is parallel to $z_r$. Hence its tangent projection is $P_{z_r}^{\perp}\left(\nabla_{z_r} g(z)\right) = 0$.

This completes the proof for Neg-MMIOT-CL. Following the same line of reasoning for Neg-IOT-CL-PushPull, we consider 

$$
g_{\mathrm{PP}} (C)=\mathrm{KL}\left(\widetilde{P}_{+} \| P_{+}^C\right)+\mathrm{KL}\left(\widetilde{P}_{-} \| P_{-}^C\right) .
$$

Danskin's theorem gives, on the allowed off-diagonal support,

$$
\nabla_C g_{\mathrm{PP}} (C)=\frac{\widetilde{P}_{+}-P_{+}^C}{\epsilon_{+}}+\frac{P_{-}^C-\widetilde{P}_{-}}{\epsilon_{-}}
$$

At a balanced NC/ETF configuration, the targets, costs, and both Sinkhorn plans are invariant to within-class permutations and class permutations. Therefore, for a sample in class $c$, its embedding gradient has the form

$$
a \mu_c+b \sum_{d \neq c} \mu_d=(a-b) \mu_c
$$

because $\sum_d \mu_d=0$. Its spherical tangent projection is consequently zero. Then we complete the full proof. Moreover, within the two-distance class-homogeneous family and for strictly decreasing $\psi$, both terms favor increasing within-class similarity and decreasing across-class similarity, yielding $\alpha=1$ and $\beta=-1 /(K-1)$

\section{Experimental setup and metrics}\label{app:exp_setup}
\subsection{Synthetic shared gaussian mixture models setup}
We generate a synthetic dataset following a shared Gaussian Mixture Model (GMM) structure in \cite{bansal2024understanding}. Specifically, it creates \(K\) equally weighted Gaussian components in a \(d\)-dimensional space, each containing \(k_{\text{per\_class}}\) samples. The class means lie in a \((K-1)\)-dimensional subspace whose sum is zero, ensuring the dataset is centered. All components share the same covariance: isotropic with unit variance in the mean subspace and scaled by a factor \(\kappa\) in the orthogonal complement, producing a ``parallel-pancakes'' geometry. Consequently, the parameter \(\kappa\) controls the spread orthogonal to the subspace—small values yield flatter, well-separated clusters, while large values produce thicker, overlapping ones. This setup provides a controlled, high-dimensional benchmark for testing representation learning or optimal transport algorithms. For the specific experimental settings, we set $K \in \{10,20,50\}$; $k_{\text{per\_class}} = 50$; $d = 100$ and $\kappa = 5$ (representing a relatively large value of $\kappa$). For the encoder, we use a very \textit{simple encoder} using a lightweight three-layer MLP. The batch sampler is taken as in Algorithm~\ref{alg:4}. Algorithm~\ref{alg:4} is used only in the controlled synthetic experiment, where exact class balance is needed to directly test the theorem. It is not used in the vision results later; those experiments use the same ordinary shuffled mini-batch construction for every method.

In the results of the main text, $\psi$ is taken to be the $\psi(t) = -t$. We also report the results when using the negative-log-sigmoid function $\psi(t) = \ln(1+e^{-t})$ in the next section of the appendix. The hyperparameters are taken as follows: $\varepsilon = 0.5$, the number of Sinkhorn iterations is $n_{it} = 10$, and the learning rate is $0.0008$. All other parameters for BYOL and VicReg follow those specified in their original papers. Details about the metrics:

\begin{enumerate}
    \item Neural Collapse
        \begin{itemize}
            \item \textbf{NC1: Variability Collapse vs epochs}:
            $ \sum_{\mathfrak{c}} \sum_{x \in \mathcal{I}_\mathfrak{c}} (x - \mu_\mathfrak{c})^2$.
            \item \textbf{NC2: Class Mean Geometry vs epochs} \\
            \[
            \text{std}\left( \left\{ \cos(\hat{\mu}_\mathfrak{c}, \hat{\mu}_{\mathfrak{c'}}) \right\} \right), \quad \text{avg}\left( \left| \cos(\hat{\mu}_\mathfrak{c}, \hat{\mu}_{\mathfrak{c'}}) + \frac{1}{K-1} \right| \right)  : \mathfrak{c} \neq \mathfrak{c'}
            \]
        \end{itemize}
    \item Dimensional Collapse
        \begin{itemize}
            \item \textbf{Class Mean Covariance Spectrum of the last Training Epochs} \\ 
            \[
            \Lambda_1, \Lambda_2, \dots, \Lambda_{K-1} = \text{eig}\left( \frac{\mu^\top \mu}{K-1} \right)
            \]
            where $\mu$ is the matrix of centered class means.
        \end{itemize}
\end{enumerate}

\subsection{SCL and UCL setups}
The encoder either is a ResNet encoder (ResNet18/34/50 from torchvision with the classifier removed, global average pooling, a linear projection head to $n_\text{classes}$, and $\ell_2$ normalization) or a ViT encoder (ViT-B/16 from torchvision, using \(16 \times 16\) image patches, with the original classifier replaced by a linear projection head and \(\ell_2\)-normalized outputs). Kaiming initialization is used; when a pretrained ResNet is selected, only the new head is reinitialized (the added head uses $\mathcal{N}(0,0.01)$ weights and zero bias). $\psi(t)$ is taken to be linear function $\psi(t) = -t$. The main-run defaults are number of epochs $=100$ for MNIST and SVHN and $=400$ for CIFAR-10, CIFAR-100 and Tiny-ImageNet, temperature $\tau=0.1$, $\varepsilon = \varepsilon_{+} = \varepsilon_{-} = 0.1$, Sinkhorn iterations $=10$, batch size $=256$, optimizer Adam with learning rate $5\times10^{-3}$ and weight decay $10^{-5}$. ResNet and ViT inputs use dataset-specific normalization (CIFAR-10/100, SVHN, or ImageNet stats for Tiny-ImageNet), with pretrained backbones resized to $224\times224$ and ImageNet normalization. Linear probing uses AdamW with lr $10^{-3}$, epochs $=500$, batch size $=256$, weight decay $0$, and kNN uses $k=20$ with cosine distance.

For UCL, the SimCLR-style two-crop pipeline applies RandomResizedCrop (scale $(0.2,1.0)$), RandomHorizontalFlip, ColorJitter$(0.4,0.4,0.4,0.1)$ with $p=0.8$, RandomGrayscale with $p=0.2$, GaussianBlur (kernel size $3$, $\sigma\in[0.1,2.0]$) with $p=0.5$, and normalization; MNIST instead uses RandomResizedCrop + RandomRotation$(15^\circ)$ before normalization. Dataset normalization uses CIFAR-10/100, SVHN, MNIST, or ImageNet statistics, and pretrained backbones resize inputs to $224\times224$ with ImageNet normalization. Positive pairs are formed by duplicated indices across the two views within each batch.

\subsection{CLIP setup}
Contrastive pretraining has been especially impactful in multimodal settings, where the goal is to align representations across modalities (e.g., images and text) while keeping mismatched pairs separable. The CLIP family (Contrastive Language-Image Pre-Training) \citep{radford2021learning} demonstrated that large-scale contrastive image--text pretraining can yield strong zero-shot transfer, catalyzing substantial follow-up work on scaling and improving vision--language pretrained models \citep{jia2021scaling, li2022blip}. From a statistical perspective, \citep{oko2025statistical} argue that optimizing a contrastive objective can make the learned embeddings approximate sufficient statistics for the image--text relationship. Furthermore, recent evidence \citep{mehta2025generalization} indicates that CLIP's zero-shot performance can depend strongly on the inference-time prompt distribution, and can be improved by using diverse prompts that better match the implicit ``caption dialect'' of the pretraining data.

In the setting of CLIP, $\{x_i\}_{i=1}^{N}$, $\{x'_j\}_{j=1}^{M}$ are two features sets of two modality which have pairing relationships (e.g: image and caption) and $Z=\{z_i\}_{i=1}^{N} = \{f_\theta(x_i)\}_{i=1}^{N}$ and $Z'=\{z'_j\}_{j=1}^{M} = \{g_\theta(x'_j)\}_{j=1}^{M}$ are their the embedding produced by two encoders $f_\theta$ and $g_\theta$ on hypersphere with parameter $\theta$. We try to learn $\theta$ so that a matched image-caption pair lands close together in the same vector space, while mismatched pairs land far apart. Vanilla CLIP uses two types of InfoNCE loss: one for images and one for text. (i.e. $\mathcal{L}_{\text{objective}}
=
\frac{1}{2}\left(\mathcal{L}_{\text{infoNCE-img}}+\mathcal{L}_{\text{infoNCE-text}}\right)$), so each image is trained to identify its true caption among all captions in the batch, and each caption is trained to identify its true image among all images in the batch. Meanwhile, from the OT-based perspective, we can directly extend the formulations in \ref{eq:neg_mmot_final} and \ref{eq:clot_final} to the multimodal setting—specifically, to CLIP—since the underlying framework can be interpreted as an unconstrained feature model and the only thing to do is which encoder embeds which features externally.

For implementation, the image encoder is a Modified ResNet-50 (layers [3,4,6,3], width 64) with an attention pooling head to produce 1024‑dim image embeddings at 224px resolution, while the text encoder is a causal Transformer with token and positional embeddings (context length 77, vocab size 49,408, width 512, 8 heads, 6 layers) projecting into the same 1024‑dim joint space for CL. Also, $\psi(t) = -t$, number of epochs $=35$, temperature $\tau=0.1$, $\varepsilon = \varepsilon_{+} = \varepsilon_{-} = 0.1$, Sinkhorn iterations $=10$, batch size $=64$, optimizer Adam with learning rate $5\times10^{-4}$ and weight decay $0.1$.

In the retrieval task, we fix an image and retrieve its matching caption based on similarity in the representation space (and vice versa, fixing a caption to retrieve the corresponding image). 

% \newpage
\section{Experimental results} \label{app:add_exp_results}
\subsection{Synthetic data}
\noindent\textbf{Visualization of representation features by tSNE projection on $\mathbb{R}^2$ when $\psi(t) = -t$}

\begin{figure}[!htp]
    \centering
    \includegraphics[width=1\linewidth]{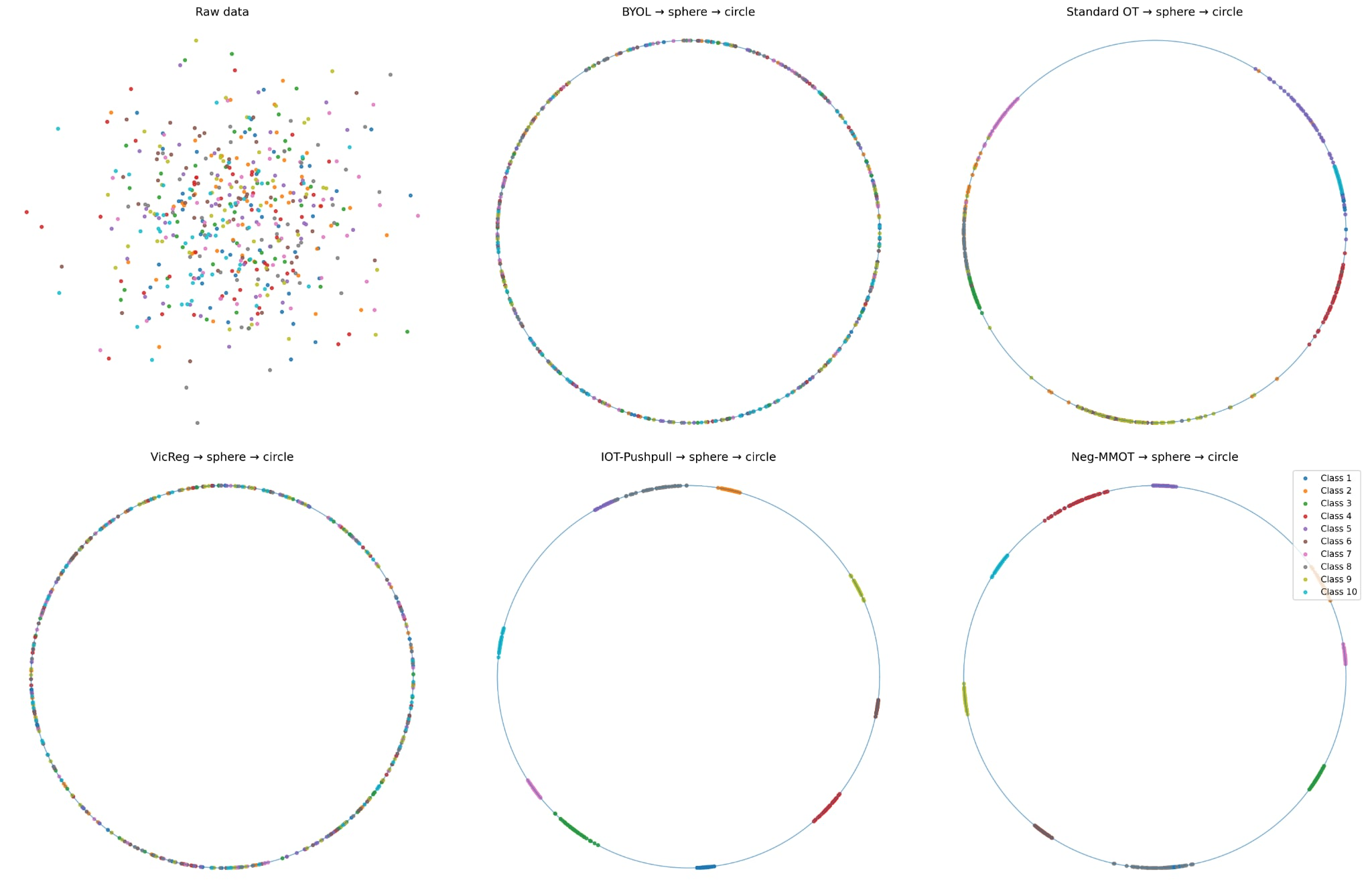}
    \caption{Visualization on $\mathbb{R}^2$. We use tSNE method for projection.}
    \label{fig:visual_final_synthetic}
\end{figure}

\noindent\textbf{Similar results for $\psi$ as negative log sigmoid}
\begin{figure}[!htp]
    \centering
    \includegraphics[width=0.8\linewidth]{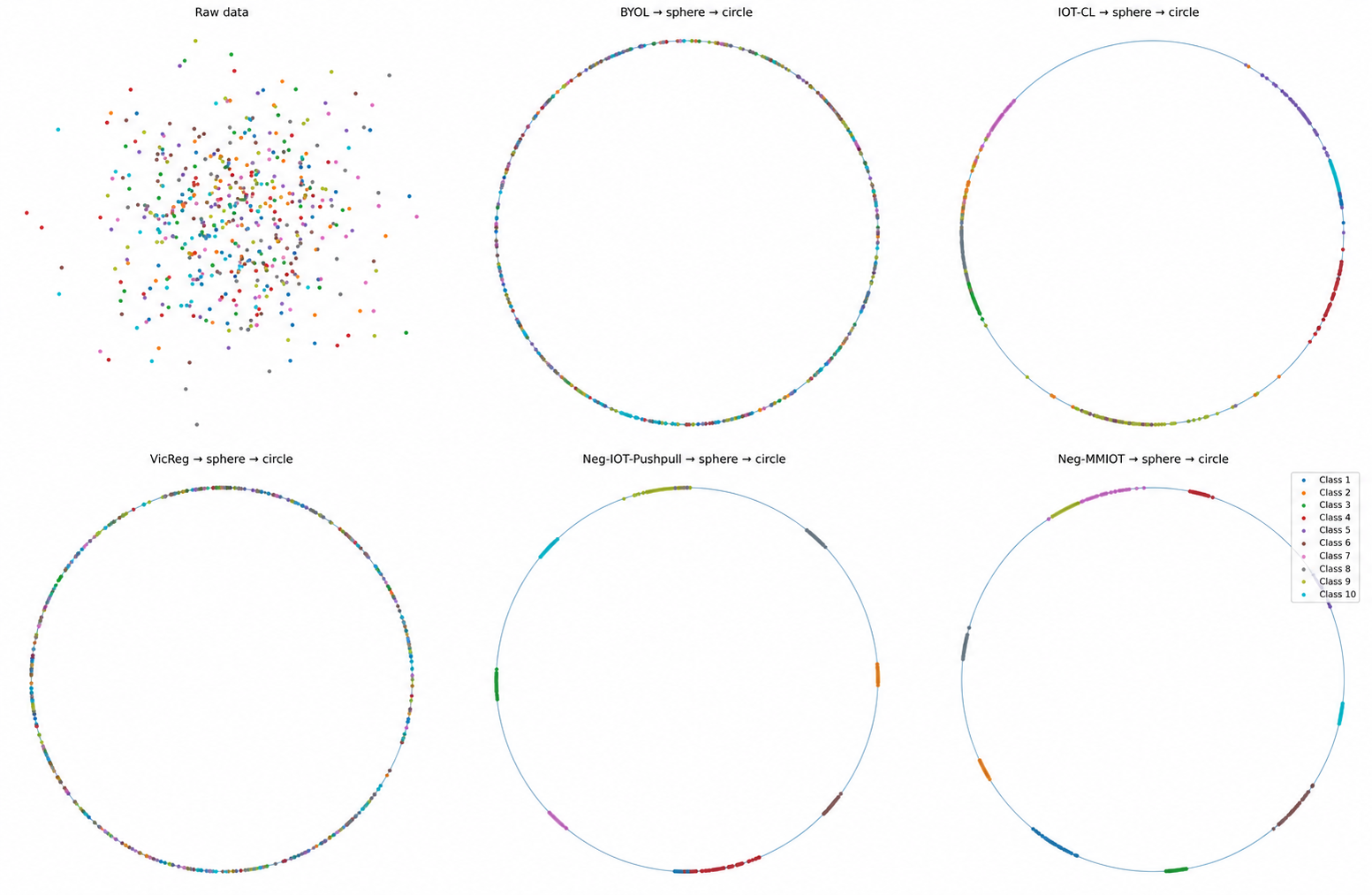}
    \caption{Visualization on $\mathbb{R}^2$. We use tSNE method for projection.}
    \label{fig:visual_final_synthetic}
\end{figure}

\begin{figure}[!htp]
    \centering
    \includegraphics[width=1\linewidth]{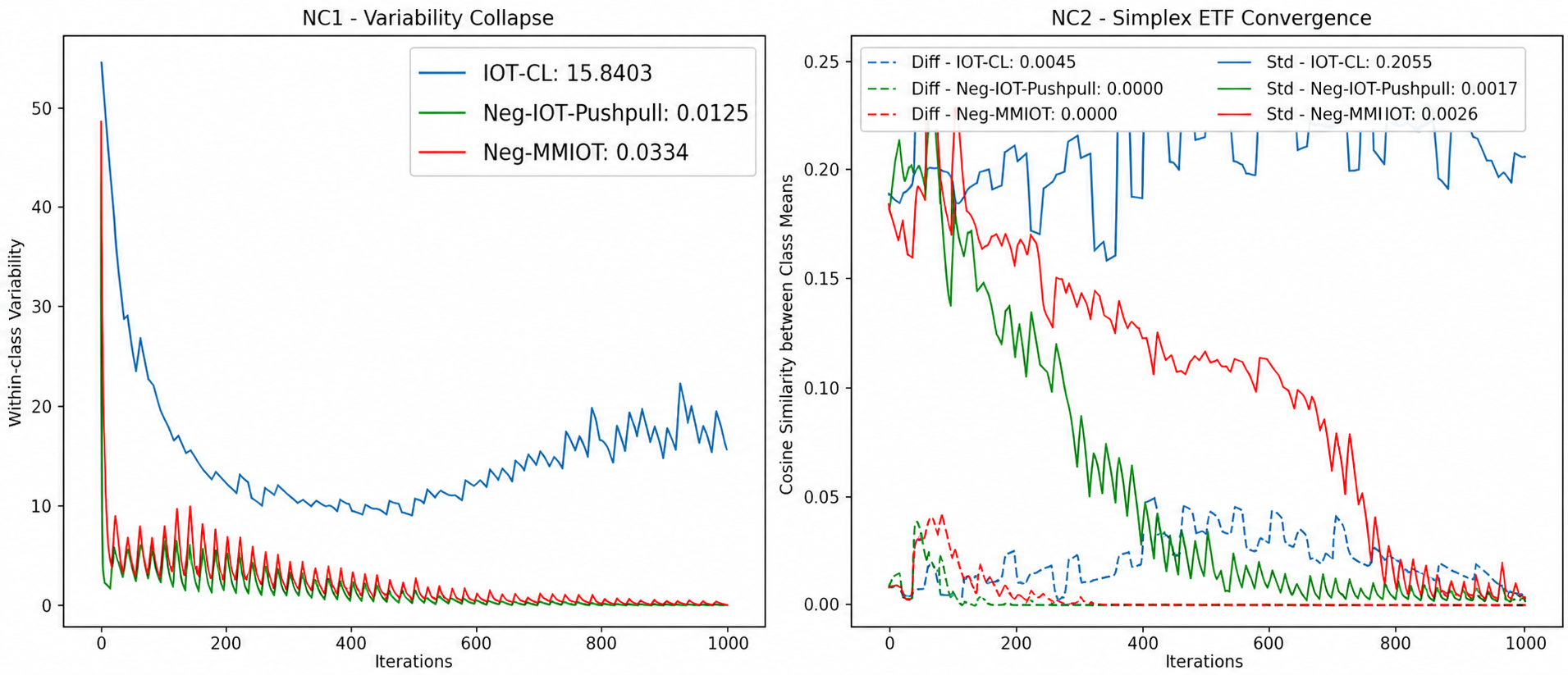}
    \caption{Neural Collapse Observation when $\psi$ is negative log sigmoid}
    \label{fig:nc_synthetic}
    \vspace{-0.2in}
\end{figure}
% \vglue -1cm
\vspace{-0.2in}
\begin{figure}[htp!]
    \centering
    \subfloat{\includegraphics[width=0.33\linewidth]{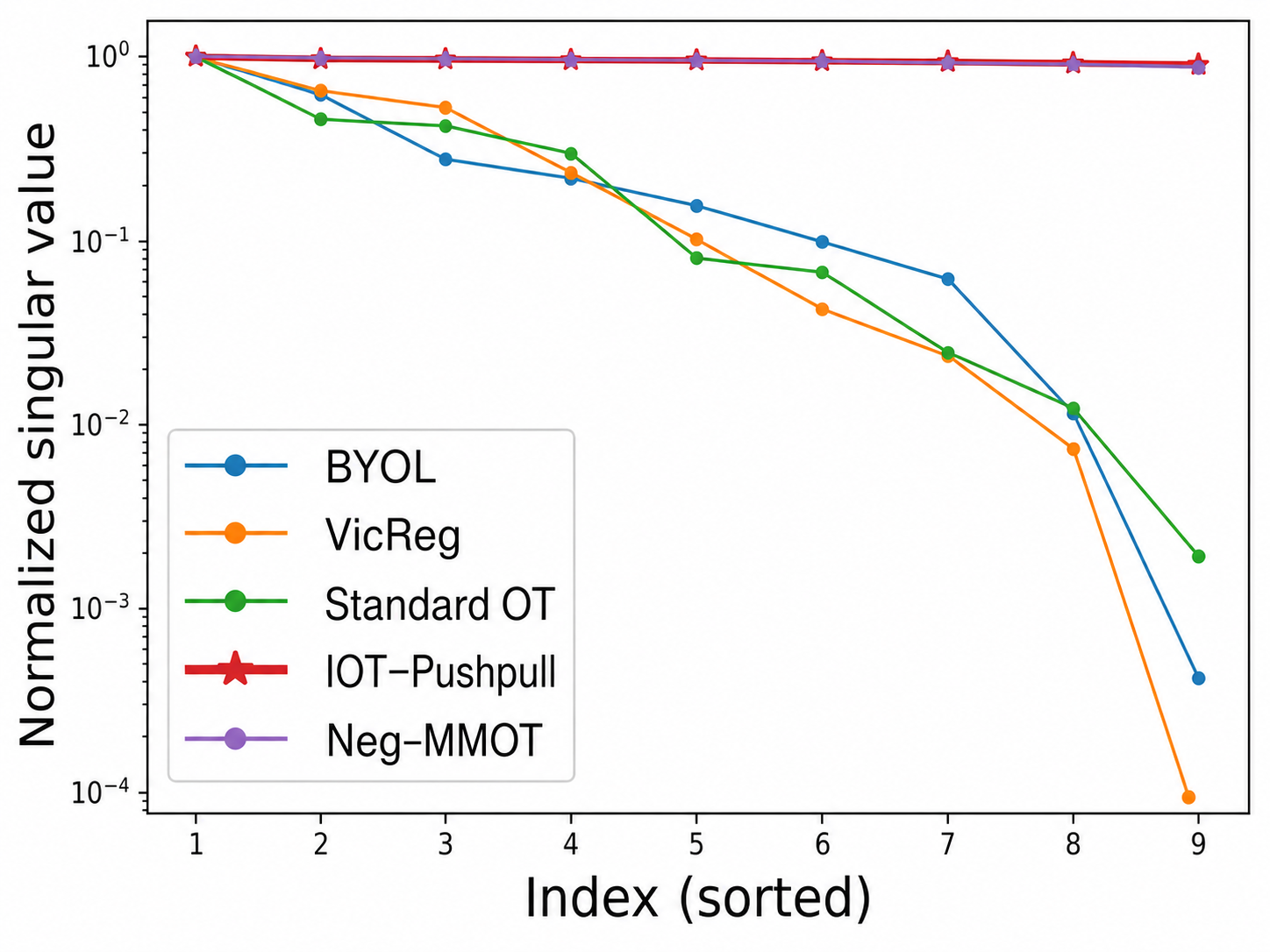}\label{rate}}
    \hfill
    \subfloat{\includegraphics[width=0.33\linewidth]{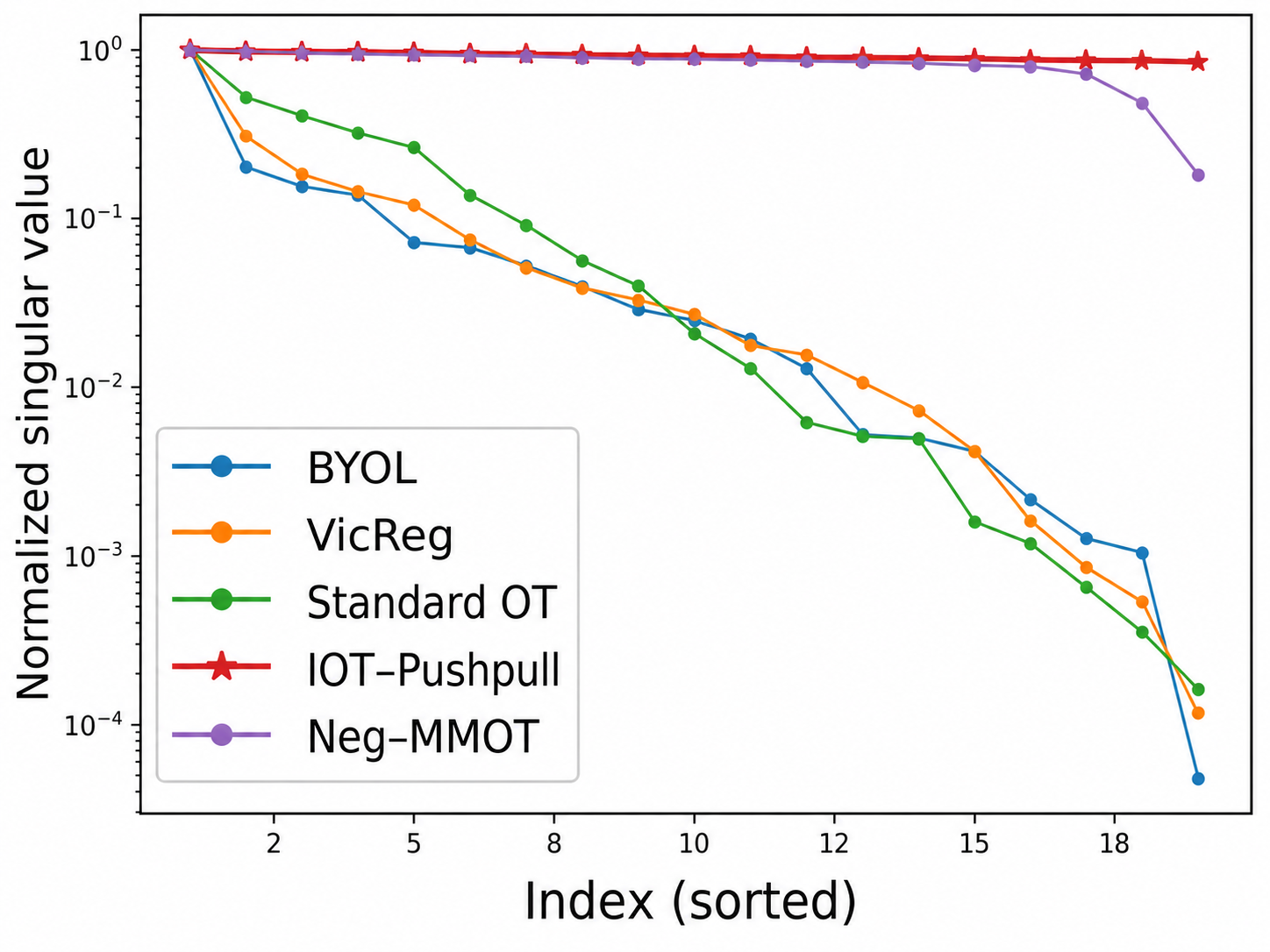}\label{rate}}
    \subfloat{\includegraphics[width=0.33\linewidth]{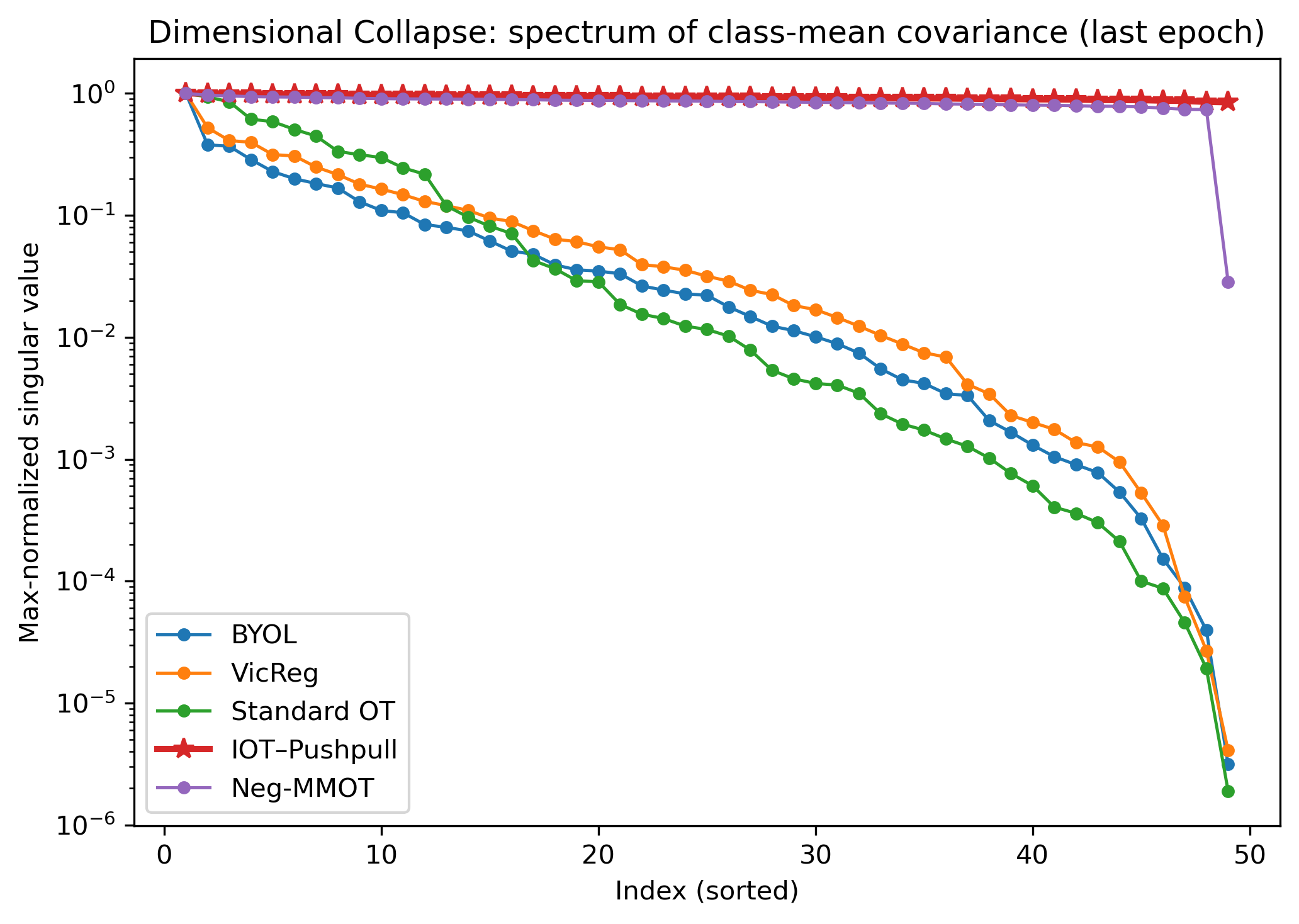}\label{rate}}
    \caption{Dimensional Collapse in the settings of different numbers of classes when $\psi$ is negative log sigmoid}
    \label{fig:dc_synthetic}
\end{figure}

\clearpage
\subsection{Vision benchmark} \label{app:detail_result_vision}
\subsubsection{SCL}
% ==================== MNIST ====================
\begin{table}[htp!]
\centering
\captionsetup{skip=15pt}
\renewcommand{\arraystretch}{1.6}
\resizebox{\linewidth}{!}{%
\begin{tabular}{|c|cc|cc|cc|cc|}
\hline
\multicolumn{1}{|c|}{SCL}
& \multicolumn{8}{c|}{\textbf{MNIST}} \\
\cline{2-9}
& \multicolumn{2}{c|}{ResNet18}
& \multicolumn{2}{c|}{ResNet34}
& \multicolumn{2}{c|}{ResNet50}
& \multicolumn{2}{c|}{ViT-B/16} \\
\cline{2-9}
& Linear & k-NN
& Linear & k-NN
& Linear & k-NN
& Linear & k-NN \\
\hline
infoNCE
& \synres{99.25}{0.10} & \synres{99.24}{0.13}
& \synres{99.27}{0.04} & \synres{99.27}{0.04}
& \synres{99.36}{0.07} & \synres{99.30}{0.12}
& \synres{99.49}{0.13} & \synres{99.46}{0.10} \\
\hline
InvaSpread
& \synres{99.27}{0.10} & \synres{99.03}{0.13}
& \synres{98.91}{0.08} & \synres{98.76}{0.13}
& \synres{98.98}{0.09} & \synres{99.17}{0.07}
& \synres{99.61}{0.08} & \synres{99.59}{0.03} \\
\hline
Standard OT
& \synres{99.22}{0.04} & \synres{99.20}{0.05}
& \synres{99.18}{0.10} & \synres{99.18}{0.13}
& \synres{99.21}{0.04} & \synres{99.17}{0.04}
& \synres{99.60}{0.05} & \synres{99.58}{0.09} \\
\hline
Neg-IOT-CL-PushPull (Ours)
& \synbest{99.33}{0.06} & \synbest{99.35}{0.10}
& \synbest{99.33}{0.11} & \synbest{99.31}{0.10}
& \synres{99.30}{0.13} & \synres{99.29}{0.07}
& \synres{99.64}{0.10} & \synres{99.64}{0.05} \\
\hline
Neg-MMIOT-CL (Ours)
& \synres{99.21}{0.09} & \synres{99.20}{0.10}
& \synres{99.08}{0.03} & \synres{99.02}{0.10}
& \synbest{99.41}{0.13} & \synbest{99.41}{0.14}
& \synbest{99.66}{0.11} & \synbest{99.67}{0.10} \\
\hline
\end{tabular}%
}
\caption{SCL results on MNIST with different backbones.}
\label{tab:scl-mnist}
\end{table}

% ==================== SVHN ====================
\begin{table}[htp!]
\centering
\captionsetup{skip=5pt}
\renewcommand{\arraystretch}{1.5}
\resizebox{\linewidth}{!}{%
\begin{tabular}{|c|cc|cc|cc|cc|}
\hline
\multicolumn{1}{|c|}{SCL}
& \multicolumn{8}{c|}{SVHN} \\
\cline{2-9}
& \multicolumn{2}{c|}{ResNet18}
& \multicolumn{2}{c|}{ResNet34}
& \multicolumn{2}{c|}{ResNet50}
& \multicolumn{2}{c|}{ViT-B/16} \\
\cline{2-9}
& Linear & k-NN
& Linear & k-NN
& Linear & k-NN
& Linear & k-NN \\
\hline
infoNCE
& \synres{91.76}{0.33} & \synres{91.71}{0.21}
& \synbest{91.73}{0.27} & \synbest{91.77}{0.43}
& \synres{91.96}{0.42} & \synres{91.94}{0.29}
& \synres{94.07}{0.38} & \synres{94.04}{0.37} \\
\hline
InvaSpread
& \synres{86.55}{0.28} & \synres{87.21}{0.36}
& \synres{82.65}{0.34} & \synres{85.94}{0.23}
& \synres{81.79}{0.30} & \synres{80.80}{0.30}
& \synres{85.35}{0.45} & \synres{84.78}{0.48} \\
\hline
Standard OT
& \synres{91.24}{0.32} & \synres{91.16}{0.18}
& \synres{91.58}{0.13} & \synres{91.61}{0.15}
& \synres{92.08}{0.25} & \synres{91.99}{0.31}
& \synres{94.24}{0.47} & \synres{94.12}{0.36} \\
\hline
Neg-IOT-CL-PushPull (Ours)
& \synres{91.03}{0.14} & \synres{90.99}{0.27}
& \synres{91.55}{0.27} & \synres{91.56}{0.13}
& \synres{91.89}{0.14} & \synres{91.83}{0.41}
& \synres{94.38}{0.33} & \synres{94.38}{0.35} \\
\hline
Neg-MMIOT-CL (Ours)
& \synbest{91.91}{0.32} & \synbest{91.96}{0.15}
& \synres{91.52}{0.47} & \synres{91.46}{0.28}
& \synbest{92.74}{0.46} & \synbest{92.71}{0.16}
& \synbest{94.87}{0.46} & \synbest{94.80}{0.19} \\
\hline
\end{tabular}%
}
\caption{SCL results on SVHN with different backbones. }
\label{tab:scl-svhn}
\end{table}

% ==================== CIFAR-10 ====================
\begin{table}[htp!]
\centering
\captionsetup{skip=5pt}
\renewcommand{\arraystretch}{1.5}
\resizebox{\linewidth}{!}{%
\begin{tabular}{|c|cc|cc|cc|cc|}
\hline
\multicolumn{1}{|c|}{SCL}
& \multicolumn{8}{c|}{CIFAR-10} \\
\cline{2-9}
& \multicolumn{2}{c|}{ResNet18}
& \multicolumn{2}{c|}{ResNet34}
& \multicolumn{2}{c|}{ResNet50}
& \multicolumn{2}{c|}{ViT-B/16} \\
\cline{2-9}
& Linear & k-NN
& Linear & k-NN
& Linear & k-NN
& Linear & k-NN \\
\hline
infoNCE
& \synres{85.01}{0.25} & \synres{84.82}{0.47}
& \synres{86.21}{0.27} & \synres{86.06}{0.29}
& \synbest{91.39}{0.34} & \synbest{90.98}{0.43}
& \synres{90.40}{0.38} & \synres{90.32}{0.46} \\
\hline
InvaSpread
& \synres{73.01}{0.42} & \synres{76.01}{0.49}
& \synres{75.24}{0.33} & \synres{75.59}{0.31}
& \synres{78.46}{0.40} & \synres{77.53}{0.47}
& \synres{72.62}{0.24} & \synres{72.41}{0.56} \\
\hline
Standard OT
& \synres{83.76}{0.30} & \synres{83.47}{0.34}
& \synres{83.29}{0.24} & \synres{84.40}{0.41}
& \synres{90.78}{0.55} & \synres{90.80}{0.48}
& \synres{84.72}{0.29} & \synres{84.55}{0.57} \\
\hline
Neg-IOT-CL-PushPull (Ours)
& \synres{82.15}{0.20} & \synres{82.66}{0.54}
& \synres{83.23}{0.20} & \synres{82.21}{0.36}
& \synres{89.59}{0.55} & \synres{89.62}{0.55}
& \synres{88.69}{0.53} & \synres{88.54}{0.16} \\
\hline
Neg-MMIOT-CL (Ours)
& \synbest{88.38}{0.40} & \synbest{88.21}{0.43}
& \synbest{89.39}{0.15} & \synbest{89.33}{0.47}
& \synres{90.64}{0.35} & \synres{90.53}{0.18}
& \synbest{91.97}{0.50} & \synbest{91.85}{0.53} \\
\hline
\end{tabular}%
}
\caption{SCL results on CIFAR-10 with different backbones. }
\label{tab:scl-CIFAR-10}
\end{table}

% ==================== CIFAR-100 ====================
\begin{table}[htp!]
\centering
\captionsetup{skip=5pt}
\renewcommand{\arraystretch}{1.5}
\resizebox{\linewidth}{!}{%
\begin{tabular}{|c|cc|cc|cc|cc|}
\hline
\multicolumn{1}{|c|}{SCL}
& \multicolumn{8}{c|}{CIFAR-100} \\
\cline{2-9}
& \multicolumn{2}{c|}{ResNet18}
& \multicolumn{2}{c|}{ResNet34}
& \multicolumn{2}{c|}{ResNet50}
& \multicolumn{2}{c|}{ViT-B/16} \\
\cline{2-9}
& Linear & k-NN
& Linear & k-NN
& Linear & k-NN
& Linear & k-NN \\
\hline
infoNCE
& \synres{69.72}{0.58} & \synres{69.46}{0.39}
& \synres{70.61}{0.33} & \synres{70.33}{0.50}
& \synres{74.01}{0.35} & \synres{73.56}{0.40}
& \synres{73.59}{0.33} & \synres{73.58}{0.34} \\
\hline
InvaSpread
& \synres{55.46}{0.53} & \synres{54.32}{0.81}
& \synres{64.84}{0.62} & \synres{63.10}{0.46}
& \synres{63.26}{0.81} & \synres{62.24}{0.34}
& \synres{67.08}{0.49} & \synres{66.68}{0.30} \\
\hline
Standard OT
& \synres{69.24}{0.56} & \synres{69.20}{0.58}
& \synres{70.53}{0.82} & \synres{70.41}{0.72}
& \synres{72.71}{0.30} & \synres{72.88}{0.34}
& \synres{73.49}{0.56} & \synres{73.59}{0.70} \\
\hline
Neg-IOT-CL-PushPull (Ours)
& \synres{69.26}{0.29} & \synres{69.46}{0.52}
& \synres{70.76}{0.76} & \synres{70.56}{0.74}
& \synres{74.80}{0.75} & \synres{74.19}{0.30}
& \synbest{74.67}{0.69} & \synbest{74.83}{0.67} \\
\hline
Neg-MMIOT-CL (Ours)
& \synbest{71.88}{0.46} & \synbest{71.59}{0.50}
& \synbest{72.15}{0.43} & \synbest{71.88}{0.29}
& \synbest{75.17}{0.75} & \synbest{75.17}{0.40}
& \synres{74.62}{0.54} & \synres{74.53}{0.81} \\
\hline
\end{tabular}%
}
\caption{SCL results on CIFAR-100 with different backbones. }
\label{tab:scl-CIFAR-100}
\end{table}

% ==================== Tiny-ImageNet ====================
\begin{table}[htp!]
\centering
\captionsetup{skip=15pt}
\renewcommand{\arraystretch}{1.6}
\resizebox{\linewidth}{!}{%
\begin{tabular}{|c|cc|cc|cc|cc|}
\hline
\multicolumn{1}{|c|}{SCL}
& \multicolumn{8}{c|}{\textbf{Tiny-ImageNet}} \\
\cline{2-9}
& \multicolumn{2}{c|}{ResNet18}
& \multicolumn{2}{c|}{ResNet34}
& \multicolumn{2}{c|}{ResNet50}
& \multicolumn{2}{c|}{ViT-B/16} \\
\cline{2-9}
& Linear & k-NN
& Linear & k-NN
& Linear & k-NN
& Linear & k-NN \\
\hline
infoNCE
& \synres{54.56}{0.65} & \synres{54.34}{0.55}
& \synres{55.02}{0.60} & \synres{54.52}{0.54}
& \synres{62.60}{0.62} & \synbest{63.30}{1.00}
& \synres{46.32}{0.86} & \synres{46.37}{0.68} \\
\hline
InvaSpread
& \synres{51.84}{0.84} & \synres{51.26}{0.42}
& \synres{51.20}{0.44} & \synres{49.98}{0.38}
& \synres{55.06}{0.67} & \synres{53.72}{0.56}
& \synres{44.19}{0.91} & \synres{43.33}{0.82} \\
\hline
Standard OT
& \synres{54.64}{0.86} & \synres{53.92}{0.61}
& \synres{55.44}{0.36} & \synres{55.08}{0.92}
& \synres{61.20}{0.83} & \synres{60.84}{0.36}
& \synres{45.27}{0.80} & \synres{45.19}{0.41} \\
\hline
Neg-IOT-CL-PushPull (Ours)
& \synbest{57.28}{0.74} & \synbest{56.88}{0.47}
& \synbest{56.74}{0.80} & \synbest{56.38}{0.79}
& \synres{62.04}{0.62} & \synres{61.42}{1.02}
& \synres{46.54}{0.54} & \synres{46.58}{0.80} \\
\hline
Neg-MMIOT-CL (Ours)
& \synres{56.00}{1.02} & \synres{55.80}{0.40}
& \synres{56.02}{0.96} & \synres{55.92}{0.44}
& \synbest{63.54}{0.81} & \synres{63.20}{0.40}
& \synbest{48.02}{0.79} & \synbest{47.96}{0.94} \\
\hline
\end{tabular}%
}
\caption{SCL results on Tiny-ImageNet with different backbones. }
\label{tab:scl-tiny-imagenet}
\end{table}

\subsubsection{UCL}
% ==================== MNIST ====================
\begin{table}[htp!]
\centering
\captionsetup{skip=15pt}
\renewcommand{\arraystretch}{1.6}
\resizebox{\linewidth}{!}{%
\begin{tabular}{|c|cc|cc|cc|cc|}
\hline
\multicolumn{1}{|c|}{UCL}
& \multicolumn{8}{c|}{\textbf{MNIST}} \\
\cline{2-9}
& \multicolumn{2}{c|}{ResNet18}
& \multicolumn{2}{c|}{ResNet34}
& \multicolumn{2}{c|}{ResNet50}
& \multicolumn{2}{c|}{ViT-B/16} \\
\cline{2-9}
& Linear & k-NN
& Linear & k-NN
& Linear & k-NN
& Linear & k-NN \\
\hline
infoNCE
& \synres{97.63}{0.08} & \synres{98.56}{0.05}
& \synres{97.43}{0.09} & \synres{98.22}{0.05}
& \synres{97.73}{0.08} & \synres{98.20}{0.05}
& \synres{98.40}{0.16} & \synres{98.82}{0.12} \\
\hline
InvaSpread
& \synres{98.00}{0.18} & \synres{97.37}{0.18}
& \synres{97.42}{0.05} & \synres{96.93}{0.10}
& \synres{97.65}{0.15} & \synres{96.89}{0.06}
& \synres{97.95}{0.14} & \synres{97.70}{0.13} \\
\hline
Standard OT
& \synres{97.70}{0.09} & \synres{98.07}{0.06}
& \synbest{97.60}{0.13} & \synbest{98.39}{0.18}
& \synres{97.20}{0.06} & \synres{98.09}{0.11}
& \synres{98.35}{0.05} & \synres{98.74}{0.13} \\
\hline
Neg-IOT-CL-PushPull (Ours)
& \synbest{97.94}{0.11} & \synbest{98.62}{0.08}
& \synres{97.49}{0.13} & \synres{98.31}{0.05}
& \synres{96.56}{0.14} & \synres{98.07}{0.13}
& \synres{98.61}{0.14} & \synres{98.91}{0.12} \\
\hline
Neg-MMIOT-CL (Ours)
& \synres{97.60}{0.05} & \synres{98.43}{0.16}
& \synres{97.52}{0.16} & \synres{98.19}{0.17}
& \synbest{97.73}{0.18} & \synbest{98.42}{0.08}
& \synbest{98.72}{0.11} & \synbest{99.03}{0.06} \\
\hline
\end{tabular}%
}
\caption{UCL results on MNIST with different backbones. }
\label{tab:ucl-mnist}
\end{table}

% ==================== SVHN ====================
\begin{table}[htp!]
\centering
\captionsetup{skip=5pt}
\renewcommand{\arraystretch}{1.5}
\resizebox{\linewidth}{!}{%
\begin{tabular}{|c|cc|cc|cc|cc|}
\hline
\multicolumn{1}{|c|}{UCL}
& \multicolumn{8}{c|}{SVHN} \\
\cline{2-9}
& \multicolumn{2}{c|}{ResNet18}
& \multicolumn{2}{c|}{ResNet34}
& \multicolumn{2}{c|}{ResNet50}
& \multicolumn{2}{c|}{ViT-B/16} \\
\cline{2-9}
& Linear & k-NN
& Linear & k-NN
& Linear & k-NN
& Linear & k-NN \\
\hline
infoNCE
& \synres{77.44}{0.60} & \synres{81.21}{0.39}
& \synres{77.70}{0.46} & \synres{81.86}{0.63}
& \synres{78.13}{0.36} & \synres{83.00}{0.43}
& \synres{84.67}{0.57} & \synres{86.24}{0.21} \\
\hline
InvaSpread
& \synres{74.36}{0.53} & \synres{71.49}{0.37}
& \synres{72.78}{0.46} & \synres{69.47}{0.47}
& \synres{73.48}{0.63} & \synres{70.53}{0.38}
& \synres{75.13}{0.38} & \synres{73.88}{0.33} \\
\hline
Standard OT
& \synres{74.32}{0.29} & \synres{79.42}{0.59}
& \synres{75.65}{0.52} & \synres{80.12}{0.61}
& \synres{75.96}{0.37} & \synres{79.69}{0.38}
& \synres{83.94}{0.33} & \synres{85.76}{0.23} \\
\hline
Neg-IOT-CL-PushPull (Ours)
& \synres{79.28}{0.25} & \synres{81.79}{0.44}
& \synres{79.42}{0.43} & \synres{81.38}{0.34}
& \synres{78.63}{0.55} & \synres{80.85}{0.38}
& \synres{87.45}{0.30} & \synres{87.02}{0.26} \\
\hline
Neg-MMIOT-CL (Ours)
& \synbest{85.26}{0.48} & \synbest{86.49}{0.42}
& \synbest{85.88}{0.41} & \synbest{87.75}{0.51}
& \synbest{84.65}{0.50} & \synbest{88.09}{0.46}
& \synbest{88.61}{0.22} & \synbest{89.28}{0.40} \\
\hline
\end{tabular}%
}
\caption{UCL results on SVHN with different backbones. }
\label{tab:ucl-svhn}
\end{table}

% ==================== CIFAR-10 ====================
\begin{table}[htp!]
\centering
\captionsetup{skip=5pt}
\renewcommand{\arraystretch}{1.5}
\resizebox{\linewidth}{!}{%
\begin{tabular}{|c|cc|cc|cc|cc|}
\hline
\multicolumn{1}{|c|}{UCL}
& \multicolumn{8}{c|}{CIFAR-10} \\
\cline{2-9}
& \multicolumn{2}{c|}{ResNet18}
& \multicolumn{2}{c|}{ResNet34}
& \multicolumn{2}{c|}{ResNet50}
& \multicolumn{2}{c|}{ViT-B/16} \\
\cline{2-9}
& Linear & k-NN
& Linear & k-NN
& Linear & k-NN
& Linear & k-NN \\
\hline
infoNCE
& \synres{75.30}{0.65} & \synres{76.00}{0.57}
& \synres{75.45}{0.30} & \synres{76.47}{0.51}
& \synres{76.81}{0.52} & \synres{76.69}{0.62}
& \synres{84.57}{0.53} & \synres{85.84}{0.61} \\
\hline
InvaSpread
& \synres{67.27}{0.45} & \synres{65.83}{0.74}
& \synres{65.92}{0.38} & \synres{64.54}{0.65}
& \synres{64.88}{0.52} & \synres{63.69}{0.30}
& \synres{64.13}{0.49} & \synres{63.28}{0.64} \\
\hline
Standard OT
& \synres{75.33}{0.62} & \synres{75.27}{0.34}
& \synres{75.94}{0.72} & \synres{76.77}{0.62}
& \synres{78.12}{0.57} & \synres{77.67}{0.35}
& \synres{83.22}{0.43} & \synres{86.28}{0.35} \\
\hline
Neg-IOT-CL-PushPull (Ours)
& \synbest{77.43}{0.25} & \synbest{76.05}{0.52}
& \synbest{77.02}{0.65} & \synbest{76.34}{0.77}
& \synbest{78.18}{0.63} & \synbest{77.89}{0.29}
& \synbest{87.56}{0.23} & \synbest{86.96}{0.36} \\
\hline
Neg-MMIOT-CL (Ours)
& \synres{75.89}{0.68} & \synres{74.89}{0.53}
& \synres{75.53}{0.36} & \synres{74.69}{0.33}
& \synres{77.01}{0.52} & \synres{76.41}{0.62}
& \synres{86.15}{0.72} & \synres{85.95}{0.61} \\
\hline
\end{tabular}%
}
\caption{UCL results on CIFAR-10 with different backbones. }
\label{tab:ucl-CIFAR-10}
\end{table}

% ==================== CIFAR-100 ====================
\begin{table}[htp!]
\centering
\captionsetup{skip=5pt}
\renewcommand{\arraystretch}{1.5}
\resizebox{\linewidth}{!}{%
\begin{tabular}{|c|cc|cc|cc|cc|}
\hline
\multicolumn{1}{|c|}{UCL}
& \multicolumn{8}{c|}{CIFAR-100} \\
\cline{2-9}
& \multicolumn{2}{c|}{ResNet18}
& \multicolumn{2}{c|}{ResNet34}
& \multicolumn{2}{c|}{ResNet50}
& \multicolumn{2}{c|}{ViT-B/16} \\
\cline{2-9}
& Linear & k-NN
& Linear & k-NN
& Linear & k-NN
& Linear & k-NN \\
\hline
infoNCE
& \synres{48.34}{0.32} & \synres{49.16}{0.31}
& \synres{48.09}{0.44} & \synres{48.60}{0.48}
& \synres{49.05}{0.75} & \synres{49.51}{0.79}
& \synres{56.99}{0.89} & \synres{55.19}{0.37} \\
\hline
InvaSpread
& \synres{40.37}{0.77} & \synres{40.63}{0.82}
& \synres{44.23}{0.57} & \synres{45.45}{0.55}
& \synres{44.51}{0.44} & \synres{45.83}{0.44}
& \synres{44.17}{0.49} & \synres{39.59}{0.36} \\
\hline
Standard OT
& \synres{46.15}{0.38} & \synres{46.04}{0.31}
& \synres{46.35}{0.88} & \synres{45.54}{0.41}
& \synres{45.54}{0.91} & \synres{45.60}{0.84}
& \synres{55.46}{0.62} & \synres{54.21}{0.35} \\
\hline
Neg-IOT-CL-PushPull (Ours)
& \synbest{51.81}{0.79} & \synbest{48.41}{0.73}
& \synbest{51.04}{0.40} & \synbest{48.13}{0.44}
& \synbest{52.20}{0.71} & \synbest{49.13}{0.52}
& \synres{55.45}{0.88} & \synres{54.52}{0.91} \\
\hline
Neg-MMIOT-CL (Ours)
& \synres{49.15}{0.51} & \synres{47.30}{0.47}
& \synres{48.24}{0.53} & \synres{46.57}{0.65}
& \synres{49.27}{0.61} & \synres{47.73}{0.68}
& \synbest{57.52}{0.60} & \synbest{55.44}{0.48} \\
\hline
\end{tabular}%
}
\caption{UCL results on CIFAR-100 with different backbones. }
\label{tab:ucl-CIFAR-100}
\end{table}

% ==================== Tiny-ImageNet ====================
\begin{table}[htp!]
\centering
\captionsetup{skip=15pt}
\renewcommand{\arraystretch}{1.6}
\resizebox{\linewidth}{!}{%
\begin{tabular}{|c|cc|cc|cc|cc|}
\hline
\multicolumn{1}{|c|}{UCL}
& \multicolumn{8}{c|}{\textbf{Tiny-ImageNet}} \\
\cline{2-9}
& \multicolumn{2}{c|}{ResNet18}
& \multicolumn{2}{c|}{ResNet34}
& \multicolumn{2}{c|}{ResNet50}
& \multicolumn{2}{c|}{ViT-B/16} \\
\cline{2-9}
& Linear & k-NN
& Linear & k-NN
& Linear & k-NN
& Linear & k-NN \\
\hline
infoNCE
& \synres{53.68}{0.69} & \synres{50.35}{0.44}
& \synres{50.90}{0.96} & \synres{48.34}{0.77}
& \synres{55.23}{1.18} & \synres{53.56}{0.92}
& \synres{51.45}{0.46} & \synres{49.88}{0.48} \\
\hline
InvaSpread
& \synres{48.20}{0.92} & \synres{46.61}{0.86}
& \synres{46.14}{0.76} & \synres{42.51}{1.15}
& \synres{46.85}{1.08} & \synres{43.51}{0.62}
& \synres{43.18}{0.52} & \synres{39.66}{1.15} \\
\hline
Standard OT
& \synres{50.45}{0.97} & \synres{46.68}{0.88}
& \synres{51.78}{0.47} & \synres{48.75}{1.12}
& \synres{55.29}{0.70} & \synres{52.92}{1.15}
& \synres{53.12}{0.43} & \synres{51.44}{0.79} \\
\hline
Neg-IOT-CL-PushPull (Ours)
& \synres{50.88}{0.47} & \synres{47.77}{0.66}
& \synres{51.70}{1.18} & \synres{48.56}{0.96}
& \synres{55.37}{0.86} & \synres{52.17}{0.65}
& \synres{54.35}{1.02} & \synres{52.90}{1.06} \\
\hline
Neg-MMIOT-CL (Ours)
& \synbest{54.28}{1.00} & \synbest{52.84}{1.14}
& \synbest{57.03}{1.06} & \synbest{58.61}{0.60}
& \synbest{60.27}{0.80} & \synbest{60.33}{0.96}
& \synbest{57.48}{0.47} & \synbest{56.92}{0.83} \\
\hline
\end{tabular}%
}
\caption{UCL results on Tiny-ImageNet with different backbones. }
\label{tab:ucl-tiny-imagenet}
\end{table}

\newpage 
\subsection{Ablation study} \label{sec:ablation}
\paragraph{Ablation Study with different hyperparameters}
\begin{table}[htp!]
\centering
\captionsetup{skip=10pt}
\renewcommand{\arraystretch}{1.4}
\setlength{\tabcolsep}{4pt}
\small

% Learning rate
\begin{minipage}[t]{0.29\linewidth}
\vspace{0pt}
\centering
\resizebox{\linewidth}{!}{%
\begin{tabular}{|c|c|c|}
\hline
\textbf{Learning Rate} & \textbf{Linear} & \textbf{k-NN} \\
\hline
$5 \times 10^{-4}$ & 79.52 & 79.40 \\
$1 \times 10^{-3}$ & 82.00 & 81.92 \\
$5 \times 10^{-3}$ & 82.15 & 82.66 \\
0.01 & 80.00 & 79.85 \\
0.03 & 77.70 & 77.81 \\
0.1  & 75.95 & 75.04 \\
\hline
\end{tabular}%
}
\end{minipage}\hfill
%
% Sinkhorn iterations
\begin{minipage}[t]{0.225\linewidth}
\vspace{0pt}
\centering
\resizebox{\linewidth}{!}{%
\begin{tabular}{|c|c|c|}
\hline
\textbf{S-Iter.} & \textbf{Lin.} & \textbf{k-NN} \\
\hline
1  & 81.81 & 83.59 \\
2  & 82.14 & 83.08 \\
5  & 82.28 & 83.53 \\
10 & 82.15 & 82.66 \\
\hline
\end{tabular}%
}
\end{minipage}\hfill
%
% Tau
\begin{minipage}[t]{0.225\linewidth}
\vspace{0pt}
\centering
\resizebox{\linewidth}{!}{%
\begin{tabular}{|c|c|c|}
\hline
\boldmath$\tau$ & \textbf{Lin.} & \textbf{k-NN} \\
\hline
0.01 & 88.94 & 89.49 \\
0.1  & 82.15 & 82.66 \\
1    & 82.22 & 77.61 \\
10   & 70.62 & 69.62 \\
\hline
\end{tabular}%
}
\end{minipage}\hfill
%
% Epsilon
\begin{minipage}[t]{0.225\linewidth}
\vspace{0pt}
\centering
\resizebox{\linewidth}{!}{%
\begin{tabular}{|c|c|c|}
\hline
\boldmath$\epsilon$ & \textbf{Lin.} & \textbf{k-NN} \\
\hline
0.01 & 60.78 & 60.14 \\
0.1  & 82.15 & 82.66 \\
0.5  & 79.52 & 79.52 \\
1    & 76.36 & 76.18 \\
10   & 64.10 & 64.48 \\
\hline
\end{tabular}%
}
\end{minipage}

\normalsize
\caption{Performance on CIFAR-10 of Neg-IOT-CL-PushPull under different hyperparameter settings using ResNet-18 as the backbone in SCL.}
\label{tab:hyperparameter_results}
\end{table}
We conducted an ablation study to evaluate the impact of the learning rate, the number of Sinkhorn iterations, the temperature parameter $\tau$, and entropic regularization weight $\epsilon$. As shown in \Cref{tab:hyperparameter_results}, Neg-IOT-CL-PushPull performs consistently across a wide range of hyperparameter settings. For the entropic regularization parameter, the table shows that the downstream performance varies with the choice of $\epsilon$, as is commonly observed in OT-based methods. Importantly, the result is still good within a moderate neighborhood of $\epsilon$ (here, from 0.1 to 1), whereas performance degrades when $\epsilon$ becomes either excessively small or excessively large. Similar sensitivity patterns have also been reported in prior OT literature \cite{tai2021sinkhorn, liu2023bilaterally}. We additionally evaluate the standard IOT-CL formulation under the same values of $\epsilon$ and observe a qualitatively similar trend. These results suggest that sensitivity to $\epsilon$ is not specific to our method, but is an inherent characteristic of entropically regularized OT objectives.

\paragraph{Efficiency}
\begin{figure}[htp!]
    \centering
    \subfloat{\includegraphics[width=0.5\linewidth]{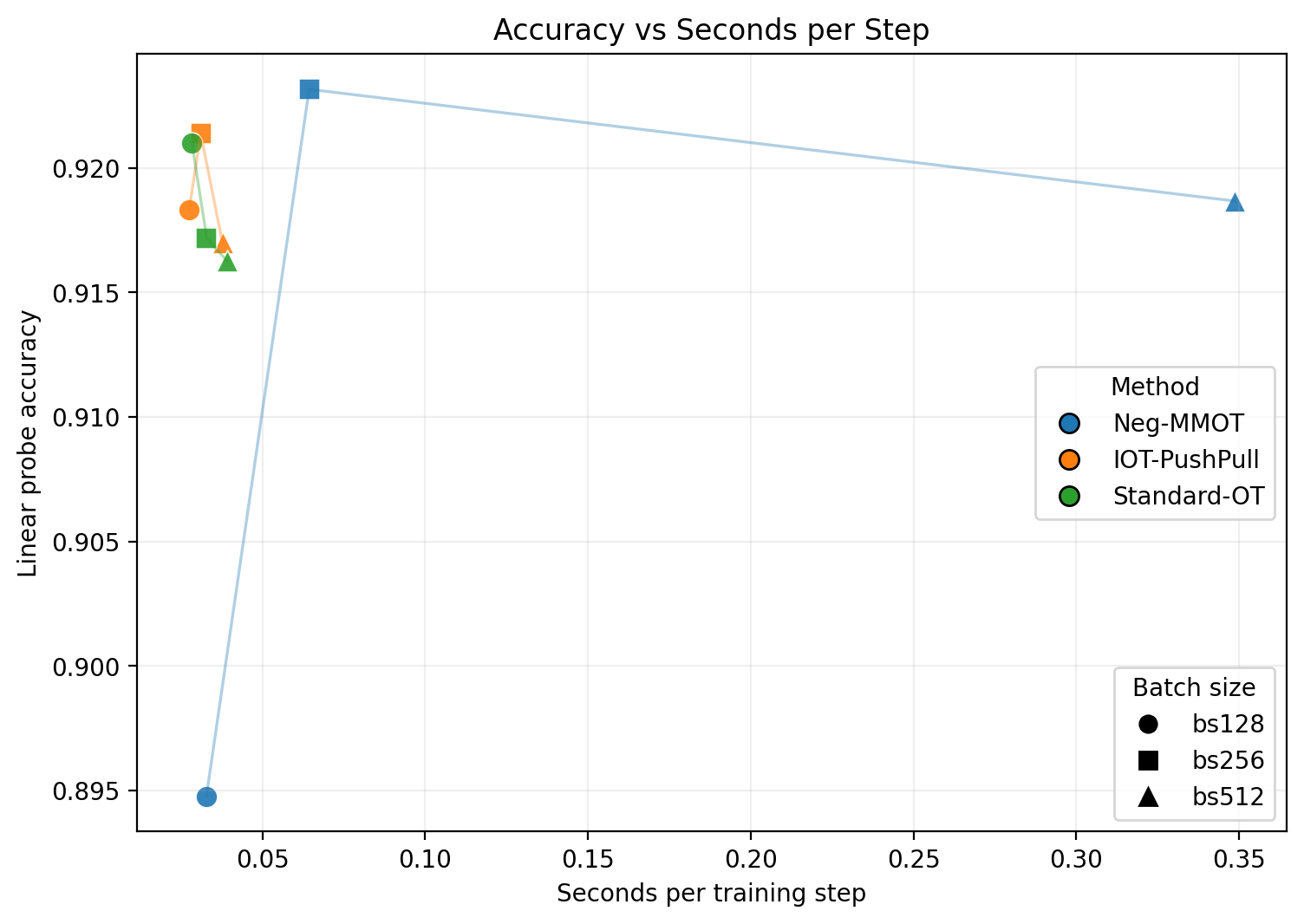}\label{pareto}}
    \hfill
    \subfloat{\includegraphics[width=0.5\linewidth]{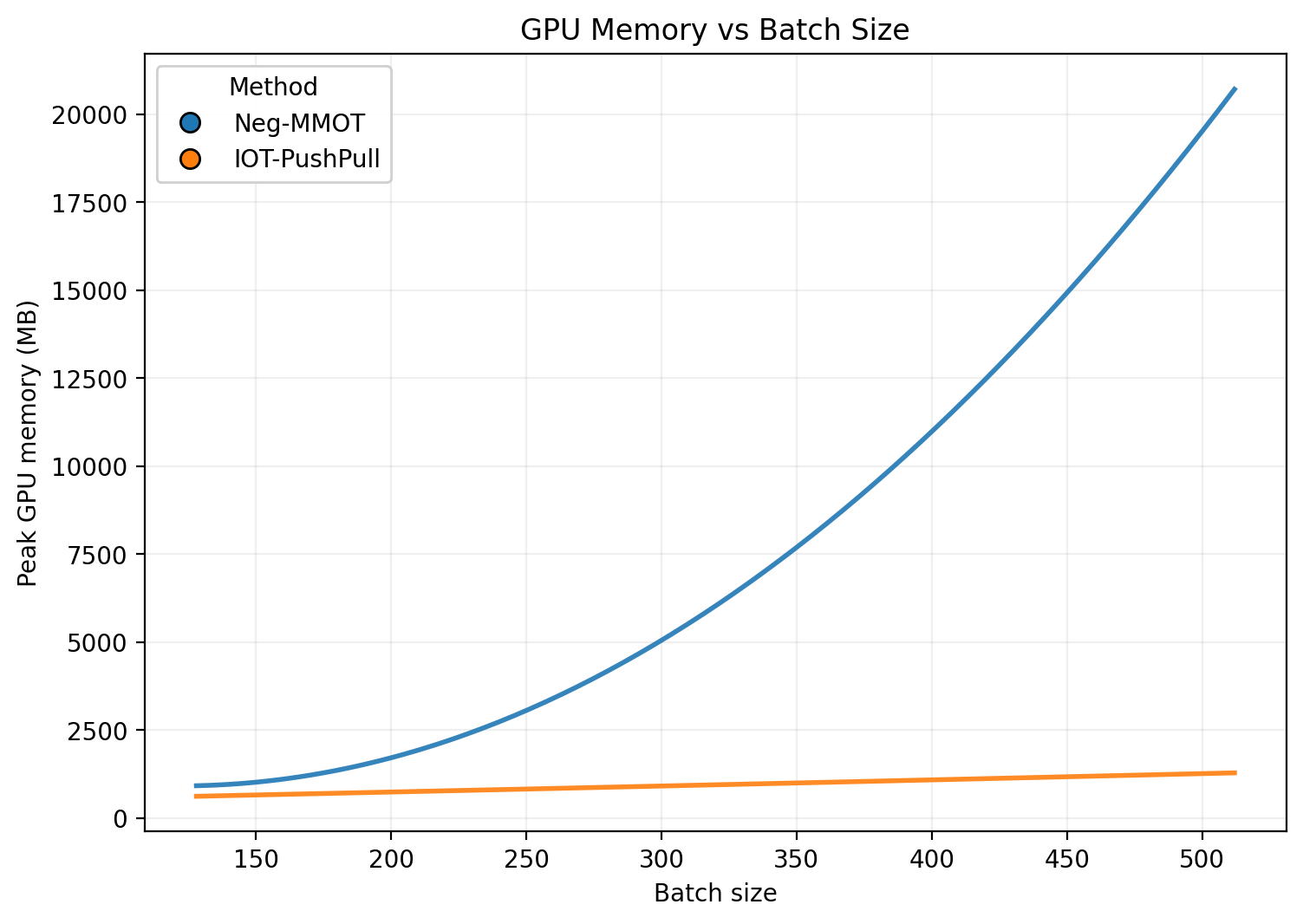}\label{gpu_trade}}
    \caption{Efficiency and scaling comparison of OT-based objectives. \textbf{Left}: linear-probe accuracy versus wall-clock seconds per training step for different batch sizes. Points closer to the upper-left indicate a better efficiency–performance tradeoff. \textbf{Right}: peak GPU memory as a function of batch size. Neg-IOT-CL-PushPull stays close to Standard OT in runtime while avoiding the steep runtime and memory growth of Neg-MMIOT-CL}
    \label{fig:efficiency_trade_off}
\end{figure}
The efficiency results in \Cref{fig:efficiency_trade_off} match the algorithmic motivation of Neg-IOT-CL-PushPull at the beginning of Section 3.2. Neg-MMIOT-CL can still be attractive when one wants the formulation with the clearest theoretical connection to the Neural Collapse analysis, but Neg-IOT-CL-PushPull is the better choice when scaling to larger batches or multimodal training budgets.

We also report the training time, measured as the average time in seconds per epoch, for the two largest datasets, CIFAR-100 and Tiny-ImageNet, as shown below.

\begin{table*}[htp!]
\centering
\label{tab:training_time_scl}
\resizebox{\textwidth}{!}{
\begin{tabular}{lcccccccc}
\toprule
& \multicolumn{4}{c}{\textbf{CIFAR-100}} 
& \multicolumn{4}{c}{\textbf{Tiny-ImageNet}} \\
\cmidrule(lr){2-5} \cmidrule(lr){6-9}
\textbf{SCL}
& \textbf{ResNet18} 
& \textbf{ResNet34} 
& \textbf{ResNet50} 
& \textbf{ViT-B/16}
& \textbf{ResNet18} 
& \textbf{ResNet34} 
& \textbf{ResNet50} 
& \textbf{ViT-B/16} \\
\midrule
InfoNCE     & 3.22 & 4.40 & 5.66 & 13.23 & 6.44 & 9.37  & 11.08 & 27.68 \\
Standard OT & 3.40 & 4.86 & 6.46 & 13.59 & 6.92 & 9.35  & 11.56 & 28.39 \\
InvaSpread  & 3.16 & 4.48 & 5.85 & 13.02 & 6.87 & 9.49  & 11.81 & 28.20 \\
OT-PushPull & 3.69 & 5.62 & 7.04 & 13.87 & 7.69 & 10.19 & 12.73 & 29.64 \\
MMIOT       & 6.01 & 7.04 & 9.64 & 16.71 & 10.25 & 13.39 & 16.05 & 35.19 \\
\bottomrule
\end{tabular}
}
\caption{Average training time (seconds per epoch) for SCL methods on CIFAR-100 and Tiny-ImageNet.}
\end{table*}

\begin{table*}[htp!]
\centering
\label{tab:training_time_ucl}
\resizebox{\textwidth}{!}{
\begin{tabular}{lcccccccc}
\toprule
& \multicolumn{4}{c}{\textbf{CIFAR-100}} 
& \multicolumn{4}{c}{\textbf{Tiny-ImageNet}} \\
\cmidrule(lr){2-5} \cmidrule(lr){6-9}
\textbf{UCL}
& \textbf{ResNet18} 
& \textbf{ResNet34} 
& \textbf{ResNet50} 
& \textbf{ViT-B/16}
& \textbf{ResNet18} 
& \textbf{ResNet34} 
& \textbf{ResNet50} 
& \textbf{ViT-B/16} \\
\midrule
InfoNCE     & 71.76 & 74.05 & 75.55 & 86.44  & 244.79 & 250.85 & 277.16 & 288.26 \\
Standard OT & 72.38 & 76.26 & 76.37 & 86.80  & 231.37 & 282.39 & 351.46 & 286.67 \\
InvaSpread  & 75.21 & 80.41 & 83.78 & 87.61  & 257.83 & 324.82 & 360.12 & 287.85 \\
OT-PushPull & 73.54 & 76.28 & 77.34 & 87.89  & 294.22 & 283.76 & 351.88 & 288.30 \\
MMIOT       & 79.28 & 82.51 & 82.30 & 100.57 & 273.87 & 294.03 & 376.97 & 300.93 \\
\bottomrule
\end{tabular}
}
\caption{Average training time (seconds per epoch) for UCL methods on CIFAR-100 and Tiny-ImageNet.}
\end{table*}

Despite its higher theoretical computational cost, MMIOT is only moderately  slower in practice. Moreover, pretraining is performed only once, while the resulting representations can be reused across multiple downstream tasks. Therefore, selecting a more effective configuration during pretraining may reduce the overall computational cost of subsequent training, making this trade-off worthwhile.

\end{document}